\documentclass{article}

\PassOptionsToPackage{numbers,compress}{natbib} 

\usepackage[main , preprint]{neurips_2026}

\usepackage[utf8]{inputenc} 
\usepackage[T1]{fontenc}    
\usepackage{hyperref}       
\usepackage{url}            
\usepackage{booktabs}       
\usepackage{amsfonts}       
\usepackage{nicefrac}       
\usepackage{microtype}      
\usepackage{xcolor}         

\usepackage{amsmath}
\usepackage{amssymb}
\usepackage{mathtools}
\usepackage{amsthm}
\usepackage{bm}
\usepackage{enumitem}

\usepackage{tikz}
\usetikzlibrary{positioning, arrows.meta, fit, backgrounds, calc, shadows, decorations.pathreplacing}

\usepackage{blkarray}
\usepackage{caption}
\usepackage{multirow}

\usepackage{float} 

\theoremstyle{plain}
\newtheorem{theorem}{Theorem}[section]
\newtheorem{proposition}[theorem]{Proposition}
\newtheorem{lemma}[theorem]{Lemma}

\theoremstyle{definition}
\newtheorem{definition}[theorem]{Definition}
\newtheorem{assumption}[theorem]{Assumption}
\theoremstyle{remark}
\newtheorem{remark}[theorem]{Remark}

\definecolor{mydarkblue}{rgb}{0, 0.08, 0.45}
\hypersetup{
    colorlinks=true,
    citecolor=mydarkblue,
    linkcolor=mydarkblue,
    urlcolor=mydarkblue
}

\usepackage[most]{tcolorbox}
\definecolor{acadblue}{RGB}{235, 240, 250}
\tcbset{
    blue_style/.style={
        colback=acadblue,       
        colframe=black,         
        boxrule=0.5pt,          
        arc=1pt,                
        auto outer arc,
        left=10pt, right=10pt, top=5pt, bottom=5pt, 
        enhanced,
        before skip=10pt,       
        after skip=10pt         
    }
}

\definecolor{acadred}{RGB}{250, 235, 235}
\tcbset{
    red_style/.style={
        colback=acadred,       
        colframe=black,         
        boxrule=0.5pt,          
        arc=1pt,                
        auto outer arc,
        left=10pt, right=10pt, top=5pt, bottom=5pt, 
        enhanced,
        before skip=10pt,       
        after skip=10pt         
    }
}

\definecolor{deepshap}{HTML}{B5338A}
\definecolor{kernelshap}{HTML}{4961C0}
\definecolor{treehfd}{HTML}{001AFF}
\definecolor{treeshap}{HTML}{FF5500}
\definecolor{ebmcolor}{HTML}{0F0986}
\definecolor{namcolor}{HTML}{A00B15}

\title{Generalized Functional ANOVA in Closed-Form: \\ A Unified View of Additive Explanations} 

\author{%
  Baptiste Ferrere\thanks{Correspondence to: \href{baptiste.ferrere@edf.fr}{Baptiste Ferrere <\texttt{baptiste.ferrere@edf.fr}>}} \\
  EDF R\&D, SINCLAIR Lab \\
  Université de Toulouse \\
  \And
   Nicolas Bousquet\thanks{Equal supervising.} \\
   EDF R\&D, SINCLAIR Lab \\
   Sorbonne Université \\
   \And
   Fabrice Gamboa\footnotemark[2] \\
   Université de Toulouse, ANITI \\
   Universidad Medellin\\
   \And
   Jean-Michel Loubes\footnotemark[2] \\
   Université de Toulouse, ANITI \\
   INRIA Regalia \\
}

\begin{document}

\maketitle

\begin{abstract}
The functional ANOVA, or Hoeffding decomposition, provides a principled framework for interpretability by decomposing a model prediction into main effects and higher-order interactions. For independent inputs, this classical decomposition is explicit. It is closely connected to SHAP values, generalized additive models, and orthogonal polynomial expansions, and therefore constitutes a fundamental tool for additive explainability. In the more general and realistic dependent setting, however, obtaining a tractable representation and estimating the decomposition from data remain challenging. In this work, we address this problem for continuous inputs. By combining Hilbert space methods with the generalized functional ANOVA, we build an explicit decomposition basis allowing to easily compute the decomposition. Our formulation recovers the classical independent case and its associated orthogonal decomposition. Building on this representation, we propose a simple but mighty algorithm to estimate the decomposition from a data sample in a model-agnostic setting and we compare it empirically with several state-of-the-art explanation methods, demonstrating the power of the approach.
\end{abstract}

\section{Introduction}\label{section:intro}

Modern machine-learning models perform thousands of nonlinear operations to produce their prediction, making the mapping between \emph{inputs} and \emph{outputs} often opaque to human inspection. This lack of interpretability has motivated several overlapping research areas: eXplainable AI (XAI), interpretability and trustworthy machine learning, which share the common goal of designing methods to explain the predictions of machine learning models. Existing approaches for XAI are typically grouped into three families: model-agnostic explanations, model-specific explanations, and inherently interpretable models. The first two are \emph{post-hoc} methods, which aim to explain the predictions of an already trained model, most often a tree ensemble or a neural network in the tabular setting, from its inputs and outputs. Model-agnostic methods assume only access to the input--output behavior of the model, treating it as a \emph{black box}, whereas model-specific methods exploit knowledge of its internal structure (e.g., trees, layers, or gradients). The third family constrains the model class itself so that predictions admit a direct interpretation by design. Despite their popularity, many of these approaches face two recurring criticisms. First, they often rely on \emph{heuristic} constructions rather than rigorous representation theorems, making it difficult to characterize what the resulting explanations actually quantify. Second, they are frequently computationally expensive, and in some cases formally intractable.

However, these constructions are in fact closely connected to functional decompositions of the predictor. Highlighting this connection has two important consequences. First, it places additive explanations on rigorous footing: rather than isolated heuristics justified only by desirable axioms, they can be recognized as components of a well defined mathematical object, inheriting its structural properties. Second, it yields a unifying perspective in which many of the methods that pervade the explainability literature can be recast, compared, and analyzed on common ground. While variable importance captures only partial information about the input--output relationship, functional decompositions go further by expressing the predictor as a sum of interpretable components of increasing complexity, providing both feature importance and interaction. Several such decompositions exist, and different choices may lead to distinct interpretations of the same predictor. In this work, we carefully study the generalized functional ANOVA decomposition and its properties, which are particularly well-suited to additive explainability.

\paragraph{Outline of contributions.}
\begin{enumerate}[leftmargin=*, itemsep=0.2em]
\item We show that the generalized functional ANOVA decomposition \cite{hooker_2007} admits an explicit closed-form solution for continuous bounded features. As a direct consequence, the underlying non-parametric optimization problem reduces to a least-squares regression. Our formulation naturally recovers the well-known independent setting and the connections with Shapley values and orthogonal polynomials.

\item We derive an elementary estimation method that operates from a data sample and requires no access to the model’s internal structure. The resulting algorithm is purely \textcolor{mydarkblue}{\textbf{model-agnostic}}, runs in \textcolor{mydarkblue}{\textbf{polynomial time}}, and exploits the \textcolor{mydarkblue}{\textbf{sparsity}} of the decomposition to remain tractable in practice. Fig \ref{fig:global-scheme} illustrates its \emph{architecture} and Tab \ref{table:time} shows its computational performances.

\item We conduct experiments on both synthetic and real-world datasets, comparing our estimator to several explanation methods and interpretable models. Our results clearly show that these methods are closely connected to specific components of the functional ANOVA decomposition, reinforcing the view that our framework unifies post-hoc explanation methods (see Figs \ref{fig:CH_tree_all},\ref{fig:BS_double},\ref{fig:CI_x1}) and interpretable models (see Fig \ref{fig:ebm_nam_pima}).
\end{enumerate}

\begin{figure}[t]
  \centering
  \includegraphics[width=1.00\textwidth]{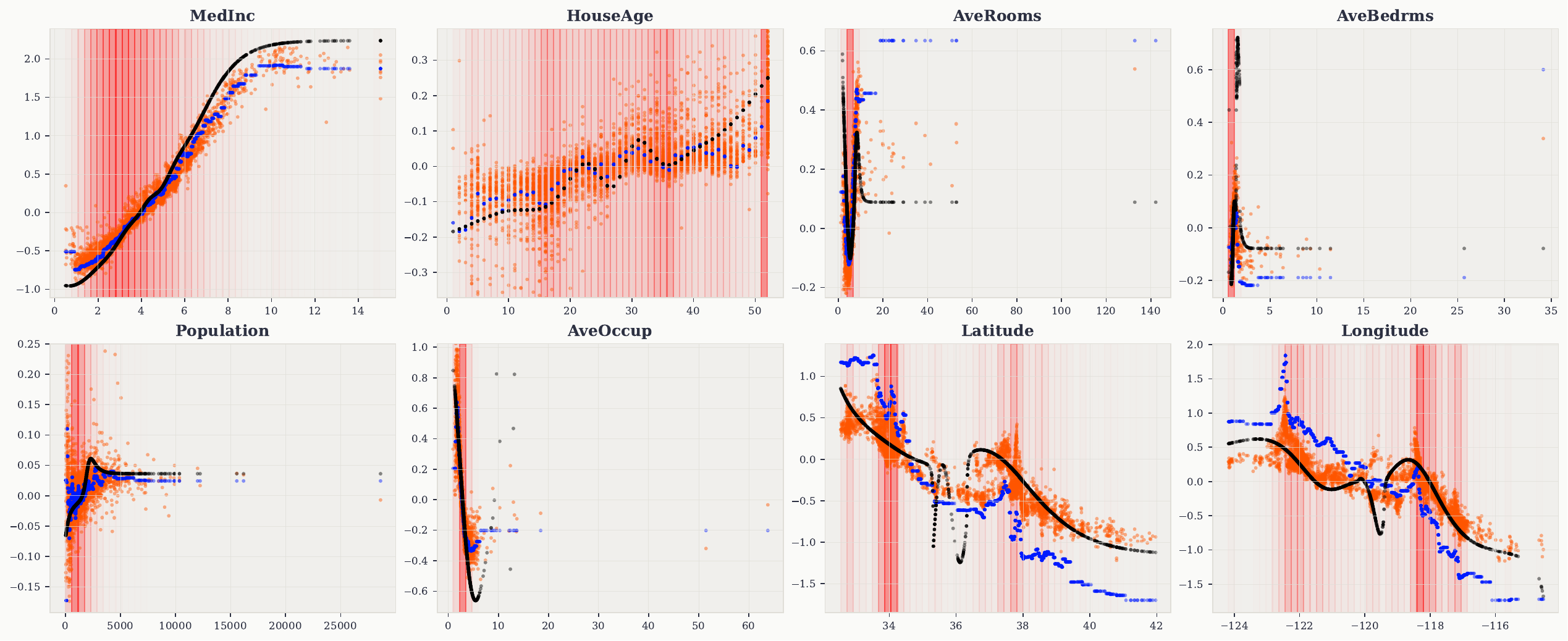}
  \caption{Estimated main effects on \emph{California Housing}: our method (black) vs \textcolor{treehfd}{TreeHFD (main effects)} and \textcolor{treeshap}{TreeSHAP} on a trained XGB.}
  \label{fig:CH_tree_all}
\end{figure}

\section{Related Work}\label{section:related}

In the tabular setting, additive explanations have become overwhelmingly dominant, largely due to the success of SHapley Additive exPlanations, which unified a wide range of attribution methods and popularized additive explainability at scale. Beyond pure feature attribution, Shapley-based methods have since drawn on ideas from generalized additive models to incorporate interactions and higher-order effects, producing richer decompositions of the predictor. In parallel, generalized additive models themselves have long been used to build inherently interpretable predictors, from classical formulations to recent neural network and tree ensembles. Underlying both lines of work is a classical statistical object: the functional ANOVA decomposition, which, under suitable conditions and even in the presence of dependence, yields a unique additive functional representation of the predictor.

\begin{figure}[t]
  \centering
  \includegraphics[width=0.32\textwidth]{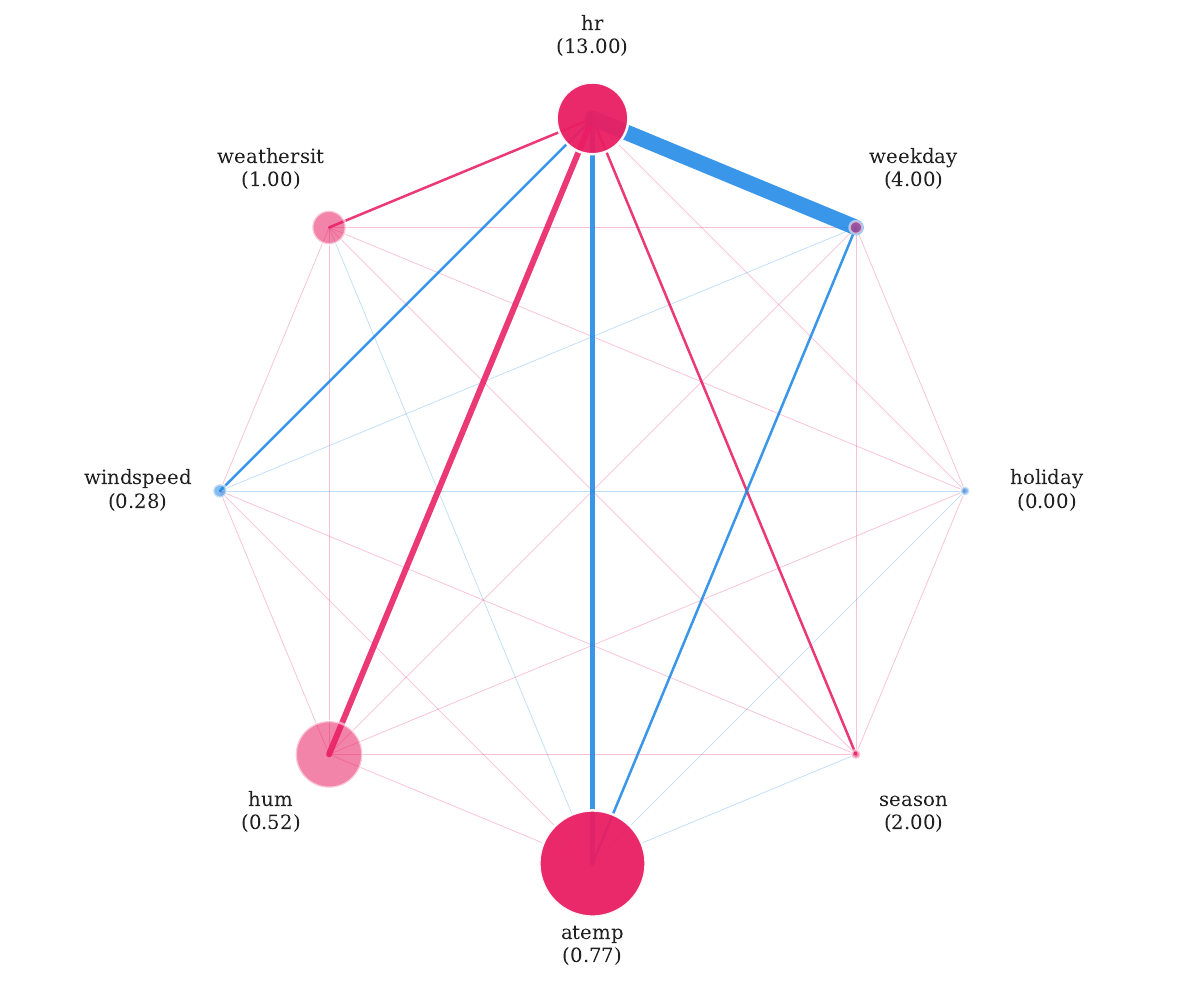}%
  \hfill
  \includegraphics[width=0.32\textwidth]{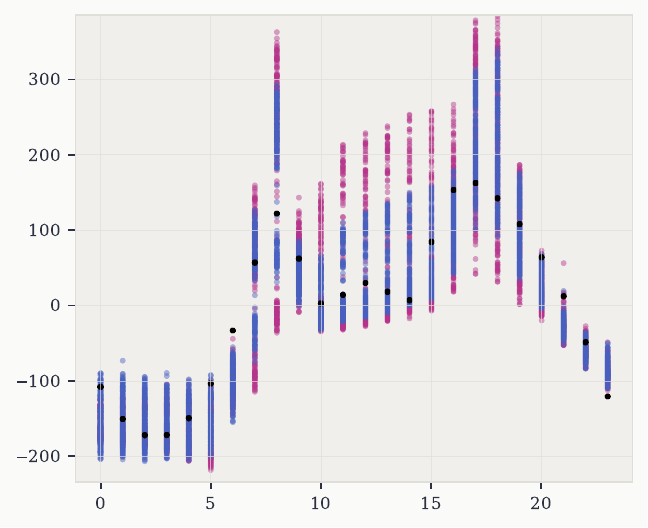}
  \hfill
  \includegraphics[width=0.32\textwidth]{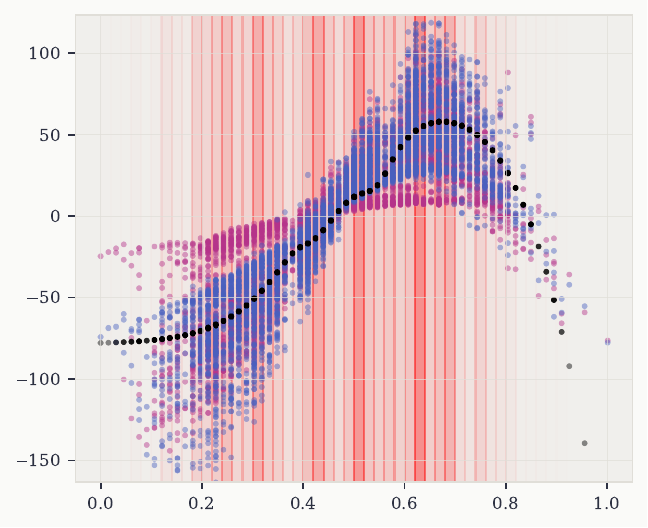}
  \caption{Decomposition of a trained MLP on \emph{Bike Sharing}.
           \emph{Left}: Network plot for a random instance of the dataset for visualizing local feature attribution and interaction.
           \emph{Middle \& Right}: For the features \texttt{hour} and \texttt{atemp} our method (black) vs \textcolor{kernelshap}{KernelSHAP} and \textcolor{deepshap}{DeepSHAP}.}
  \label{fig:BS_double}
\end{figure}

\paragraph{Shapley values.}
Originally introduced in cooperative game theory \cite{lipovetsky2001analysis}, Shapley values \cite{Shapley_1953} were later imported into global sensitivity analysis \cite{owen2014sobol,owen2017shapley} and brought into machine learning \cite{sundararajan2020many}, primarily through SHAP, which subsumes several post-hoc explanation methods such as LIME \cite{ribeiro2016should}, DeepLIFT \cite{shrikumar2017learning}, and Layer-wise Relevance Propagation \cite{bach2015pixel}. For tree ensembles, TreeSHAP \cite{lundberg2018consistent} computes SHAP values in polynomial time, making them the standard explanation tool for tree-based models \cite{amoukou2022accurate,muschalik2024beyond,benard2025tree}. Beyond attribution, several works have connected Shapley values to functional decompositions of machine learning models \cite{bordt2023shapley,herren2022statistical,hiabu2023unifying,muschalik2024shapiq,benard2025tree}. However, all such approaches inherit both the estimation difficulties of Shapley values and the criticisms concerning their heuristic design and lack of theoretical justification \cite{kumar2020problems}.

\paragraph{Interpretable models.} Generalized Additive Models (GAMs) \cite{hastie2017generalized} are among the most established interpretable model classes, restricting the predictor to a sum of univariate functions of the input features. This additive structure ensures that each component can be visualized and interpreted independently. Several modern variants have been proposed to improve predictive accuracy while preserving interpretability: GA\textsuperscript{2}M \cite{lou2013accurate} extends GAMs with pairwise interactions, later scaled up in the Explainable Boosting Machine (EBM) \cite{nori2019interpretml} through cyclic gradient boosting; GAMI-Net \cite{yang2021gami} learns main effects and pairwise interactions through a structured neural architecture; Neural Additive Models (NAM) \cite{agarwal2021neural} and Neural Basis Models (NBM) \cite{radenovic2022neural} parameterize each component with an individual neural network; and NODE-GAM \cite{chang2021node} builds on neural oblivious decision ensembles \cite{popov2019neural} to combine the flexibility of deep learning with the additive structure of GAMs.

\paragraph{Functional ANOVA.} The functional ANOVA decomposition was first introduced by \cite{Hoeffding1948} for independent random variables, resulting in a unique orthogonal decomposition. \cite{stone1994use} studied its properties under mild regularity conditions and \cite{hooker_2007} generalized the framework to dependent inputs. \cite{chastaing_generalized_2012} and \cite{idrissi2025hoeffding} further extended the validity of the decomposition to broader dependence structures, while \cite{rahman2014generalized} derived a coupled formulation together with a constructive approximation strategy in this dependent setting. On the estimation side, \cite{lengerich2020purifying} proposed \emph{pure interaction effects} for tree ensembles but requires knowledge of the input density, while \cite{benard2025tree} introduced TreeHFD, an efficient algorithm that operates directly from data; however, both approaches are restricted to tree-based models. In a related direction, \cite{apley2020visualizing} proposed Accumulated Local Effects (ALE), a visualization method closely connected to the functional ANOVA framework, though it addresses a slightly different problem. Very recently, \cite{ferrere2026exact} proposed an explicit solution to the problem formulated by \cite{hooker_2007} using the \emph{inverse likelihood weighting} mechanism, but this work is restricted to categorical data.

\begin{figure}[t]
  \centering
  \includegraphics[width=0.48\textwidth]{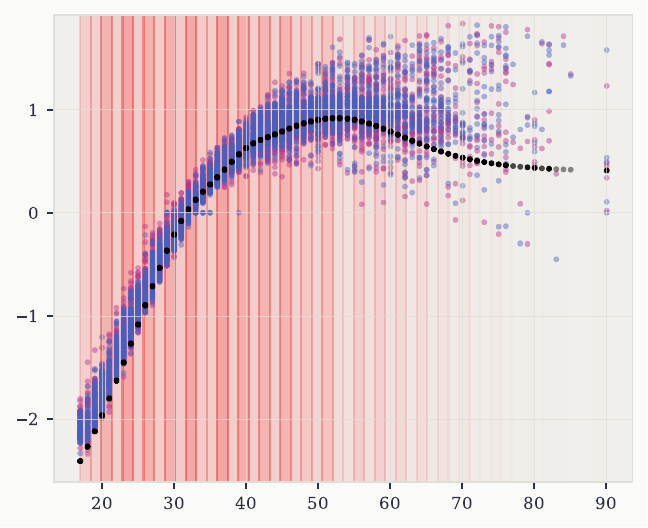}%
  \hfill
  \includegraphics[width=0.48\textwidth]{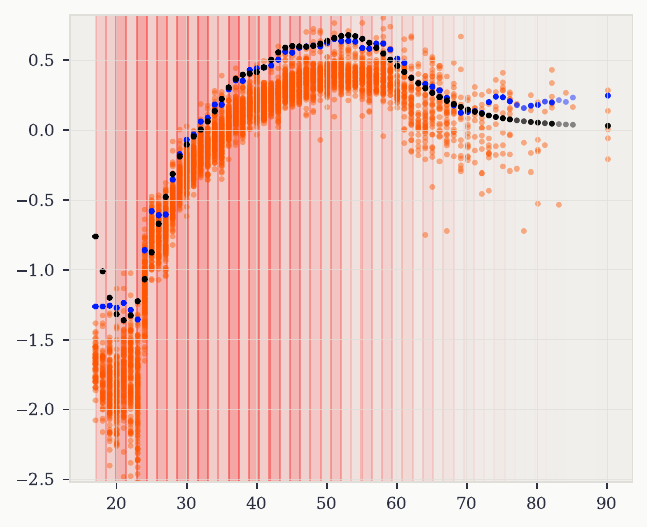}
  \caption{Estimated main effects for \texttt{Age} on \emph{Census Income}. \emph{Left}: Our method (black) vs \textcolor{kernelshap}{KernelSHAP} and \textcolor{deepshap}{DeepSHAP} on a trained MLP. \emph{Right}: Our method (black) vs \textcolor{treehfd}{TreeHFD (main effects)} and \textcolor{treeshap}{TreeSHAP} on a trained XGB.}
  \label{fig:CI_x1}
\end{figure}

\section{Background}\label{section:background}

The Functional ANOVA decomposition provides a mathematical framework for decomposing a real-valued square integrable function $\nu(\mathbf{X})$ into a sum of components of increasing order:
\begin{equation}
    \nu(\mathbf{X}) = \nu_\emptyset + \sum_{i=1}^{p} \nu_{i}(\mathbf{X}_i) + \sum_{i < j} \nu_{i,j}(\mathbf{X}_i, \mathbf{X}_j) + \cdots = \sum_{S \subseteq [p]} \nu_S(\mathbf{X}_S),
    \label{eq:fanova_decomposition}
\end{equation}
where $\mathbf{X} := (\mathbf X_1, \dots, \mathbf X_p)$, $[p] := \{1, \dots, p\}$ and each component $\nu_S( \mathbf X_S)$ depends only on the subset of variables $\mathbf{X}_S := (\mathbf X_i)_{i \in S}$. The constant term $\nu_\emptyset$ captures the overall mean effect, the functions $\nu_{i}$ represent the marginal effects, and higher-order terms $\nu_S$ with $|S| \geq 2$ encode the pure interaction effects among the variables indexed by $S$. In this paper, we assume that $\mathbf X$ has a joint density $f$ with respect to the Lebesgue measure $\lambda$.

\paragraph{Generalized functional ANOVA~\cite{hooker_2007}.}
For all $S \subseteq [p]$, let $L^2_S$ be the Hilbert space of square integrable functions of $\mathbf X_S$. One can define the components of \eqref{eq:fanova_decomposition} as the solution of the following optimization problem:
\begin{tcolorbox}[blue_style]
\begin{equation}
    \{ \nu_S \}_{ S \subseteq [p] } \coloneqq \operatornamewithlimits{argmin}_{\{g_S \in L^2_S \}_{S \subseteq [p]}} \int_{\mathbb R^p} \left( \sum_{S \subseteq [p]} g_S(\mathbf{x}_S) - \nu(\mathbf{x}) \right)^2 f(\mathbf{x}) \, d\mathbf{x},
    \label{eq:fanova_objective}
\end{equation}
subject to the \emph{hierarchical orthogonality constraint}:
\begin{equation}
    \forall T \subsetneq S \subseteq [p], \: \forall \, g_T \in L_T^2 , \:  \int_{\mathbb R^p} \nu_S(\mathbf{x}_S) \, g_T(\mathbf{x}_T) \, f(\mathbf{x}) \, d\mathbf{x} = 0.
    \label{eq:fanova_orthogonality}
\end{equation}
\end{tcolorbox}
Conceptually, decomposition~\eqref{eq:fanova_decomposition} is sought under the constraint that information is non-redundant: for any subset $T \subseteq [p]$, the contribution of $\nu_S$ for $T \subsetneq S \subseteq [p]$ must capture only the variance that cannot be explained by any function of a strict subset of its variables $\mathbf{X}_T$. Roughly speaking, each component $\nu_S$ is \emph{purely an $|S|$-th order effect}, free of any lower-order information.

\begin{figure}[t]
  \centering
  \includegraphics[width=1.00\textwidth]{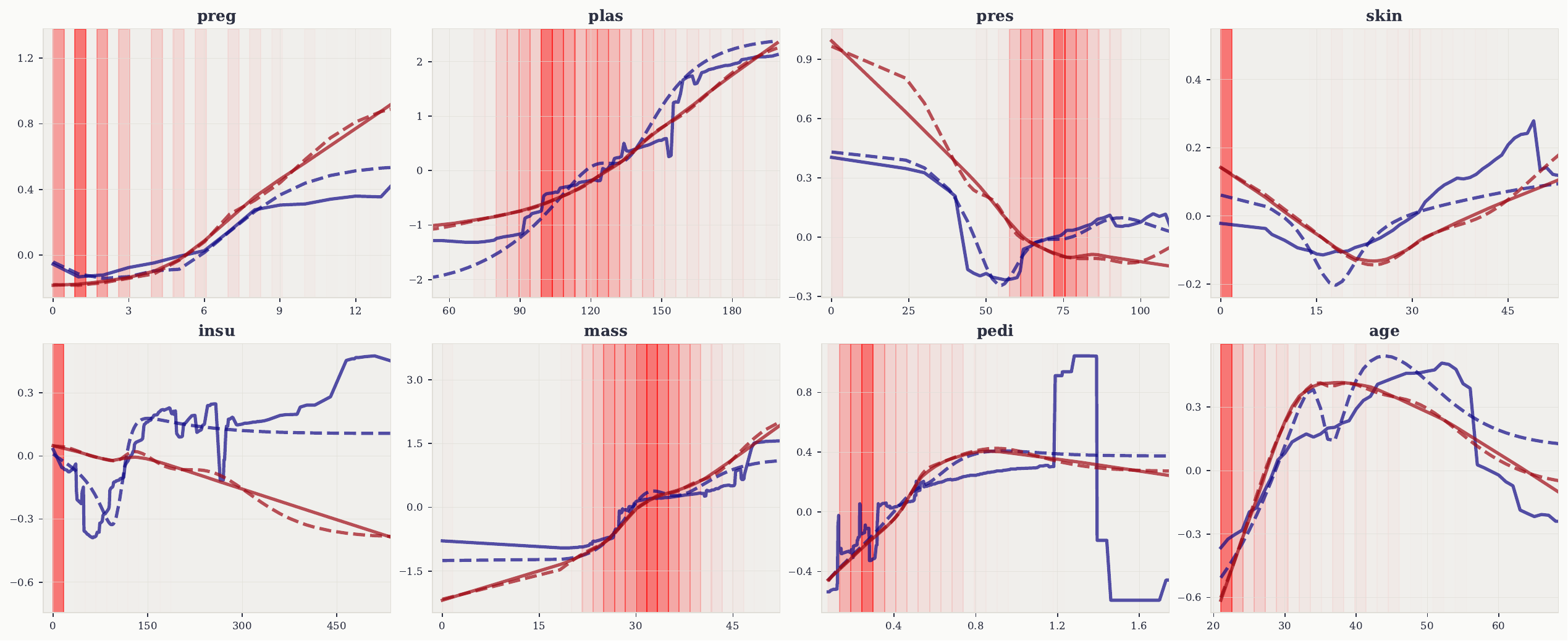}
  \caption{Comparison of native main effects from an EBM and a NAM
           with those recovered by our method on
           \emph{Diabetes}.
           \textcolor{ebmcolor}{EBM} (solid) vs
           \textcolor{ebmcolor}{our method} (dashed);
           \textcolor{namcolor}{NAM} (solid) vs
           \textcolor{namcolor}{our method} (dashed).}
  \label{fig:ebm_nam_pima}
\end{figure}

Under mild non-degeneracy conditions on the joint distribution \cite{chastaing_generalized_2012, idrissi2025hoeffding}, the optimization problem~\eqref{eq:fanova_objective} under the constraint \eqref{eq:fanova_orthogonality} admits a unique solution, and the resulting decomposition generally allocates less variance to higher-order components. Throughout this work, we assume that the support of $\mathbf{X}$ is a hyperrectangle and impose the following regularity condition on the density \cite{stone1994use}:

\begin{assumption}\label{assu:densite}
Without loss of generality, we assume that $\mathbf{X}$ is supported on $[-1,1]^p$ and that its density $f$ is bounded away from zero and infinity: there exist constants $0 < c_1 < c_2$ such that:
\begin{equation}
    \forall\, \mathbf{x} \in [-1,1]^p, \: c_1 \leq f(\mathbf{x}) \leq c_2 < \infty.
\end{equation}
Throughout the paper, we denote by $L^2 := L^2([-1,1]^p, \mu_f)$ the Hilbert space of square-integrable functions on $[-1,1]^p$ with respect to the measure $\mu_f \coloneqq f \lambda$, endowed with the inner product $\langle g, h \rangle := \mathbb{E}[g(\mathbf{X})\, h(\mathbf{X})] = \int g( \mathbf x ) h( \mathbf x ) f( \mathbf x ) d \mathbf x$.
\end{assumption}

\begin{theorem}[\cite{stone1994use,hooker_2007}]
For any function $\nu$ member of $L^2$, there exists a unique collection of functions $\{ \nu_S \}_{S \subseteq [p]}$ such that $ \nu(\mathbf{X}) = \sum_{S \subseteq [p]} \nu_S(\mathbf{X}_S) $ under the hierarchical orthogonality constraint (\ref{eq:fanova_orthogonality}).
\end{theorem}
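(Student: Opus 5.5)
The plan is a Hilbert-space argument built on the subspaces
\[
H_S := \{ g \in L^2_S : \langle g, h\rangle = 0 \ \text{for all } h \in L^2_T,\ T \subsetneq S\},
\]
so that the hierarchical orthogonality constraint (\ref{eq:fanova_orthogonality}) says exactly that $\nu_S \in H_S$. The statement then reduces to the direct-sum identity
\[
L^2 = \bigoplus_{S \subseteq [p]} H_S ,
\]
in the algebraic sense: every element is a unique sum of components $h_S \in H_S$. The sum need not be orthogonal, since $H_S$ and $H_{S'}$ are not orthogonal when $S,S'$ are incomparable.

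\textbf{Step 1: closed subspaces.} Under Assumption~\ref{assu:densite}, $\mu_f$ and the Lebesgue measure on $[-1,1]^p$ have equivalent $L^2$ norms, with constants $c_1,c_2$. Hence each $L^2_S$, viewed as functions of $\mathbf x$ depending only on $\mathbf x_S$, is a closed subspace of $L^2$. The same bounds hold for marginals: $f_S$ lies between $2^{p-|S|}c_1$ and $2^{p-|S|}c_2$.

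\textbf{Step 2: uniqueness.} Suppose $\sum_S h_S = 0$ with $h_S\in H_S$. I would argue by contradiction on a maximal $S$ with $h_S \neq 0$. The natural first move is to take the inner product with $h_S$, which gives
\[
\|h_S\|^2 = -\sum_{S'\neq S}\langle h_{S'},h_S\rangle .
\]
Terms with $S' \subsetneq S$ vanish by the constraint. Terms with $S'\not\subseteq S$ do not vanish in general, so a finer argument is needed. Here I would use Stone's key inequality: there is a constant $\delta>0$, depending on $c_1,c_2,p$, such that
\[
\Big\|\sum_S h_S\Big\|^2 \ \ge\ \delta \sum_S \|h_S\|^2 \quad\text{for } h_S\in H_S .
\]
I would prove it by comparing with the product measure $\lambda/2^p$, where the $H^0_S$ (the independent-case Hoeffding spaces) are mutually orthogonal. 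The bound $c_1 \le f \le c_2$ should transfer an angle bound between the relevant subspaces. I would follow Stone's induction on $p$, as in \cite{stone1994use}, Lemma 3.1. This inequality immediately gives uniqueness.

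\textbf{Step 3: existence.} The same inequality shows that $\bigoplus_S H_S$ is closed: a sum of closed subspaces with a uniform lower frame bound is closed. It then remains to show density, or equivalently that $\bigoplus_S H_S \supseteq L^2_T$ for every $T$. I would do this by induction on $|T|$. Given $g\in L^2_T$, let $P$ be the orthogonal projection of $g$ onto the closed sum $\sum_{T'\subsetneq T} L^2_{T'}$; this sum is closed by the induction hypothesis together with the inequality. Then $g-P\in H_T$, and $P$ decomposes by induction. Since $L^2_{[p]}=L^2$, this yields existence for every $\nu$. One should also check that the components from the projection construction coincide with the minimizer of (\ref{eq:fanova_objective}) under the constraint. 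They do: the decomposition is exact, so the objective attains $0$, and by uniqueness it is the only feasible exact decomposition.

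\textbf{Main obstacle.} The main difficulty is the energy inequality in Step 2, i.e.\ quantifying the angle between hierarchically orthogonal subspaces under dependence. I would first try the direct comparison route: bound
\[
\Big\|\sum_S h_S\Big\|^2_{\mu_f} \ \ge\ c_1 2^p\, \Big\|\sum_S h_S\Big\|^2_{\lambda/2^p},
\]
then split each $h_S$ into its $H^0$-components under the product measure. This split must be controlled using the hierarchical constraint, and if it becomes too lossy I would fall back to Stone's inductive argument.
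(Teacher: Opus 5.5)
Your plan is correct. It is essentially the argument of \cite{stone1994use}: an energy (angle) inequality for the spaces $H_S$ gives uniqueness and closedness of $\sum_S H_S$, and existence then follows by successive orthogonal projections. The paper never proves this statement; it imports it. However, the machinery of Appendix~\ref{appendix:proofs} reaches existence by a different, constructive route. Instead of working with the abstract $H_S$, it exhibits explicit elements $\xi_S^{(\bm{m}_S)}=\psi_S^{(\bm{m}_S)}/f_S$ of $H_S$. Their hierarchical orthogonality (Proposition~\ref{prop:hierar_ortho}) is a one-line cancellation: the weight $1/f_S$ absorbs the marginal density, leaving a Lebesgue integral in which the Legendre factors indexed by $S\setminus T$ integrate to zero. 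Completeness comes from the operator $\mathcal T u=\sum_S f_S^{-1}\Pi_S u$. It is block-triangular with respect to inclusion and has coercive diagonal blocks, hence is a homeomorphism carrying the tensor Legendre basis onto the Riesz basis $\Xi$. The paper's route buys a closed-form basis in which the decomposition becomes a least-squares problem. Your route buys a basis-free argument and uniqueness over all of $H_S$. By contrast, the paper's coefficient uniqueness only concerns components in $\overline{\mathrm{span}}\{\xi_S^{(\bm{m}_S)}\}_{\bm{m}_S}$, and it inherits full uniqueness from the cited theorem. The two routes share the same triangular structure, and this is also how you can discharge your ``main obstacle'' without Stone's induction. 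Set $M_S:=\sum_{T\subsetneq S}L^2_T$. Under $\lambda$ one has $L^2_S=\bigoplus^{\perp}_{T\subseteq S}V_T$ and $M_S=\bigoplus^{\perp}_{T\subsetneq S}V_T$. So for $h_S\in H_S$, using $h_S\perp_{\mu_f}M_S$,
\[
\|\Pi_S h_S\|^2_{L^2(\lambda)}=\mathrm{dist}_{\lambda}(h_S,M_S)^2\;\ge\;c_2^{-1}\,\mathrm{dist}_{\mu_f}(h_S,M_S)^2=c_2^{-1}\|h_S\|^2_{L^2(\mu_f)}\;\ge\;\tfrac{c_1}{c_2}\,\|h_S\|^2_{L^2(\lambda)} .
\]
The $V_T$-component of $\sum_{S'}h_{S'}$ is $\Pi_T h_T+\sum_{S'\supsetneq T}\Pi_T h_{S'}$. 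A downward induction on $|T|$ therefore gives $\|\sum_S h_S\|^2\ge\delta\sum_S\|h_S\|^2$, with $\delta$ depending only on $c_1/c_2$ and $p$ (in either norm). This is exactly the inequality your Steps 2 and 3 need, and it mirrors the coercivity of the paper's diagonal blocks $A_{S,S}$.
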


\paragraph{Independent setting.}
If the input variables are assumed to be mutually independent, \textit{i.e.} the joint density $f$ is the product of the marginal densities, then the components $\nu_S( \mathbf X_S )$ are explicit and are given by the Möbius transform \cite{rota1964foundations} of the conditional expectations: 
\begin{equation}
    \forall S \subseteq [p], \: \nu_S( \mathbf X_S) = \sum\limits_{T \subseteq S} (-1)^{ \vert S \vert - \vert T \vert } \mathbb{E}\left[ \nu(\mathbf X) \mid \mathbf X_T \right].
\end{equation}
In this particular setting, constraint \eqref{eq:fanova_orthogonality} implies the mutual orthogonality of the decomposition:
\begin{equation}
    \forall S,T \subseteq [p], \: S \neq T \implies \mathbb{E}[\nu_S(
\mathbf{X}_S) \, \nu_T(\mathbf{X}_T)] = 0,
\end{equation}
recovering the well-known orthogonal variance decomposition.

\begin{remark}[Shapley values]
A direct connection between the functional ANOVA decomposition and the Shapley formulation arises via Harsanyi dividends \cite{harsanyi_1963}. Let $w$ be a set function on $[p]$, and define the associated value function $v$ by $v(S) := \sum_{T \subseteq S} w(T)$ for all $S \subseteq [p]$. The Shapley value of player $i$ in the cooperative game $(p, v)$ then admits the closed-form expression $\phi_i = \sum_{S \ni i} \frac{w(S)}{|S|}$. Building on this identity, one can define, for any input $\mathbf{x}$, the \emph{ANOVA based Shapley value} as the following local additive attribution: $\phi^{\mathrm{ANOVA}}_i(\mathbf{x}) \coloneqq \sum_{S \ni i} \frac{\nu_S(\mathbf{x}_S)}{|S|}$. Obviously, when the input features are mutually independent, this formulation coincides exactly with the standard SHAP values.
\end{remark}

\section{Main Result}\label{section:main}

In this section, we provide a \textbf{Riesz basis} \cite{brezis2011functional} for $L^2$ satisfying the hierarchical orthogonality condition \eqref{eq:fanova_orthogonality} and we will henceforth use \emph{basis} to mean \emph{Riesz basis}. Consequently, solving the optimization problem defined by \cite{hooker_2007} becomes a least squares regression problem. Finally, we provide an elementary algorithm to compute this decomposition from a data sample.

\subsection{Decomposition basis}

Throughout the paper, $\mathbb{N}_+$ denotes the set of positive integers. For each $m \in \mathbb{N}_+$, let $\widetilde{P}_m$ denote the normalized Legendre polynomial of degree $m$, whose definition and properties are recalled in Appendix \ref{appendix:legendre}. In the next definition, we generalize the \emph{inverse likelihood weighting mechanism} \cite{ferrere2026exact} to continuous random variables. This mechanism will be the key tool to obtain hierarchical orthogonality. First, let $\xi_{\emptyset} \coloneqq 1$.

\begin{figure}[t]
  \centering
  \includegraphics[width=1.00\textwidth]{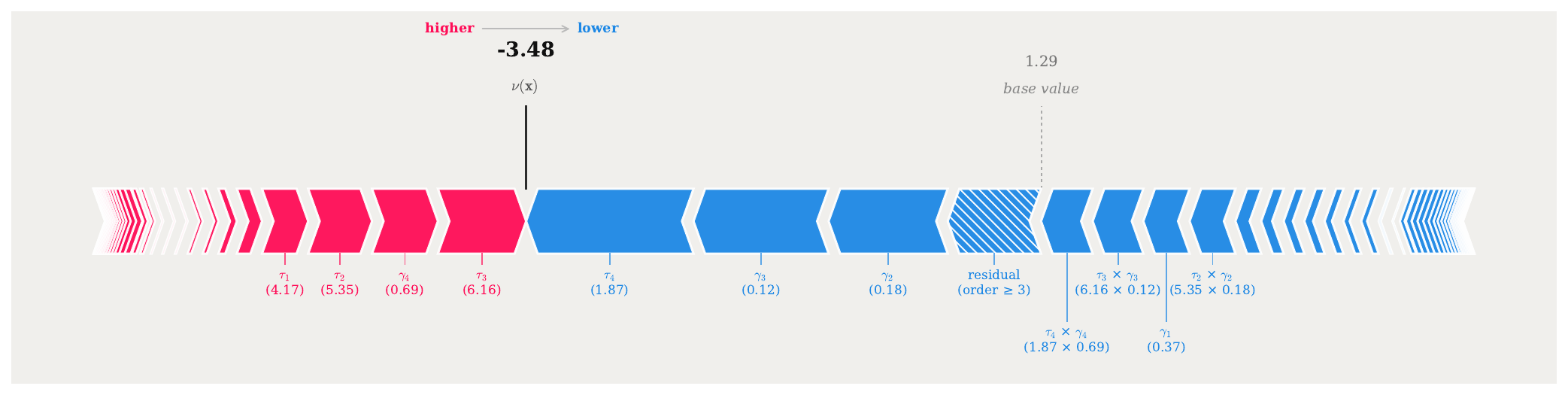}
  \caption{Force plot of \textcolor[HTML]{FF0D57}{positive} and \textcolor[HTML]{1E88E5}{negative} contributions (main effects, pair effects and residual) for a random instance of \emph{Electrical Grid}.}
  \label{fig:force_plot_eg}
\end{figure}

\begin{tcolorbox}[blue_style]
\begin{definition}
For $S \subseteq [p]$, let denote $f_S$ the marginal density of $\mathbf X_S$ and define $\bm{m}_S := (m_j)_{j \in S} \in \mathbb{N}_+^{\vert S \vert}$. For $\mathbf x \in [-1,1]^p$ we set
\begin{equation}
    \xi_S^{( \bm{m}_S )}(\mathbf x) := \frac{1}{ \sqrt{ 2^{p - \vert S \vert} } } \cdot \frac{ \prod\limits_{j \in S} \widetilde{P}_{m_j} ( \mathbf x_j ) }{f_S(\mathbf x_S)}
\end{equation}
\end{definition}
\end{tcolorbox}
We denote by $\Xi$ the family of functions given by $\Xi := \left( \left( \xi_S^{( \bm{m}_S )} \right)_{ \bm{m}_S \in \mathbb{N}_{+}^{ \vert S \vert } } \right)_{S \subseteq [p]}$.

\begin{tcolorbox}[blue_style]
\begin{theorem}\label{thm:representation}
    Let Assumption~\ref{assu:densite} hold. For any $\nu \in L^2$, there exists a unique family of coefficients $\bm{a} \coloneqq \big(a_S^{(\bm{m}_S)}\big)_{S \subseteq [p],\, \bm{m}_S \in \mathbb{N}_+^{|S|}}$ such that the unique components $(\nu_S)_{S \subseteq [p]}$ satisfying~\eqref{eq:fanova_objective}--\eqref{eq:fanova_orthogonality} admit the representation:
    \begin{equation}\label{eq:ALD_components}
        \nu_S(\mathbf{X}_S) = \sum_{\bm{m}_S \in \mathbb{N}_+^{|S|}} a_S^{(\bm{m}_S)} \, \xi_S^{(\bm{m}_S)}(\mathbf{X}), \qquad \forall\, S \subseteq [p].
    \end{equation}
    In particular, this provides the unique solution to the optimization problem formulated by \cite{hooker_2007} under Assumption~\ref{assu:densite}.
\end{theorem}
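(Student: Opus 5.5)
The plan has three steps. First, check that every element of $\Xi$ satisfies the hierarchical orthogonality constraint~\eqref{eq:fanova_orthogonality}. Second, identify, for each $S$, the closed span of $(\xi_S^{(\bm m_S)})_{\bm m_S}$ with the space of admissible $S$-components, and show the family is a Riesz basis of that space. Third, glue the subspaces over $S\subseteq[p]$ into a Riesz basis of all of $L^2$; the coefficients $\bm a$ are then the unique expansion coefficients of $\nu$.

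\emph{Step 1 (orthogonality).} Fix $T\subsetneq S$ and $g_T\in L^2_T$. Integrating out the coordinates outside $S$ turns $f$ into $f_S$, which cancels the denominator of $\xi_S^{(\bm m_S)}$. Up to the constant $2^{-(p-|S|)/2}$, this gives
\begin{equation*}
\mathbb E\bigl[\xi_S^{(\bm m_S)}(\mathbf X)\,g_T(\mathbf X_T)\bigr]\propto\int_{[-1,1]^{|S|}}\prod_{j\in S}\widetilde P_{m_j}(x_j)\,g_T(\mathbf x_T)\,d\mathbf x_S .
\end{equation*}
Pick $j\in S\setminus T$. By Fubini, the integral factors through $\int_{-1}^1\widetilde P_{m_j}(x_j)\,dx_j$, which is $0$ because $m_j\ge1$ and Legendre polynomials of positive degree are orthogonal to constants. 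Fubini is legitimate since $g_T$ is square integrable and the densities are bounded.

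\emph{Step 2 (each block is a Riesz basis of the pure-effect space).} Let $V_S\subset L^2_S$ be the set of $h$ satisfying~\eqref{eq:fanova_orthogonality}. The condition $\mathbb E[h\,g_T]=0$ for all $g_T$ is equivalent to $\int h(\mathbf x_S)f_S(\mathbf x_S)\,d\mathbf x_{S\setminus T}=0$ for a.e. $\mathbf x_T$. Taking $T=S\setminus\{j\}$ shows that $u:=hf_S$ has zero Lebesgue integral in each single coordinate $x_j$, $j\in S$. Conversely, these single-coordinate conditions imply the conditions for every $T\subsetneq S$. Hence
\begin{equation*}
V_S=\{u/f_S:\ u\in U_S\},
\end{equation*}
where $U_S\subset L^2([-1,1]^{|S|},\lambda)$ is the classical Lebesgue pure-interaction space. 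The products $\prod_{j\in S}\widetilde P_{m_j}$, $\bm m_S\in\mathbb N_+^{|S|}$, form an orthonormal basis of $U_S$, since tensor Legendre polynomials are an orthonormal basis of $L^2(\lambda)$ and $m_j\ge1$ removes exactly the non-pure directions. Assumption~\ref{assu:densite} gives $c_1 2^{p-|S|}\le f_S\le c_2 2^{p-|S|}$. Therefore $u\mapsto u/f_S$ (with the factor $2^{-(p-|S|)/2}$) is a bounded isomorphism from $U_S$ onto $V_S\subset L^2(\mu_f)$, with explicit constants. A bounded isomorphism maps an orthonormal basis to a Riesz basis, so $(\xi_S^{(\bm m_S)})_{\bm m_S}$ is a Riesz basis of $V_S$.

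\emph{Step 3 (gluing).} By the existence and uniqueness theorem of \cite{stone1994use,hooker_2007}, $L^2=\sum_S V_S$ with unique components $\nu_S\in V_S$, so algebraically the sum is direct. To get a Riesz basis of $L^2$, I need the sum to be topologically stable: there should exist $c>0$ with
\begin{equation*}
\Bigl\|\sum_S h_S\Bigr\|^2\ge c\sum_S\|h_S\|^2\qquad\text{for all } h_S\in V_S .
\end{equation*}
This is the main obstacle. I would first invoke Stone's Lemma~3.1, which yields exactly this inequality for densities bounded away from $0$ and $\infty$ on a product domain. Failing that, I would prove it by induction on the maximal order $|S|$, using hierarchical orthogonality to project out top-order components and the density bounds to compare with the Lebesgue (independent) case, where the $U_S$ are mutually orthogonal. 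The reverse bound $\|\sum_S h_S\|^2\le 2^p\sum_S\|h_S\|^2$ is immediate from Cauchy--Schwarz.

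Combining the stability inequality with Step 2, the union $\Xi$ is a Riesz basis of $L^2$. Expanding $\nu$ in $\Xi$ gives unique coefficients $\bm a$. Grouping the terms by $S$ produces functions in $V_S$ that sum to $\nu$, so by the uniqueness theorem they are the $\nu_S$ of~\eqref{eq:fanova_objective}--\eqref{eq:fanova_orthogonality}, which is~\eqref{eq:ALD_components}.
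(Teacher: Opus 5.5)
Your proof is correct, but it imports the one genuinely analytic step, the topological directness of $\sum_S V_S$, from Stone's Lemma~3.1, whereas the paper proves it.

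The paper works in $L^2([-1,1]^p,\lambda)=\bigoplus^{\perp}_{S}U_S$ (your $U_S$ is the paper's $V_S$). It introduces a single operator $\mathcal T(u)=\sum_S \Pi_S(u)/f_S$, which sends $\psi_S^{(\bm m_S)}$ to $\xi_S^{(\bm m_S)}$.
\begin{itemize}
\item Since $\Pi_S(u)/f_S$ depends only on $\mathbf x_S$, it has no component in $U_{S'}$ for $S'\not\subseteq S$. So $\mathcal T$ is block-triangular with respect to inclusion.
\item Each diagonal block $u_S\mapsto\Pi_S(u_S/f_S)$ satisfies $\langle \Pi_S(u_S/f_S),u_S\rangle=\int u_S^2/f_S\,d\lambda\geq (c_2 2^p)^{-1}\|u_S\|^2$, so it is invertible by Lax--Milgram.
\item Hence $\mathcal T$ is a bounded bijection, and $\Xi$, the image of a Hilbert basis, is a Riesz basis of $L^2(\lambda)$, and therefore of $L^2(\mu_f)$ by norm equivalence.
\end{itemize}
Hierarchical orthogonality is then checked by essentially your Step~1 Fubini computation. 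Uniqueness of the components is taken from the cited Stone--Hooker theorem.

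Conversely, the paper never proves your Step~2, that $\{u/f_S:\ u\in U_S\}$ is the \emph{whole} space of pure $S$-effects. It only needs the inclusion and lets the uniqueness theorem do the rest.

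The two pieces fit together well. Since $\mathcal T(U_S)$ is exactly your $V_S$, invertibility of $\mathcal T$ is precisely your stability inequality, with an explicit constant. It can therefore replace both the citation of Stone's lemma and your only-sketched fallback induction. Combined with your Step~2, it would even give existence and uniqueness of the hierarchically orthogonal decomposition without invoking the Stone--Hooker theorem at all.
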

\end{tcolorbox}

As discussed before, under feature independence, the constraint~\eqref{eq:fanova_orthogonality} strengthens to full mutual orthogonality between components. In Appendix \ref{appendix:proofs} (Proposition \ref{prop:ortho}), we show that the components derived from our formulation~\eqref{eq:ALD_components} indeed satisfy this stronger property in the independent case. In the particular case where $\mathbf X_1, \dots, \mathbf X_p$ are i.i.d. $\mathrm{Uniform}([-1,1])$, $\Xi$ reduces to the tensorized normalized Legendre polynomials, which form an orthonormal basis of the $L^2$ space in this well known case.

\subsection{Estimation}
We assume an i.i.d. sample $\{\mathbf{X}^{(i)}\}_{i=1}^{n}$ is at hand together with the evaluations $\{\nu(\mathbf{X}^{(i)})\}_{i=1}^{n}$. Here, $\nu$ is treated as a \emph{black-box} model: we can query it pointwise but have no access to its internal architecture (e.g., a trained neural network or tree ensemble). Since the basis $\Xi$ is available in closed form, estimating the functional ANOVA decomposition reduces to estimating the coefficients
\begin{equation}
    \bm{a} = \left(\left( a_S^{(\bm{m}_S)}\right)_{\bm{m}_S \in\mathbb{N}_{+}^{|S|}}\right)_{S \subseteq [p]}.
\end{equation}

\begin{figure}[t]
  \centering
  \begin{minipage}[c]{0.50\textwidth}
    \centering
    \begin{tikzpicture}[scale=0.6, transform shape,
      font=\sffamily\small,
      node distance=6mm and 10mm,
      >={Stealth[length=2.4mm, width=2mm]},
      every edge/.style={draw=black, line width=0.5pt, ->},
      block/.style={
        rectangle, rounded corners=4pt,
        draw=black, line width=0.5pt,
        fill=blue!15,
        minimum width=44mm, minimum height=10mm,
        align=center, inner sep=3pt
      },
      side/.style ={block, minimum height=14mm},
      final/.style={block, minimum height=10mm},
      orangebox/.style={fill=orange!25},
      violetbox/.style={fill=violet!20},
      junc/.style ={circle, draw=black, fill=white, line width=0.5pt,
                    inner sep=0pt, minimum size=4.4mm, font=\scriptsize}
    ]
      \newcommand{\cplx}[1]{%
        \\[1pt]{\scriptsize\color{black!60}#1}}
      \node[block] (h1)
        {Dataset\\[1pt]\footnotesize $\{\mathcal{X},\,\mathcal{Y}\}$};
      \node[block, below=of h1] (h2)
        {Feature scaling\\[1pt]
         \footnotesize $\mathrm{StandardScaling} \;+\; \tanh$};
      \node[side, orangebox, below left=8mm and -10mm of h2]  (g3)
        {Denominator\\[1pt]
         \footnotesize Density estimation
         \cplx{$\mathcal{O}\!\left(n\times p^{K}\right)$}};
      \node[side, below right=8mm and -10mm of h2] (d3)
        {Numerator\\[1pt]
         \footnotesize Polynomial evaluation
         \cplx{$\mathcal{O}\!\left(n\times (p * d)^{K}\!/K!\right)$}};
      \node[final, below=28mm of h2] (b4)
        {Design matrix
         \cplx{$\mathcal{O}\!\left(n\times (p * d)^{K}\!/K!\right)$}};
      \node[final, orangebox, below=of b4] (b5)
        {Linear solving\\[1pt]
         \footnotesize Model selection (LARS) \\ \footnotesize Reduced solving (SVD)};
      \node[final, violetbox, below=of b5] (b6)
        {Estimator $\widehat{\nu}$};
      \coordinate (fork)  at ($(h2.south)+(0,-4mm)$);
      \node[junc] (merge) at (b4.north |- g3.center) {$\oplus$};
      \draw[->] (h1) -- (h2);
      \draw     (h2.south) -- (fork);
      \draw[->, shorten >=1pt] (fork) -| (g3.north);
      \draw[->, shorten >=1pt] (fork) -| (d3.north);
      \draw[->, shorten >=1pt] (g3.east) -- (merge);
      \draw[->, shorten >=1pt] (d3.west) -- (merge);
      \draw[->] (merge) -- (b4.north);
      \draw[->] (b4) -- (b5);
      \draw[->] (b5) -- (b6);
    \end{tikzpicture}
    \captionof{figure}{Illustration of our method to estimate the decomposition \emph{from scratch} (see Appendix \ref{appendix:estimator}).}
    \label{fig:global-scheme}
  \end{minipage}\hfill
  \begin{minipage}[c]{0.46\textwidth}
    \centering
    \setlength{\tabcolsep}{5pt}
    \small
    \begin{tabular}{llrr}
      \toprule
      \textbf{ID} & \textbf{Name} & $n$ & $p$ \\
      \midrule
      \textbf{BS} & \emph{Bike Sharing}~\cite{fanaee2014event}              & 17\,379 & 8  \\
      \textbf{CH} & \emph{California Housing}~\cite{pace1997sparse}         & 20\,640 & 8  \\
      \textbf{CI} & \emph{Census Income}~\cite{kohavi1996scaling}     & 48\,842 & 14 \\
      \textbf{EG} & \emph{Electrical Grid}~\cite{arzamasov2018towards} & 10\,000 & 12 \\
      \textbf{PI} & \emph{Diabetes}~\cite{smith1988using}      & 768     & 8  \\
      \textbf{SC} & \emph{Superconductivity}~\cite{hamidieh2018data}        & 21\,263 & 81 \\
      \bottomrule
    \end{tabular}
    \captionof{table}{Datasets characteristics.}
    \label{table:datasets}
  \end{minipage}
\end{figure}

A natural estimator is obtained by minimizing the empirical least-squares risk given by:
\begin{equation}\label{eq:empirical_risk}
    \mathcal{L}_n(\bm \beta) := \sum_{i=1}^{n} \left( \nu(\mathbf{X}^{(i)}) - \sum_{S \subseteq [p]} \sum_{\bm{m}_S \in \mathbb{N}_+^{|S|}} \beta_S^{(\bm{m}_S)} \cdot \xi_S^{(\bm{m}_S)}(\mathbf{X}^{(i)}) \right)^{2}.
\end{equation}
The parameter $\bm{a}$ is however infinite-dimensional, as each multi-index $\bm{m}_S$ ranges over $\mathbb{N}_+^{|S|}$, making the problem ill-posed without further structural assumptions. To make it tractable, we build on the polynomial chaos expansion literature \cite{xiu2002wiener,blatman2011adaptive} and exploit two classical structural priors: the \emph{sparsity-of-effects principle} \cite{montgomery2017design}, which posits that only a small number of effects drive the response, and the \emph{reluctance principle} \cite{yu2019reluctant}, which favors low-order effects over higher-order interactions at comparable explanatory power.

\begin{definition}[Truncation set]
    Let $K \in [p]$ and $d \geq 1$ a positive integer. We define $\mathcal{I}_K^{d}$, the truncation set of \emph{degree} $d$ and \emph{interaction order} $K$, as follows:
    \begin{equation}
        \mathcal{I}_K^{d} := \left\{ ( S , \bm{m}_S ) \mid S \subseteq [p] : \vert S \vert \leq K , \; \bm{m}_S \in \{1, \dots, d\}^{ \vert S \vert } \right\},
    \end{equation}
    which contains exactly $ N(p,K,d) := \sum_{k=0}^{K} \binom{p}{k} d^k$ elements.
\end{definition}
\begin{remark}
A more general optimization problem could have been formulated 
by introducing a penalty $\Omega(\cdot)$ on the coefficients to be selected. 
We instead adopt the simplest approach and propose a twofold truncation, 
acting jointly on the interaction order and on the polynomial degrees. 
This choice is motivated by both computational and statistical considerations. 
First, the cardinality of $\mathcal{I}_K^d$ grows as $\mathcal{O}((p\ast d)^{K}/K!)$, rendering high-order interactions rapidly intractable. 
Second, the \emph{sparsity-of-effects principle} postulates that low-order effects dominate in most real-world systems. This intuition is further supported by the \emph{reluctance principle}, which states that a main effect should be preferred over an interaction whenever both yield comparable predictive performance. Empirically, restricting the decomposition to main and pairwise effects is generally sufficient to recover black-box models with high fidelity, 
as consistently reported in the recent ANOVA and GAM/GA\textsuperscript{$2$}M literature.
\end{remark}
Restricting the empirical risk~\eqref{eq:empirical_risk} to
the truncated set $\mathcal{I}_K^{d}$ yields a tractable
estimator of the decomposition, defined as the solution of
the finite-dimensional least-squares problem:
\begin{tcolorbox}[blue_style]
\begin{equation}\label{eq:estimator_a}
    \widehat{\bm a}_n^{K,d} := \operatornamewithlimits{argmin}_{ \bm \beta \in \mathbb{R}^{ N(p,K,d) } } \sum_{i=1}^{n} \left( \nu(\mathbf{X}^{(i)}) - \sum_{ (S , \bm{m}_S) \in \mathcal{I}_K^{d} } \beta_S^{(\bm{m}_S)} \cdot \xi_S^{(\bm{m}_S)}(\mathbf{X}^{(i)}) \right)^{2}.
\end{equation}
\end{tcolorbox}
In theory, estimating the decomposition requires access to the marginal densities $f_S$; in practice, we estimate these densities themselves using normalized Legendre polynomials. We illustrate the overall procedure in Fig~\ref{fig:global-scheme} and defer a derivation of the full estimation pipeline in Appendix \ref{appendix:estimator}.

\begin{figure}[t]
  \centering
  \includegraphics[width=\textwidth]{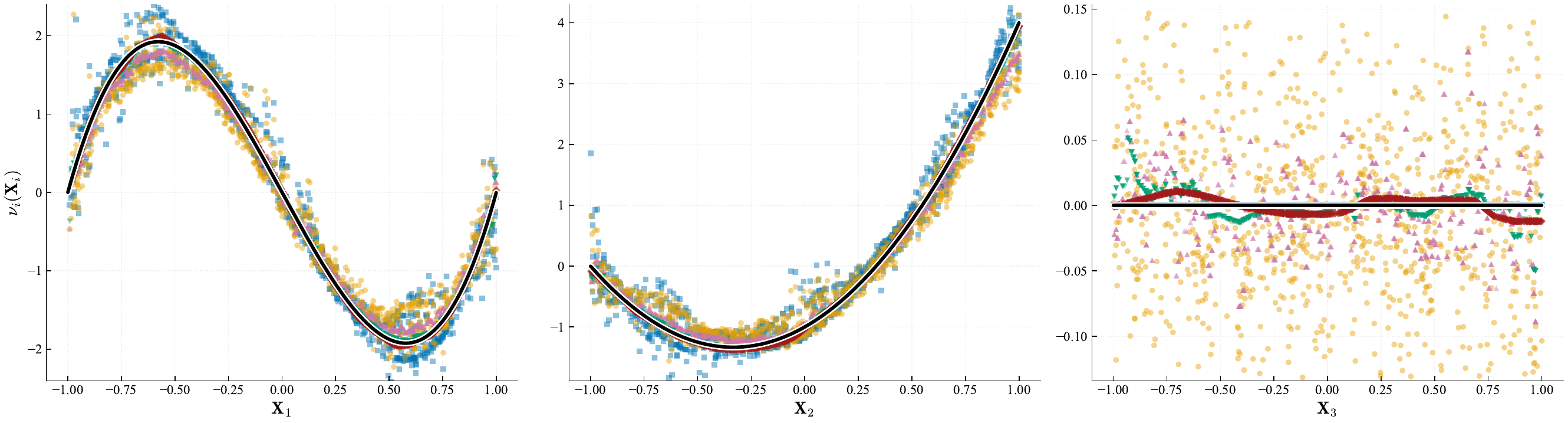}
  \caption{Estimated main effects in the analytical setting for 
$\mathbf{X}_1$, $\mathbf{X}_2$, and the irrelevant variable $\mathbf{X}_3$. 
All methods 
(\textcolor[HTML]{0072B2}{KernelSHAP}, 
\textcolor[HTML]{E69F00}{TreeSHAP}, 
\textcolor[HTML]{CC79A7}{TreeHFD}, 
\textcolor[HTML]{A31A1A}{NAM}, 
\textcolor[HTML]{009E73}{EBM}) 
closely track the \textbf{theoretical ANOVA component} on $\mathbf{X}_1$ 
and $\mathbf{X}_2$, and correctly assign a near-zero contribution to 
$\mathbf{X}_3$.}
  \label{fig:AC}
\end{figure}

\section{Experiments}\label{section:exp}
We first evaluate our framework on an analytical case. We instantiate the theoretical components in closed form and show that existing attribution methods implicitly approximate our formulation. We then conduct experiments on six real-world datasets. On each dataset, we train two predictive models of fixed architecture (a neural network and an XGBoost model) whose precise specifications are detailed in Appendix \ref{appendix:experiments}. Our decomposition is applied to both and compared against several families of existing approaches, each paired with the model it is designed for. For neural networks, we compare against KernelSHAP, one of the most widely used model-agnostic explainability method, and its neural network specific counterpart DeepSHAP. For tree ensembles, we compare against TreeSHAP, the standard attribution method for tree-based models, as well as TreeHFD, which directly targets the functional ANOVA decomposition and provides an efficient implementation for XGBoost. Finally, to assess how our decomposition relates to intrinsically interpretable models, we train Neural Additive Models (NAM) and Explainable Boosting Machines (EBM), and compare the main effects natively produced by these architectures to those recovered by applying our decomposition to the same trained models.

\paragraph{Analytical case.}
We consider a random vector $\mathbf{X} \in [-1,1]^3$ of following density 
\begin{equation}
    \forall \mathbf x \in [-1,1]^3, \quad f_{\rho}(\mathbf x) \coloneqq \frac{1}{8} \left( 1 + \rho (\mathbf x_1 \mathbf x_2 + \mathbf x_1 \mathbf x_3 + \mathbf x_2 \mathbf x_3 ) \right),
\end{equation}
with $\rho \in (-1/3 , 1)$ so that $f_{\rho}$ satisfies Assumption \ref{assu:densite}. The target function is defined as
\begin{equation}
    \nu(\mathbf X) \coloneqq \nu_1(\mathbf{X}_1) + \nu_2(\mathbf{X}_2) + \nu_{1,2}(\mathbf{X}_1, \mathbf{X}_2),
\end{equation}
whose components are given in closed form in Appendix \ref{appendix:analytical}. This setting has two convenient features: the marginal and bivariate densities admit explicit expressions and the variable $\mathbf{X}_3$ is irrelevant to $\nu$. This allows us to check if different methods correctly assign it a zero contribution. We display these theoretical components and the corresponding contributions given by the post-hoc explanation methods and interpretable models in Fig \ref{fig:AC}.

\begin{table}[ht]
\newcommand{\val}[2]{$#1{\scriptstyle\,\pm\,#2}$}
  \caption{Time (in s) to estimate the decomposition on the entire dataset (\val{\text{Mean}}{\text{std}} over 10 repetitions).}
  \label{table:time}
  \centering
  \begin{tabular}{lrrcccc}
    \toprule
    \textbf{ID} & $n$ & $p$ & XGB & MLP & EBM & NAM \\
    \midrule
    \textbf{BS} & 17\,379 & 8  & \val{9.806}{0.228}  & \val{5.741}{0.182} & \val{9.721}{0.206}  & --                 \\
    \textbf{CH} & 20\,640 & 8  & \val{11.048}{0.415} & \val{0.129}{0.012} & \val{11.077}{0.414} & \val{0.126}{0.005} \\
    \textbf{CI} & 48\,842 & 14 & \val{0.575}{0.023}  & \val{0.597}{0.053} & \val{0.691}{0.010}  & --                 \\
    \textbf{EG} & 10\,000 & 12 & \val{1.420}{0.048}  & \val{1.407}{0.025} & \val{1.365}{0.019}  & \val{0.039}{0.007} \\
    \textbf{PI} & 768     & 8  & \val{0.237}{0.006}  & \val{0.230}{0.009} & \val{0.230}{0.006}  & \val{0.013}{0.001} \\
    \textbf{SC} & 21\,263 & 81 & \val{3.520}{0.097}  & \val{3.547}{0.159} & \val{3.501}{0.053}  & --                 \\
    \bottomrule
  \end{tabular}
\end{table}

\paragraph{Real-world datasets.}
We evaluate our method on six publicly available datasets which span a variety of domains, sample sizes, and feature dimensionalities, covering both classification and regression tasks. Datasets characteristics are summarized in Tab~\ref{table:datasets} and the performance of our method to estimate the functional ANOVA are summarized in Tab~\ref{sample-table}.

\begin{table}[ht]
  \caption{Performance of our method across four model families.}
  \label{sample-table}
  \centering
  \footnotesize
  \setlength{\tabcolsep}{3pt}
  \begin{minipage}{0.49\textwidth}
    \centering
    \begin{tabular}{llcccc}
      \toprule
      \textbf{Dataset} & \textbf{Model} & Perf & $K$ & $R^2$ & MaxCorr \\
      \midrule
      \multirow{4}{*}{\shortstack[l]{\textbf{BS}\\{\scriptsize(17\,379, 8)}}}
        & XGB & 0.86 & 2 & 0.92 & $9.49\mathrm{e}{-2}$ \\
        & MLP & 0.85 & 2 & 0.91 & $8.83\mathrm{e}{-2}$ \\
        & EBM & 0.83 & 2 & 0.96 & $9.52\mathrm{e}{-2}$ \\
        & NAM & --   & --& --   & -- \\
      \midrule
      \multirow{4}{*}{\shortstack[l]{\textbf{CH}\\{\scriptsize(20\,640, 8)}}}
        & XGB & 0.84 & 2 & 0.86 & $6.35\mathrm{e}{-2}$ \\
        & MLP & 0.81 & 1 & 0.88 & 0.00 \\
        & EBM & 0.83 & 2 & 0.89 & $4.04\mathrm{e}{-2}$ \\
        & NAM & 0.76 & 1 & 0.94 & 0.00 \\
      \midrule
      \multirow{4}{*}{\shortstack[l]{\textbf{CI}\\{\scriptsize(48\,842, 14)}}}
        & XGB & 0.88 & 1 & 0.91 & 0.00 \\
        & MLP & 0.85 & 1 & 0.94 & 0.00 \\
        & EBM & 0.88 & 1 & 0.90 & 0.00 \\
        & NAM & --   & --& --   & -- \\
      \bottomrule
    \end{tabular}
  \end{minipage}%
  \hfill
  \begin{minipage}{0.49\textwidth}
    \centering
    \begin{tabular}{llcccc}
      \toprule
      \textbf{Dataset} & \textbf{Model} & Perf & $K$ & $R^2$ & MaxCorr \\
      \midrule
      \multirow{4}{*}{\shortstack[l]{\textbf{EG}\\{\scriptsize(10\,000, 12)}}}
        & XGB & 0.93 & 2 & 0.89 & $7.41\mathrm{e}{-3}$ \\
        & MLP & 0.98 & 2 & 0.95 & $6.61\mathrm{e}{-3}$ \\
        & EBM & 0.94 & 2 & 0.99 & $5.58\mathrm{e}{-3}$ \\
        & NAM & 0.88 & 1 & 1.00 & 0.00 \\
      \midrule
      \multirow{4}{*}{\shortstack[l]{\textbf{PI}\\{\scriptsize(768, 8)}}}
        & XGB & 0.74 & 2 & 0.85 & $9.56\mathrm{e}{-2}$ \\
        & MLP & 0.73 & 2 & 0.99 & 0.00 \\
        & EBM & 0.76 & 2 & 0.97 & 0.00 \\
        & NAM & 0.73 & 1 & 1.00 & 0.00 \\
      \midrule
      \multirow{4}{*}{\shortstack[l]{\textbf{SC}\\{\scriptsize(21\,263, 81)}}}
        & XGB & 0.93 & 1 & 0.86 & 0.00 \\
        & MLP & 0.91 & 1 & 0.89 & 0.00 \\
        & EBM & 0.90 & 1 & 0.88 & 0.00 \\
        & NAM & --   & --& --   & -- \\
      \bottomrule
    \end{tabular}
  \end{minipage}
\end{table}
The formal definition of all performance metrics are given in Appendix \ref{appendix:experiments}. Briefly, $\mathrm{Perf}$ quantifies the predictive performance of the machine learning model; $K \in \{1, 2\}$ is the truncation level of our decomposition; $R^2$ is the reconstruction coefficient of determination of our estimator; and $\mathrm{MaxCorr}$ is a cosine-based quantity measuring the \emph{worst} deviation from hierarchical orthogonality.

\paragraph{Results.}
Our method yields highly accurate reconstructions across all four model 
families, with $R^2 \geq 0.85$ on every dataset. The systematically low interaction orders ($K \leq 2$) confirm the predominantly additive structure of these models and the sparsity of the \emph{true} underlying decomposition. Despite the coarse plug-in estimation of the marginal densities, the hierarchical orthogonality between components is faithfully preserved. Finally, Tab~\ref{table:time} shows that our estimator is ultra fast in practice, while being model agnostic.

\section{Discussion}\label{section:discussion}

\paragraph{Conclusion.} 
We provided an explicit decomposition basis for the generalized functional ANOVA of continuous bounded random variables. Our formulation recovers classical orthogonal decompositions and their known connections with Shapley values and orthogonal polynomials. On the practical side, we proposed an elementary estimator that recovers the functional ANOVA from a data sample by solving a single linear system. Empirically, we compared our framework to state-of-the-art explanation methods and interpretable models for tabular data. We showed that these methods are closely related to the functional ANOVA, reinforcing that our framework offers a unifying perspective on additive explanations. Finally, our proposed approach to estimate the decomposition is very efficient and provides ultra fast explanations.

\paragraph{Limitations.} We discuss the limitations of this paper in detail in Appendix~\ref{appendix:limitations}. 

\paragraph{Future Work.}
Beyond strictly solving the limitations, our theoretical results open several promising directions. First extending the framework beyond bounded continuous distributions is a natural but non-trivial challenge. Handling mixed continuous-categorical inputs raises additional structural questions, as the decomposition would need to combine polynomial expansions with discrete summation over categorical levels. Second, incorporating a general penalization term into the theoretical least-squares risk would yield a tighter, possibly adaptive and/or minimax-optimal, estimator of the decomposition and connect our analysis to the broader model selection literature. Third, the explicit functional nature of the decomposition could yield new diagnostics on the geometry of black-box models. Finally, the uniqueness of this decomposition suggests that one may derive from it a \emph{functional alignment} between the \emph{real phenomenon} and the machine learning model.

\newpage
\begin{ack}
The authors thank Joseph Muré (EDF R\&D) for his careful reading and valuable comments. This work was partially supported by the French \emph{Association Nationale de la Recherche et de la Technologie} (ANRT) through a CIFRE PhD project at \'Electricité de France (EDF). Fabrice Gamboa and Jean-Michel Loubes acknowledge support from the ANR-3IA Artificial and Natural Intelligence Toulouse Institute (ANITI).
\end{ack}

\bibliography{biblio_ml}
\bibliographystyle{apalike}



\newpage

\appendix

\section{Legendre Polynomials}\label{appendix:legendre}

In this section, we rely on results that have been widely studied by \cite{szeg1939orthogonal}.

\begin{definition}\label{def:legendre}
For any non-negative integer $m \in \mathbb{N}$, we denote by
$P_m$ the Legendre polynomial of degree $m$. This family is
uniquely defined by the three-term recurrence relation:
\begin{equation}
    \left\{
    \begin{aligned}
        P_0(X) &= 1, \\
        P_1(X) &= X, \\
        (m+1)\, P_{m+1}(X) &= (2m+1)\, X\, P_m(X) -m\, P_{m-1}(X),
        \quad \forall\, m \geq 1.
    \end{aligned}
    \right.
\end{equation}
\end{definition}

\begin{proposition}
The Legendre polynomials are mutually orthogonal and satisfy the following identity:
\begin{equation}
 \forall n,m \in \mathbb{N}, \quad   \int_{-1}^{1} P_n(x) P_m(x) dx = \frac{2}{2n + 1} \mathbf{1}_{\{n = m\}}.
\end{equation}
\end{proposition}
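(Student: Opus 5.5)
The plan is to prove orthogonality and the norm identity together by induction on $\max(n,m)$, using only the three-term recurrence of Definition~\ref{def:legendre}. The alternative of passing through Rodrigues' formula $P_m(x)=\frac{1}{2^m m!}\frac{d^m}{dx^m}(x^2-1)^m$ is more direct for both claims. However, the paper defines $P_m$ by the recurrence, so that route would first require checking that Rodrigues' polynomials satisfy the recurrence. I will therefore stay with the recurrence.

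\emph{Step 1: degree and leading coefficient.} A simple induction shows that $P_m$ has exact degree $m$, with leading coefficient
\[
k_m=\prod_{j=0}^{m-1}\tfrac{2j+1}{j+1}.
\]
Consequently $\{P_0,\dots,P_{m}\}$ spans the space of polynomials of degree at most $m$, and
\[
xP_{m-1}(x)=\tfrac{k_{m-1}}{k_m}P_m(x)+(\text{lower degree}).
\]

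\emph{Step 2: orthogonality.} By symmetry it suffices to treat $n<m$. I induct on $m$ with the hypothesis that $P_m\perp P_j$ for all $j<m$, equivalently $P_m\perp x^j$ for $j<m$. Writing $P_{m+1}=\frac{2m+1}{m+1}xP_m-\frac{m}{m+1}P_{m-1}$, for $j<m+1$ we get
\[
\int_{-1}^{1} P_{m+1}x^j\,dx=\tfrac{2m+1}{m+1}\int_{-1}^{1} P_m x^{j+1}\,dx-\tfrac{m}{m+1}\int_{-1}^{1} P_{m-1}x^j\,dx .
\]
For $j\le m-2$ both terms vanish by the induction hypothesis. The cases $j=m-1$ and $j=m$ need more care. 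The case $j=m$ pairs with $m+1\equiv j+1$ parity-wise, and here I will use parity: the recurrence gives $P_m(-x)=(-1)^mP_m(x)$, so the integrand $P_{m+1}x^m$ is odd and the integral is zero. For $j=m-1$, the first term involves $\int P_m x^m$ and the second $\int P_{m-1}x^{m-1}$. Expanding $x^m$ in the basis and using orthogonality at level $m$ gives $\int P_m x^m=\frac{1}{k_m}\|P_m\|^2$. This identity couples the orthogonality step to the norm computation, which is the main obstacle. I will therefore run the induction on the joint statement "orthogonality and $\|P_j\|^2=\frac{2}{2j+1}$ for $j\le m$". With that hypothesis the $j=m-1$ combination becomes
\[
\tfrac{2m+1}{m+1}\cdot\tfrac{1}{k_m}\cdot\tfrac{2}{2m+1}-\tfrac{m}{m+1}\cdot\tfrac{1}{k_{m-1}}\cdot\tfrac{2}{2m-1},
\]
and substituting $k_m/k_{m-1}=\frac{2m-1}{m}$ shows it is zero.

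\emph{Step 3: the norm.} Multiply the recurrence at index $m$ by $P_{m+1}$ and integrate. Orthogonality kills the $P_{m-1}$ term, giving
\[
(m+1)\|P_{m+1}\|^2=(2m+1)\int_{-1}^{1} xP_mP_{m+1}\,dx .
\]
Next, since $xP_m=\frac{k_m}{k_{m+1}}P_{m+1}+(\text{degree}\le m)$, we have $\int xP_mP_{m+1}=\frac{k_m}{k_{m+1}}\|P_{m+1}\|^2$. This alone is circular, so instead I apply the recurrence shifted to index $m+1$, solved for $xP_{m+1}$, and pair it with $P_m$:
\[
\int_{-1}^{1} xP_{m+1}P_m\,dx=\tfrac{m+1}{2m+3}\|P_m\|^2 .
\]
Combining the two displays gives
\[
\|P_{m+1}\|^2=\tfrac{2m+1}{2m+3}\|P_m\|^2 .
\]
The base cases are $\|P_0\|^2=2$ and $\|P_1\|^2=2/3$, after which the formula $\|P_m\|^2=\frac{2}{2m+1}$ follows by telescoping, closing the joint induction.
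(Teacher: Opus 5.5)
Steps 1 and 2 are correct, but Step 3 is circular, and so the joint induction does not close. Solving the recurrence at index $m+1$ for $xP_{m+1}$ and pairing with $P_m$ actually gives
\[
\int_{-1}^{1} xP_{m+1}P_m\,dx=\tfrac{m+2}{2m+3}\int_{-1}^{1}P_{m+2}P_m\,dx+\tfrac{m+1}{2m+3}\|P_m\|^2 .
\]
Your displayed identity therefore silently assumes $P_{m+2}\perp P_m$. That lies outside your induction hypothesis, which only covers indices $\le m+1$. In your scheme it would be proved only at the next stage, by Step 2 applied to $P_{m+2}$ with $j=m$ (the shifted analogue of your case $j=m-1$). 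That step needs $\|P_{m+1}\|^2$, which is the very quantity you are computing. Given what is already established, $\|P_{m+1}\|^2=\frac{2m+1}{2m+3}\|P_m\|^2$ is \emph{equivalent} to $P_{m+2}\perp P_m$. The circularity you noticed has only moved one index up.

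No reordering can repair this. The only facts about $dx$ your argument uses are the symmetry of $[-1,1]$ (for parity) and the two values $\|P_0\|^2=2$, $\|P_1\|^2=\frac23$. The measure $\mu=\delta_{-1/\sqrt{3}}+\delta_{1/\sqrt{3}}$, the two-point Gauss--Legendre rule, shares all of these. Yet $P_2(\pm1/\sqrt{3})=0$, so $\|P_2\|^2_{L^2(\mu)}=0\neq\frac25$. Correspondingly $\int P_3P_1\,d\mu=-\frac49\neq0$, which is exactly the term you dropped at $m=1$. For any measure with your inputs, $\|P_2\|^2=\frac94 m_4-\frac12$ where $m_4=\int x^4$. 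So the norms depend on the higher even moments $\int_{-1}^1x^{2k}\,dx=\frac{2}{2k+1}$, and these must enter the proof somewhere.

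The paper does not prove this proposition; it defers to Szeg\H{o}. The standard argument is the one you set aside. First check that $R_m:=\frac{1}{2^m m!}\frac{d^m}{dx^m}(x^2-1)^m$ satisfies the recurrence with $R_0=1$ and $R_1=x$, so $R_m=P_m$. Then integrate by parts $m$ times; the boundary terms vanish at $\pm1$. This gives $\int_{-1}^1R_m q\,dx=0$ for $\deg q<m$, and $\|R_m\|^2=\frac{(2m)!}{4^m(m!)^2}\int_{-1}^1(1-x^2)^m\,dx=\frac{2}{2m+1}$.

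If you prefer to stay with the recurrence, strengthen the induction to a closed form for $\int_{-1}^1P_m(x)\,x^j\,dx$ for \emph{all} $j$, seeded by those moments. This is unavoidable anyway, since $\|P_{m+1}\|^2=k_{m+1}\int P_{m+1}x^{m+1}\,dx$ involves $\int P_m x^{m+2}\,dx$.
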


\begin{definition}
    For any integer $m \in \mathbb N$, we denote by $\widetilde{P}_m$ the normalized Legendre polynomial of degree $m$ defined as:
    \begin{equation}
        \widetilde{P}_m \coloneqq \sqrt{ \frac{2m + 1}{2} } P_m.
    \end{equation}
\end{definition}

\begin{theorem}\cite{reed1972methods,ghanem2003stochastic,courant2024methods}
For any $\bm \alpha := ( \alpha_1, \dots, \alpha_p ) \in \mathbb{N}^p$, we define the following tensorized product of normalized Legendre polynomials:
\begin{equation}
    \Phi_{ \bm \alpha } (X_1, \dots, X_p) \coloneqq \prod\limits_{ j=1 }^{p} \widetilde{P}_{\alpha_j}( X_j ).
\end{equation}
The collection $( \Phi_{\bm{\alpha}} )_{ \bm \alpha \in \mathbb N^p }$ forms a Hilbert basis (\textit{i.e.} an orthonormal basis) of $L^2( [-1,1]^p)$ equipped with the Lebesgue measure $\lambda$.
\end{theorem}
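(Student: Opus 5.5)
The plan is to prove the two defining properties of a Hilbert basis separately: orthonormality of $(\Phi_{\bm\alpha})_{\bm\alpha\in\mathbb N^p}$ in $L^2([-1,1]^p,\lambda)$, and density of its linear span. Once both hold, the standard Hilbert space characterization finishes the argument. An orthonormal family whose span is dense is an orthonormal basis, because Parseval's identity, or equivalently $\{\Phi_{\bm\alpha}\}^\perp=\{0\}$, follows from density.

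\textbf{Orthonormality.} For $\bm\alpha,\bm\beta\in\mathbb N^p$, the integrand $\Phi_{\bm\alpha}\Phi_{\bm\beta}$ is a product of univariate functions on the product domain $[-1,1]^p$. By Fubini--Tonelli, applicable since polynomials are bounded on a compact set,
\begin{equation*}
\int_{[-1,1]^p}\Phi_{\bm\alpha}\Phi_{\bm\beta}\,d\lambda=\prod_{j=1}^p\int_{-1}^1\widetilde P_{\alpha_j}(x)\widetilde P_{\beta_j}(x)\,dx .
\end{equation*}
By the orthogonality proposition for Legendre polynomials and the normalization $\widetilde P_m=\sqrt{(2m+1)/2}\,P_m$, each factor equals $\mathbf 1_{\{\alpha_j=\beta_j\}}$. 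The product is therefore $\mathbf 1_{\{\bm\alpha=\bm\beta\}}$.

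\textbf{Density of the span.} First, I would show from the three-term recurrence, by induction on $m$, that $P_m$ has degree exactly $m$ with a nonzero leading coefficient. The leading coefficient of $P_{m+1}$ is $\frac{2m+1}{m+1}$ times that of $P_m$, and is hence positive. The family $(P_0,\dots,P_k)$ is then triangular with respect to $(1,x,\dots,x^k)$, so every monomial $x^k$ lies in $\operatorname{span}\{\widetilde P_0,\dots,\widetilde P_k\}$. Taking products, every multivariate monomial $\prod_j x_j^{k_j}$ lies in the span of $\{\Phi_{\bm\alpha}:\alpha_j\le k_j\}$. Consequently, $\operatorname{span}\{\Phi_{\bm\alpha}\}$ equals the algebra of all polynomials in $p$ variables.

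Next, I would use the Stone--Weierstrass theorem: polynomials form a subalgebra of $C([-1,1]^p)$ that contains the constants and separates points, using the coordinate functions. They are therefore uniformly dense in $C([-1,1]^p)$. Since $\lambda([-1,1]^p)=2^p<\infty$, uniform convergence implies $L^2$ convergence, with $\|g\|_{L^2}\le 2^{p/2}\|g\|_\infty$. Finally, $C([-1,1]^p)$ is dense in $L^2([-1,1]^p,\lambda)$, which is a standard consequence of the regularity of Lebesgue measure, for instance via Lusin's theorem or mollification. Chaining these two approximations gives the density of polynomials in $L^2$.

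The only step with real content is this density argument. I expect it to be the main point to state carefully, in particular the passage from uniform to $L^2$ approximation and the density of continuous functions. Each ingredient is classical, so I would cite them rather than reprove them. An alternative route, if one prefers to avoid Stone--Weierstrass, is to first establish completeness of $(\widetilde P_m)$ in $L^2([-1,1])$ via the one-dimensional Weierstrass theorem. One would then use the general fact that tensor products of orthonormal bases form an orthonormal basis of $L^2$ of the product measure space. That fact is itself proved by Fubini: if $g\perp\Phi_{\bm\alpha}$ for all $\bm\alpha$, one shows by induction on $p$ that the partial coefficient functions vanish almost everywhere.
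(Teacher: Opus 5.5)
Your proof is correct. The paper gives no proof of this statement: it is quoted as a classical result, with references to Reed--Simon, Ghanem--Spanos and Courant--Hilbert. The text that follows it, the reindexing $\bm\alpha \mapsto (S_{\bm\alpha}, \bm\alpha_{S_{\bm\alpha}})$ that produces the $\psi_S^{(\bm m_S)}$, uses the theorem rather than proving it. Your argument therefore supplies what the paper delegates to the literature, and it does so along the standard lines.

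Of your two routes, the main one is the more economical here. Triangularity of $(P_0,\dots,P_k)$ against $(1,x,\dots,x^k)$ shows that the span of the $\Phi_{\bm\alpha}$ is the full polynomial algebra. Stone--Weierstrass on the compact box, the bound $\|g\|_{L^2}\le 2^{p/2}\|g\|_\infty$, and density of $C([-1,1]^p)$ in $L^2$ then finish the proof, with no tensor-product machinery.

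Your alternative first proves one-dimensional completeness of $(\widetilde P_m)$ via Weierstrass. It then invokes the general fact that products of orthonormal bases form an orthonormal basis of $L^2$ of the product measure. This is closer to the functional-analytic presentation in the Reed--Simon reference. It is somewhat longer here, but more portable: it reduces everything to completeness in each coordinate. It therefore also covers product settings where Stone--Weierstrass is not available, such as unbounded supports with Hermite-type families.

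If you write that variant out, state two things explicitly. First, the partial coefficient functions $x_p\mapsto\int g(\mathbf x',x_p)\,\Phi_{\bm\alpha'}(\mathbf x')\,d\mathbf x'$ lie in $L^2([-1,1])$ by Cauchy--Schwarz and Fubini. Second, you discard a countable union of null sets, one per $\bm\alpha'$, before applying the induction hypothesis on each slice.
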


A direct consequence of the previous theorem is an \emph{ANOVA-type indexing}
of the Hilbert basis of $L^2( [-1,1]^p , \lambda )$, in which each basis
element depends only on a prescribed subset of variables. Recall that
$\mathbb{N}_+$ denotes the set of positive integers. For any $S \subseteq [p]$
and any $\bm m_S \coloneqq (m_j)_{j \in S} \in \mathbb{N}_+^{|S|}$, set
\begin{equation}
    \psi_S^{(\bm m_S)} \coloneqq \frac{1}{\sqrt{2^{p - \vert S \vert}}} \prod_{j \in S} \widetilde{P}_{m_j},
\end{equation}
with the convention $\psi_{\emptyset} \coloneqq 1/\sqrt{2^p}$. Then the family
\begin{equation}
    \left\{ \, \psi_S^{(\bm m_S)} \;:\; S \subseteq [p], \ \bm m_S \in \mathbb{N}_+^{|S|} \, \right\}
\end{equation}
is a Hilbert basis of $L^2( [-1,1]^p , \lambda )$. Indeed, for any
$\bm\alpha \in \mathbb{N}^p$, let
$S_{\bm\alpha} \coloneqq \{ j \in [p] : \alpha_j > 0 \}$ denote its support
and $\bm\alpha_{S_{\bm\alpha}} \in \mathbb{N}_+^{|S_{\bm\alpha}|}$ its
restriction to the nonzero coordinates. Since $\widetilde{P}_0 = 1/\sqrt{2}$, one has
\begin{equation}
    \Phi_{\bm\alpha} = \prod_{j=1}^p \widetilde{P}_{\alpha_j}
    = \frac{1}{\sqrt{2^{ p - \vert S_{\bm \alpha} \vert }}} \prod_{j \in S_{\bm\alpha}} \widetilde{P}_{\alpha_j}
    = \psi_{S_{\bm\alpha}}^{(\bm\alpha_{S_{\bm\alpha}})}.
\end{equation}
Moreover, the map
\[
    \bm\alpha \in \mathbb{N}^p \;\longmapsto\;
    \bigl( S_{\bm\alpha}, \, \bm\alpha_{S_{\bm\alpha}} \bigr)
    \;\in\; \bigsqcup_{S \subseteq [p]} \mathbb{N}_+^{|S|}
\]
is a bijection, and the result follows by reindexing $(\Phi_{\bm\alpha})_{\bm\alpha \in \mathbb{N}^p}$ along it.
\begin{remark}
    A key consequence of the orthogonality of the basis is the orthogonality to the constant:
    \begin{equation}
        \forall S \subseteq [p] \setminus \emptyset, \: \forall \bm{m}_S \in \mathbb{N}^{\vert S \vert}_+, \quad \int_{[-1,1]^p} \psi_{S}^{(\bm m_{S})} ( \mathbf x ) d \lambda( \mathbf x ) = 0.
    \end{equation}
\end{remark}

\section{Proofs}\label{appendix:proofs}

The proof of Theorem~\ref{thm:representation} proceeds as follows. We first show that the spaces $L^2([-1,1]^p, \lambda)$ and $L^2([-1,1]^p, \mu_f)$ are norm-equivalent (see Lemma~\ref{lemma:equiv_norm}). Consequently, rather than establishing that $\Xi$ is a basis of $L^2([-1,1]^p, \mu_f)$, it suffices to show that $\Xi$ is a basis of $L^2([-1,1]^p, \lambda)$. Accordingly, Definition~\ref{def:operator} and Propositions~\ref{prop:bounded}, \ref{prop:invertible}, and~\ref{prop:hom} are stated in the space $L^2([-1,1]^p, \lambda)$. By means of a suitably chosen operator $\mathcal{T}$, we show that $\Xi$ is a Riesz basis of $L^2([-1,1]^p, \lambda)$, and therefore a Riesz basis of $L^2([-1,1]^p, \mu_f)$. The hierarchical orthogonality property (see Proposition~\ref{prop:hierar_ortho}) is naturally established in $L^2([-1,1]^p, \mu_f)$. Combining these results yields the proof of the theorem.

\begin{lemma}\label{lemma:equiv_norm}
Let $f$ be a probability density with respect to the Lebesgue measure $\lambda$ on $[-1,1]^p$, and assume that there exist constants $0 < c_1 \leq c_2 < \infty$ such that
\begin{equation} \label{eq:inequality}
    \forall \mathbf x \in [-1,1]^p, \quad c_1 \leq f( \mathbf x ) \leq c_2
\end{equation}
The spaces $L^2([-1,1]^p, \lambda)$ and $L^2([-1,1]^p, \mu_f)$ coincide as sets, and their norms are equivalent: for every measurable function $g$,
\begin{equation}
    c_1 \, \|g\|_{L^2(\lambda)}^2 \;\leq\; \|g\|_{L^2(\mu_f)}^2 \;\leq\; c_2 \, \|g\|_{L^2(\lambda)}^2.
\end{equation}
\end{lemma}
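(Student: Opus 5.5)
The argument is a direct pointwise comparison of the two integrands, integrated against Lebesgue measure. For any measurable $g:[-1,1]^p \to \mathbb{R}$, the function $g^2$ is nonnegative and measurable, with values in $[0,\infty]$. By definition of $\mu_f = f\lambda$, we have $\|g\|_{L^2(\mu_f)}^2 = \int_{[-1,1]^p} g(\mathbf x)^2 f(\mathbf x)\, d\mathbf x$, which may a priori be infinite.

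Multiplying the bound \eqref{eq:inequality} by $g(\mathbf x)^2 \geq 0$ gives, for every $\mathbf x \in [-1,1]^p$,
\[
c_1\, g(\mathbf x)^2 \;\leq\; g(\mathbf x)^2 f(\mathbf x) \;\leq\; c_2\, g(\mathbf x)^2 .
\]
Since all three functions are nonnegative and measurable, we may integrate against $\lambda$ and use monotonicity of the Lebesgue integral in $[0,\infty]$. Pulling the constants out yields
\[
c_1 \|g\|_{L^2(\lambda)}^2 \leq \|g\|_{L^2(\mu_f)}^2 \leq c_2 \|g\|_{L^2(\lambda)}^2 ,
\]
with both sides allowed to be $+\infty$ at this stage.

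The set equality follows from these inequalities. If $\|g\|_{L^2(\lambda)}<\infty$, the right inequality gives $\|g\|_{L^2(\mu_f)}<\infty$. If $\|g\|_{L^2(\mu_f)}<\infty$, the left inequality together with $c_1>0$ gives $\|g\|_{L^2(\lambda)}<\infty$. Hence the two spaces of square-integrable functions coincide.

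It remains to check that the equivalence classes coincide as well. Because $f \geq c_1 > 0$ everywhere, $\mu_f$ and $\lambda$ are mutually absolutely continuous: $\mu_f(A) = \int_A f\, d\lambda$ vanishes if and only if $\lambda(A)=0$. So "$\lambda$-a.e.\ equal" and "$\mu_f$-a.e.\ equal" are the same relation, and the quotient spaces $L^2$ are literally the same set. The norms are then equivalent with constants $\sqrt{c_1}$ and $\sqrt{c_2}$.

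None of these steps is a real obstacle. The only point needing care is this last identification of null sets, which is what makes the statement about the $L^2$ \emph{spaces} rather than just about the integrals. The lower bound $c_1>0$ handles it, whereas the upper bound alone would not suffice.
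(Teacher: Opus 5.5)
Your proof is correct and follows the paper's argument: multiply the pointwise bound $c_1 \leq f \leq c_2$ by $g^2 \geq 0$ and integrate against $\lambda$. Your extra step also fills in a detail the paper leaves implicit when it concludes from the integral inequalities alone. That step observes that $f \geq c_1 > 0$ makes $\mu_f$ and $\lambda$ mutually absolutely continuous, so the two spaces have the same a.e.\ equivalence classes.
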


\begin{proof}
Let $g : [-1,1]^p \to \mathbb{R}$ be a measurable function. By definition of $\mu_f$ one has:
\begin{equation}
    \|g\|_{L^2(\mu_f)}^2 \;=\; \int_{[-1,1]^p} g(\mathbf x)^2 \, d\mu_f(\mathbf x) \;=\; \int_{[-1,1]^p} g(\mathbf x)^2 f(\mathbf x) \, d\lambda(\mathbf x).
\end{equation}
Using equation \eqref{eq:inequality}, we obtain:
\[
c_1 \int_{[-1,1]^p} g(\mathbf x)^2 \, d\lambda(\mathbf x) \;\leq\; \int_{[-1,1]^p} g(\mathbf x)^2 f(\mathbf x) \, d\lambda(\mathbf x) \;\leq\; c_2 \int_{[-1,1]^p} g(\mathbf x)^2 \, d\lambda(\mathbf x),
\]
which finally gives the norm equivalence. This implies that the two spaces have the same Cauchy sequences.
\end{proof}

\begin{definition}\label{def:operator}
    For every set $S \subseteq [p]$, let $\Pi_S$ be the orthogonal projection onto
    \begin{equation}
        V_S \coloneqq \overline{\mathrm{span}}\left\{ \psi_S^{(\bm m_S)} \;\middle|\; \bm m_S \in \mathbb{N}_+^{\lvert S \rvert} \right\},
    \end{equation}
    where $\overline{\mathrm{span}}$ denotes the closure of the linear span in $L^2([-1,1]^p, \lambda)$.
    Let also $\{w_S\}_{S \subseteq [p]}$ be a given collection of measurable functions, satisfying for all $S \subseteq [p]$ and all $\mathbf{x}_S \in [-1,1]^{\vert S \vert}$:
    \begin{equation}
        0 < \kappa_1 \leq w_S(\mathbf{x}_S) \leq \kappa_2 < \infty,
    \end{equation}
    for a given couple of positive constants $0 < \kappa_1 \leq \kappa_2 < \infty$.
    We define the operator $\mathcal{T}$ as follows:
    \begin{equation}
        \mathcal{T} : \left\{
        \begin{array}{rcl}
            L^2\!\left([-1,1]^p, \lambda\right) & \longrightarrow & L^2\!\left([-1,1]^p, \lambda\right) \\[4pt]
            u & \longmapsto & \displaystyle\sum_{S \subseteq [p]} w_S \, \Pi_S(u),
        \end{array}
        \right.
    \end{equation}
    Note that $\mathcal T$ depends on the given collection of functions $\{w_S\}_{S \subseteq [p]}$ but we simply denote it by $\mathcal T$ for simplicity.
\end{definition}

\begin{proposition}\label{prop:bounded}
    The operator $\mathcal T$ is bounded.
\end{proposition}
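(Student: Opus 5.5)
The plan is to bound $\|\mathcal T u\|_{L^2(\lambda)}$ directly by a constant times $\|u\|_{L^2(\lambda)}$. We use the triangle inequality over the finitely many subsets $S \subseteq [p]$, the pointwise bound on the weights, and the fact that orthogonal projections have norm at most one.

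First, linearity of $\mathcal T$ is immediate, since each $\Pi_S$ is linear and multiplication by the fixed function $w_S$ is linear. Next we check that each term $w_S\,\Pi_S(u)$ lies in $L^2([-1,1]^p,\lambda)$, so that $\mathcal T$ is well defined. Here $w_S$ is viewed as a function on $[-1,1]^p$ depending only on $\mathbf x_S$. For $u \in L^2(\lambda)$ we have the pointwise bound $|w_S(\mathbf x_S)\,\Pi_S(u)(\mathbf x)| \le \kappa_2\,|\Pi_S(u)(\mathbf x)|$ almost everywhere. Squaring and integrating gives
\begin{equation*}
\|w_S\,\Pi_S(u)\|_{L^2(\lambda)} \le \kappa_2\,\|\Pi_S(u)\|_{L^2(\lambda)} \le \kappa_2\,\|u\|_{L^2(\lambda)},
\end{equation*}
where the last inequality holds because $\Pi_S$ is an orthogonal projection onto the closed subspace $V_S$, so $\|\Pi_S\| \le 1$ by the Pythagorean identity.

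Summing over $S$ with the triangle inequality yields
\begin{equation*}
\|\mathcal T u\|_{L^2(\lambda)} \le \sum_{S\subseteq[p]} \|w_S\,\Pi_S(u)\|_{L^2(\lambda)} \le 2^p\,\kappa_2\,\|u\|_{L^2(\lambda)},
\end{equation*}
so $\|\mathcal T\| \le 2^p\kappa_2$.

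A sharper constant follows from the orthogonal decomposition $L^2(\lambda) = \bigoplus_S V_S$, which holds because the $\psi_S^{(\bm m_S)}$ form a Hilbert basis (Appendix \ref{appendix:legendre}). This gives $\sum_S \|\Pi_S u\|^2 = \|u\|^2$. Then Cauchy--Schwarz on the sum of norms gives $\|\mathcal T u\| \le \kappa_2 \sqrt{2^p}\,\|u\|$. Boundedness, however, only needs the crude estimate.

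No step is a real obstacle. The only point requiring care is measurability: $w_S\,\Pi_S(u)$ must be measurable and well defined almost everywhere, independently of the representative of $\Pi_S(u)$. This holds because $w_S$ is measurable and bounded, and changing $\Pi_S(u)$ on a null set changes the product only on a null set.
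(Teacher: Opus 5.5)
Your proof is correct and follows essentially the same route as the paper: the triangle inequality over the $2^p$ subsets, the pointwise bound $w_S \le \kappa_2$, and control of $\sum_S \|\Pi_S u\|$. The paper goes straight to the constant $\kappa_2 2^{p/2}$ via the orthogonal decomposition (implicitly Cauchy--Schwarz plus Pythagoras), which is exactly your sharpened bound; your cruder estimate $2^p\kappa_2$, obtained from $\|\Pi_S\| \le 1$, suffices just as well.
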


\begin{proof}
    Using that $\bigl( \psi_S^{(\bm m_S)} \bigr)_{S \subseteq [p],\, \bm m_S \in \mathbb{N}_+^{\lvert S \rvert}}$ is a Hilbert basis of $L^2\!\left([-1,1]^p, \lambda\right)$, we have the orthogonal decomposition
\begin{equation}
    L^2 \left([-1,1]^p, \lambda\right) = \bigoplus_{S \subseteq [p]}^{\perp} V_S.
\end{equation}
Consequently, any $u \in L^2\!\left([-1,1]^p, \lambda\right)$ admits the unique orthogonal decomposition
\begin{equation}
    u = \sum_{S \subseteq [p]} u_S, \qquad u_S \coloneqq \Pi_S(u).
\end{equation}
Let $u \in L^2\!\left([-1,1]^p, \lambda\right) $, we have:
\begin{align}
    \| \mathcal{T}(u) \|_{L^2( \lambda )} &= \left \| \sum\limits_{S \subseteq [p]} w_S u_S
    \right \| \\
    &\leq \sum\limits_{ S \subseteq [p] } \| w_S u_S \| \\
    &\leq \kappa_2 \sum\limits_{ S \subseteq [p] } \| u_S \| \\
    &\leq \kappa_2 2^{ p/2 } \| u \|
\end{align}
which gives the desired result.
\end{proof}

\begin{proposition}\label{prop:invertible}
    The operator $\mathcal T$ is invertible.
\end{proposition}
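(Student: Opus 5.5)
The plan is to exploit a \emph{block-triangular structure} of $\mathcal T$ with respect to the orthogonal decomposition $L^2([-1,1]^p,\lambda)=\bigoplus^{\perp}_{S\subseteq[p]} V_S$ used in the proof of Proposition~\ref{prop:bounded}. Then it suffices to invert the diagonal blocks, which will follow from a coercivity (Lax--Milgram) argument. For $R\subseteq[p]$, let $W_R$ denote the closed subspace of $L^2([-1,1]^p,\lambda)$ of functions depending only on $\mathbf x_R$. By the tensor Legendre basis, $W_R=\bigoplus^{\perp}_{T\subseteq R} V_T$: a function of $\mathbf x_R$ expands only on $\Phi_{\bm\alpha}$ with support $S_{\bm\alpha}\subseteq R$.

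\textbf{Step 1 (triangularity).} For $u_R\in V_R\subseteq W_R$, the product $w_R u_R$ is still a function of $\mathbf x_R$. It lies in $L^2$ since $|w_R|\le\kappa_2$, so $w_R u_R\in W_R$. Hence $\Pi_S(w_R u_R)=0$ unless $S\subseteq R$. Writing $u_R=\Pi_R u$ and $v=\mathcal T(u)$, we therefore get, for every $S$,
\begin{equation*}
\Pi_S v \;=\; D_S(u_S) \;+\; \sum_{R\supsetneq S} \Pi_S(w_R u_R),\qquad D_S: V_S\to V_S,\ g\mapsto \Pi_S(w_S g).
\end{equation*}

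\textbf{Step 2 (diagonal blocks).} Each $D_S$ is bounded with $\|D_S\|\le\kappa_2$. It is also coercive on $V_S$: for $g\in V_S$,
\begin{equation*}
\langle D_S g,g\rangle=\langle w_S g,g\rangle=\int w_S\, g^2\,d\lambda\ge\kappa_1\|g\|^2.
\end{equation*}
By Lax--Milgram, $D_S$ is a bijection of $V_S$ with $\|D_S^{-1}\|\le 1/\kappa_1$. This is the main point where the lower bound $\kappa_1>0$ enters. The only subtlety I expect is checking that $\Pi_S$ commutes correctly with the inner product, i.e.\ $\langle\Pi_S h,g\rangle=\langle h,g\rangle$ for $g\in V_S$; this holds because $\Pi_S$ is an orthogonal projection.

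\textbf{Step 3 (back-substitution).} Given any $v\in L^2$, define the components $u_S\in V_S$ by downward induction on $|S|$, starting from $S=[p]$:
\begin{equation*}
u_S\coloneqq D_S^{-1}\Big(\Pi_S v-\sum_{R\supsetneq S}\Pi_S(w_R u_R)\Big).
\end{equation*}
Set $u=\sum_S u_S$. By Step 1 we then have $\Pi_S\mathcal T(u)=\Pi_S v$ for all $S$, hence $\mathcal T(u)=v$, so $\mathcal T$ is surjective. For injectivity, suppose $\mathcal T(u)=0$. The same triangular system, solved in decreasing order of $|S|$, forces $D_S u_S=0$ and hence $u_S=0$ for every $S$.

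Finally, the inverse is bounded. Either invoke the bounded inverse theorem, using Proposition~\ref{prop:bounded}, or read off an explicit bound from the recursion: each step involves $\|D_S^{-1}\|\le1/\kappa_1$ and $\|\Pi_S(w_R\,\cdot)\|\le\kappa_2$, giving a constant depending only on $p,\kappa_1,\kappa_2$. Hence $\mathcal T$ is an isomorphism of $L^2([-1,1]^p,\lambda)$.
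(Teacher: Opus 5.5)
Your proof is correct and follows the paper's route. The paper likewise uses the block-triangular structure of $\mathcal T$ with respect to $\bigoplus^{\perp}_{S} V_S$ (because $w_R u_R$ depends only on $\mathbf x_R$), Lax--Milgram coercivity of the diagonal blocks $\Pi_S(w_S\,\cdot)$ on $V_S$, and triangular inversion; your back-substitution in decreasing $|S|$ just makes explicit the induction the paper delegates to a citation on block operator matrices, and it also gives an explicit bound on $\|\mathcal T^{-1}\|$.
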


\begin{proof}
    For every $S , S' \subseteq [p]$, we define the following operator:
    \begin{equation}
        A_{S , S'} : \left\{
        \begin{array}{rcl}
            V_S & \longrightarrow & V_{S'} \\[4pt]
            u_S & \longmapsto & \displaystyle (\Pi_{S'} \circ \mathcal{T}) (u_S)
        \end{array}
        \right.
    \end{equation}
    We have the following identity for every $u \in L^2( [-1,1]^p , \lambda )$:
    \begin{equation}
        \mathcal T(u) = \sum\limits_{ S \subseteq [p] } \sum\limits_{ S' \subseteq [p] } A_{S , S'}( \Pi_S(u) ).
    \end{equation}
   We denote by $L^2_S([-1,1]^{\lvert S \rvert}, \lambda)$ the Hilbert space of square-integrable functions depending only on the variables indexed by $S \subseteq [p]$. Observe that for every $S \subseteq [p]$, the operator $\mathcal{T}$ leaves this space invariant, i.e.,
\begin{equation}
    \mathcal{T}\left( L^2_S([-1,1]^{\lvert S \rvert}, \lambda) \right) 
    \subseteq L^2_S([-1,1]^{\lvert S \rvert}, \lambda).
\end{equation}
Indeed, let $S \subseteq [p]$, the space $ L^2_S([-1,1]^{\lvert S \rvert}, \lambda) $ admits the following orthogonal decomposition:
\begin{equation}
    L^2_S([-1,1]^{\lvert S \rvert}, \lambda) = \bigoplus_{T \subseteq S}^{\perp} V_T.
\end{equation}
So for any $u \in L^2_S([-1,1]^{\lvert S \rvert}, \lambda)$, $u$ uniquely expresses as $u = \sum_{T \subseteq S} \Pi_T(u)$. By applying $\mathcal T$ to this sum, we obtain:
\begin{equation}
    \mathcal{T}(u) = \sum_{T \subseteq S} w_T \Pi_T(u),
\end{equation}
which is obviously an element of $L^2_S([-1,1]^{\lvert S \rvert}$.
Furthermore, observe that we have the following decomposition:
\begin{equation}
    L^2 \left([-1,1]^p, \lambda\right) = \underbrace{\bigoplus_{T \subseteq S}^{\perp} V_T}_{ L^2_S( [-1,1]^{ \vert S \vert } , \lambda ) } \; \bigoplus \; \bigoplus_{S' \not\subseteq S}^{\perp} V_{S'}
\end{equation}
    A direct consequence of this stabilization property using the previous decomposition is:
    \begin{equation}
        \forall S,S' \subseteq [p], \quad S' \not \subseteq S \implies A_{S,S'} = 0,
    \end{equation}
    Consequently, the operator matrix $\mathbf A := ( A_{S,S'} )_{S , S' \subseteq [p]}$ is triangular (an illustration in dimension 3 is provided in \eqref{eq:triangle}). To show the invertibility of $\mathcal T$, it suffices to prove that all the diagonal blocs are invertible \cite{tretter2008spectral}, indeed in any finite partially-ordered family of blocks where off-diagonal blocks vanish outside $S' \subseteq S$, invertibility follows by induction on $\vert S \vert$. Let $S \subseteq [p]$, we will show that $A_{S,S}$ is invertible. First, $A_{S,S}$ is clearly a linear operator and so an endomorphism of $V_S$. Let take $u_S \in V_S$, we have:
    \begin{align}
        \left\langle A_{S,S}(u_S) , u_S \right\rangle &= \left\langle (\Pi_{S} \circ \mathcal{T}) (u_S) , u_S \right\rangle \\
        &= \left\langle \Pi_{S} ( w_S u_S ) , u_S \right\rangle \\
        &= \left\langle w_S u_S , \Pi_{S}^{\star} (u_S) \right\rangle \\
        &= \left\langle w_S u_S , \Pi_{S} (u_S) \right\rangle \\
        &= \left\langle w_S u_S , u_S \right\rangle \\
        &= \int w_S {u_S}^2 d\lambda \\
        &\geq \kappa_1 \| u_S \|_{ L^2( \lambda ) }^2,
    \end{align}
    so the operator $A_{S,S}$ is \emph{coercive}. A similar computation would show that the operator is \emph{bounded}, \textit{i.e.}
    \begin{equation}
        \forall u_S , v_S \in V_S, \quad \vert \left\langle A_{S,S}(u_S) , v_S \right\rangle \vert \leq \kappa_2 \| u_S \|_{ L^2( \lambda ) } \| v_S \|_{ L^2( \lambda ) }.
    \end{equation}
    Finally, by applying the Lax-Milgram theorem \cite{brezis2011functional}, we obtain that $A_{S,S}$ is invertible and the desired result.
\end{proof}

\begin{remark}
    Let $p = 3$. We illustrate the triangular structure of the matrix $\mathbf{A}$ and of the operator $\mathcal{T}$. Order the subsets of $[p]$ as follows:
    \begin{equation}
        \left\{ \emptyset,\ \{1\},\ \{2\},\ \{3\},\ \{1,2\},\ \{1,3\},\ \{2,3\},\ \{1,2,3\} \right\}.
    \end{equation}
    The coefficient $A_{S,S'}(u_S)$ denotes the projection onto $V_{S'}$ of $w_S u_S$, which belongs to $L^2_S\bigl([-1,1]^{|S|}, \lambda\bigr)$. In the case $p = 3$, the matrix takes the form
\begin{equation}\label{eq:triangle}
\renewcommand{\arraystretch}{1.25}
\setlength{\arraycolsep}{6pt}
\begin{blockarray}{ccccccccc}
        & \emptyset & \{1\} & \{2\} & \{3\} & \{1,2\} & \{1,3\} & \{2,3\} & \{1,2,3\} \\
\begin{block}{c[cccccccc]}
\emptyset   & \ast &         &         &         &         &         &         &         \\
\{1\}       & \ast & \ast &         &         &         &         &         &         \\
\{2\}       & \ast &         & \ast &         &         &         &         &         \\
\{3\}       & \ast &         &         & \ast &         &         &         &         \\
\{1,2\}     & \ast & \ast & \ast &         & \ast &         &         &         \\
\{1,3\}     & \ast & \ast &         & \ast &         & \ast &         &         \\
\{2,3\}     & \ast &         & \ast & \ast &         &         & \ast &         \\
\{1,2,3\}   & \ast & \ast & \ast & \ast & \ast & \ast & \ast & \ast \\
\end{block}
\end{blockarray}
\end{equation}
where $\ast$ indicates entries that are potentially non-zero.
\end{remark}

\begin{proposition}\label{prop:hom}
    $\mathcal T$ is a linear homeomorphism of $L^2( [-1,1]^p , \lambda )$.
\end{proposition}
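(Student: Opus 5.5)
The plan is to combine Propositions~\ref{prop:bounded} and~\ref{prop:invertible} with the bounded inverse theorem. Linearity of $\mathcal T$ is immediate: each $\Pi_S$ is linear, and multiplication by the fixed bounded function $w_S$ is linear. Proposition~\ref{prop:bounded} gives continuity of $\mathcal T$ on $L^2([-1,1]^p,\lambda)$. Proposition~\ref{prop:invertible} gives that $\mathcal T$ is a bijection of $L^2([-1,1]^p,\lambda)$ onto itself. Since $L^2([-1,1]^p,\lambda)$ is a Banach (indeed Hilbert) space, the Banach open mapping theorem (bounded inverse theorem, see \cite{brezis2011functional}) yields that $\mathcal T^{-1}$ is continuous. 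Hence $\mathcal T$ is a linear homeomorphism.

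The only point requiring care is that Proposition~\ref{prop:invertible} really delivers a two-sided set-theoretic inverse on the whole space, not only invertibility of the diagonal blocks $A_{S,S}$. To make this explicit, I would solve $\mathcal T(u)=v$ block by block along the triangular structure. Projecting onto $V_S$ gives
\begin{equation*}
\Pi_S v=\sum_{T\supseteq S}A_{T,S}(\Pi_T u).
\end{equation*}
Here I use that $A_{T,S}=0$ unless $S\subseteq T$. I would then proceed by downward induction on $|S|$, starting from $S=[p]$, where only $A_{[p],[p]}$ appears. At each step the unknown $\Pi_S u=A_{S,S}^{-1}\bigl(\Pi_S v-\sum_{T\supsetneq S}A_{T,S}\Pi_T u\bigr)$ is uniquely determined. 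This gives existence and uniqueness of $u$, so $\mathcal T$ is bijective.

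This recursion also gives a direct bound on $\mathcal T^{-1}$, which avoids the open mapping theorem altogether if one prefers. By Lax--Milgram, $\|A_{S,S}^{-1}\|\le 1/\kappa_1$. Each off-diagonal block satisfies $\|A_{T,S}\|\le\kappa_2$. Since there are finitely many ($2^p$) blocks, an induction gives $\|\Pi_S u\|\le C(p,\kappa_1,\kappa_2)\|v\|$. Summing over $S$ then yields $\|\mathcal T^{-1}v\|\le C'\|v\|$.

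I expect the main (mild) obstacle to be this bookkeeping in the triangular inversion: keeping the direction of the ordering consistent with the convention $A_{S,S'}=0$ for $S'\not\subseteq S$. Otherwise the result follows directly from the two preceding propositions together with the bounded inverse theorem.
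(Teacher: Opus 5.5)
Your proposal is correct and follows the paper's proof: boundedness from Proposition~\ref{prop:bounded}, bijectivity from Proposition~\ref{prop:invertible}, and the Banach--Schauder (bounded inverse) theorem for continuity of $\mathcal T^{-1}$. Your downward recursion over $S$ starting at $S=[p]$, using $A_{T,S}=0$ unless $S\subseteq T$, $\|A_{S,S}^{-1}\|\le 1/\kappa_1$ and $\|A_{T,S}\|\le\kappa_2$, correctly spells out the induction the paper only sketches in Proposition~\ref{prop:invertible}, and it also gives an explicit bound on $\mathcal T^{-1}$, so the open mapping theorem is not actually needed.
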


\begin{proof}
    As a direct consequence of proposition \ref{prop:bounded} and proposition \ref{prop:invertible}, $\mathcal T$ is a bounded automorphism of $L^2( [-1,1]^p , \lambda )$. Using the Banach-Schauder theorem \cite{brezis2011functional}, $\mathcal T$ is a homeomorphism.
\end{proof}

\begin{proposition}[Hierarchical orthogonality]\label{prop:hierar_ortho}
The family $\Xi$ satisfies the hierarchical orthogonality condition in $L^2( [-1,1]^p , \mu_f )$. More formally, we have:
\begin{equation}
    \forall T \subsetneq S \subseteq [p], \forall \bm{m}_S \in \mathbb{N}_+^{\vert S \vert}, \forall \bm{n}_T \in \mathbb{N}_+^{\vert T \vert}, \quad \mathbb{E}\left[ \xi_S^{(\bm{m}_S)}( \mathbf X ) \cdot \xi_T^{(\bm{n}_T)}( \mathbf X ) \right] = 0.
\end{equation}
In particular, all basis elements are centered, \textit{i.e.}
\begin{equation}
    \forall S \subseteq [p] \setminus \emptyset, \quad \mathbb{E}\left[ \xi_S^{(\bm{m}_S)}( \mathbf X ) \right] = 0.
\end{equation}
\end{proposition}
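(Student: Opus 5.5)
The plan is a direct computation: write the expectation as an integral against $f$, use the $1/f_S$ factor in $\xi_S^{(\bm m_S)}$ to turn $\mu_f$ into Lebesgue measure in the $S$-coordinates, and then use the vanishing integrals of nonconstant Legendre polynomials. Fix $T \subsetneq S \subseteq [p]$, $\bm m_S \in \mathbb{N}_+^{|S|}$ and $\bm n_T \in \mathbb{N}_+^{|T|}$. Both $\xi_S^{(\bm m_S)}$ and $\xi_T^{(\bm n_T)}$ depend only on $\mathbf x_S$, since $T \subset S$. Integrating out the coordinates outside $S$ therefore replaces $f$ by the marginal $f_S$:
\begin{equation*}
\mathbb{E}\left[ \xi_S^{(\bm m_S)}(\mathbf X)\, \xi_T^{(\bm n_T)}(\mathbf X) \right] = \int_{[-1,1]^{|S|}} \xi_S^{(\bm m_S)}(\mathbf x_S)\, \xi_T^{(\bm n_T)}(\mathbf x_T)\, f_S(\mathbf x_S)\, d\mathbf x_S .
\end{equation*}
By Assumption~\ref{assu:densite} and Fubini, $f_S$ is bounded below by $c_1 2^{p-|S|} > 0$. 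Hence the factor $1/f_S$ in $\xi_S^{(\bm m_S)}$ cancels exactly with $f_S$. This is the inverse likelihood weighting mechanism at work. Up to the constant $2^{-(p-|S|)/2}$, we are left with
\begin{equation*}
\int_{[-1,1]^{|S|}} \prod_{j \in S} \widetilde P_{m_j}(\mathbf x_j)\; \xi_T^{(\bm n_T)}(\mathbf x_T)\, d\mathbf x_S .
\end{equation*}

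Next, pick an index $j_0 \in S \setminus T$, which exists because $T \subsetneq S$. The factor $\xi_T^{(\bm n_T)}$ depends only on $\mathbf x_T$, and therefore not on $\mathbf x_{j_0}$. The remaining integrand is a product of $\widetilde P_{m_{j_0}}(\mathbf x_{j_0})$ with a function of the other coordinates. By Fubini, the integral factorises and contains the factor
\begin{equation*}
\int_{-1}^{1} \widetilde P_{m_{j_0}}(x)\, dx .
\end{equation*}
This factor vanishes because $m_{j_0} \geq 1$: it equals $\sqrt{2}$ times $\int_{-1}^{1} \widetilde P_{m_{j_0}} \widetilde P_0\, dx = 0$, by Legendre orthogonality (equivalently, by the Remark on orthogonality to the constant).

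The only genuine point to verify is integrability, so that Fubini applies. Each Legendre polynomial is bounded on $[-1,1]$. The function $1/f_S$ is bounded by $1/(c_1 2^{p-|S|})$, and $1/f_T$ is bounded by $1/(c_1 2^{p-|T|})$. The integrand is therefore bounded on a compact set, hence integrable. For $T=\emptyset$ one has $\xi_\emptyset = 1$, and the same argument gives the centering statement $\mathbb{E}[\xi_S^{(\bm m_S)}(\mathbf X)] = 0$ for every $S \neq \emptyset$. I expect no real obstacle here. The step most likely to need care is the lower bound on $f_S$ inherited from $f$, which justifies both the cancellation and the boundedness.
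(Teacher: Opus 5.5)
Your proof is correct and follows essentially the same route as the paper: reduce to the marginal $f_S$, cancel it against the $1/f_S$ factor in $\xi_S^{(\bm m_S)}$, and use Fubini to isolate the integral of a nonconstant Legendre polynomial, which vanishes. The differences are cosmetic. The paper factors out the whole block $C = S \setminus T$ and integrates $\psi_C^{(\bm m_C)}$, where you single out one index $j_0$. You also make explicit the lower bound on $f_S$ and the integrability check that the paper leaves implicit.
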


\begin{proof}
Let us consider $T \subsetneq S \subseteq [p]$, $\bm m_S \coloneqq (m_s)_{s \in S} \in \mathbb{N}_+^{ \vert S \vert }$, $\bm n_T \coloneqq (n_t)_{t \in T} \in \mathbb{N}_+^{ \vert T \vert }$. For simplicity, we define $C := S \setminus T$ and consider $ \bm m_C \coloneqq (m_c)_{c \in C} $. We have:
\begin{align}
    \mathbb{E}\left[ \xi_S^{(\bm{m}_S)}( \mathbf X ) \cdot \xi_T^{(\bm{n}_T)}( \mathbf X ) \right] &= \int_{[-1,1]^p} \xi_S^{(\bm{m}_S)}( \mathbf x ) \cdot \xi_T^{(\bm{n}_T)}( \mathbf x ) \cdot f( \mathbf x ) d \lambda( \mathbf x ), \\
    &= \int_{[-1,1]^{ \vert S \vert }} \xi_S^{(\bm{m}_S)}( \mathbf x ) \cdot \xi_T^{(\bm{n}_T)}( \mathbf x ) \cdot f_S( \mathbf x_S ) d\lambda( \mathbf x_S ), \\
    &\propto \int_{[-1,1]^{ \vert S \vert }} \frac{ \prod\limits_{s \in S} \widetilde{P}_{m_s}(\mathbf x_s) }{\textcolor{red}{f_S( \mathbf x_S )}} \frac{ \prod\limits_{t \in T} \widetilde{P}_{n_t}(\mathbf x_t) }{f_T( \mathbf x_T )} \textcolor{red}{f_S( \mathbf x_S )} d\lambda( \mathbf x_S ), \\
    &\propto \int_{[-1,1]^{ \vert S \vert }} \prod\limits_{s \in S} \widetilde{P}_{m_s}(\mathbf x_s) \frac{ \prod\limits_{t \in T} \widetilde{P}_{n_t}(\mathbf x_t) }{f_T( \mathbf x_T )} d\lambda( \mathbf x_S ), \\
    &\propto \int_{[-1,1]^{ \vert S \vert }} \textcolor{blue}{\underbrace{\prod\limits_{c \in C} \widetilde{P}_{m_c}(\mathbf x_c)}_{ \sqrt{2^{p - \vert C \vert}} \psi_{C}^{( \bm m_{C} )}( \mathbf x_{ C } ) }} \cdot \textcolor{orange}{ \underbrace{\frac{\prod\limits_{t \in T} \widetilde{P}_{m_t}(\mathbf x_t) \widetilde{P}_{n_t}(\mathbf x_t) }{f_T( \mathbf x_T )}}_{u( \mathbf x_T )}} d\lambda( \mathbf x_S ), \\
    &\propto \textcolor{blue}{\underbrace{\int_{ [-1,1]^{ \vert C \vert } } \psi_{C}^{( \bm m_{C} )}( \mathbf x_{ C } ) d\lambda( \mathbf x_{C} )}_{ = 0}} \cdot \textcolor{orange}{\int_{[-1,1]^{\vert T \vert}} u( \mathbf x_T ) d \lambda( \mathbf x_T )}, \\
    &= 0.
\end{align}
\end{proof}

\begin{proof}[Proof of Theorem \ref{thm:representation}]

First, observe that since $c_1 \leq f \leq c_2$, we have for all sets $S \subseteq [p]$:
\begin{equation}
    c_1 2^{ p - \vert S \vert } \leq f_S \leq c_2 2^{ p - \vert S \vert }.
\end{equation}
Consequently, a uniform bound for all $f_S$ is given by:
\begin{equation}
    c_1 \leq f_S \leq c_2 2^p.
\end{equation}

Moreover, recall that our proposed collection of functions $\Xi$ is given by:
\begin{equation}
        \forall \mathbf x \in [-1,1]^p, \quad \Xi( \mathbf x ) = \left( \left( \frac{ \psi_S^{( \bm m_S )} (\mathbf x) }{ f_S( \mathbf x_S ) } \right)_{ \bm{m}_S \in \mathbb{N}_+^{\vert S \vert} } \right)_{ S \subseteq [p] }.
    \end{equation}
In this case, we obviously have:
\begin{equation}
    \forall S \subseteq [p], \forall \bm m_S \in \mathbb{N}_+^{ \vert S \vert }, \quad \xi_S^{ ( \bm m _S ) } = \mathcal{T}( \psi_S^{( \bm m_S )} ),
\end{equation}
by defining the weight functions $w_S$ as follows: $ w_S \coloneqq 1 / f_S $ with $\kappa_1 \coloneqq 1/(c_2 2^p)$ and $\kappa_2 \coloneqq 1/c_1$.
Thanks to Proposition \ref{prop:hom}, $\Xi$ is the direct image of a Hilbert basis by a homeomorphism $\mathcal T$ which implies that $\Xi$ is a \emph{Riesz basis} of $L^2( [-1,1]^p , \lambda )$. In particular, by Lemma \ref{lemma:equiv_norm} the family $\Xi$ is also a Riesz basis of $L^2( [-1,1]^p , \mu_f )$. As a direct consequence of this result, any function $\nu \in L^2( [-1,1]^p , \mu_f )$ admits the following unique representation:
\begin{equation}
    \nu( \mathbf X ) = \sum\limits_{S \subseteq [p]} \sum\limits_{\bm m_S \in \mathbb N_+^{\vert S \vert}} a_S^{(\bm m_S)} \cdot \xi_S^{(\bm m_S)}(\mathbf X),
\end{equation}
where $\bm{a} \coloneqq \big(a_S^{(\bm{m}_S)}\big)_{S \subseteq [p],\, \bm{m}_S \in \mathbb{N}_+^{|S|}}$ is a unique collection of real coefficients entirely determined by $\nu$ and $f$. For every $S \subseteq [p]$, the ANOVA components $\nu_S$ are naturally defined by:
\begin{equation}
    \nu_S( \mathbf X_S ) \coloneqq \sum\limits_{\bm m_S \in \mathbb N_+^{\vert S \vert}} a_S^{(\bm m_S)} \cdot \xi_S^{(\bm m_S)}(\mathbf X).
\end{equation}
Furthermore, these components obviously satisfy the hierarchical orthogonality constraint \eqref{eq:fanova_orthogonality}, thanks to the proposition \ref{prop:hierar_ortho}. Finally, the representation given in the basis $\Xi$ is the exact analytical solution to the problem formulated by \cite{hooker_2007}.
\end{proof}

\begin{proposition}[Recovering orthogonality]\label{prop:ortho}
    When the components of $\mathbf X$ are mutually independent, we have:
    \begin{equation}
        \forall S,T \subseteq [p], \quad S \neq T \implies \mathbb E\left[ \nu_S( \mathbf X_S ) \cdot \nu_T( \mathbf X_T ) \right] = 0.
    \end{equation}
\end{proposition}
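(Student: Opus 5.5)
We prove orthogonality at the level of basis elements and then pass to the components by continuity. Under independence we have $f(\mathbf x)=\prod_{j=1}^p f_j(\mathbf x_j)$, and therefore $f_S(\mathbf x_S)=\prod_{j\in S} f_j(\mathbf x_j)$ for every $S\subseteq[p]$. Each basis function then factorizes coordinatewise as
\begin{equation*}
\xi_S^{(\bm m_S)}(\mathbf x)=\frac{1}{\sqrt{2^{p-|S|}}}\prod_{j\in S}\frac{\widetilde P_{m_j}(\mathbf x_j)}{f_j(\mathbf x_j)}.
\end{equation*}
Fix $S\neq T$, $\bm m_S$ and $\bm n_T$. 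Without loss of generality pick an index $k\in S\setminus T$; if no such index exists, swap the roles of $S$ and $T$.

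\textbf{Step 1 (basis elements).} Since $f$ is a product density, the expectation $\mathbb E[\xi_S^{(\bm m_S)}(\mathbf X)\,\xi_T^{(\bm n_T)}(\mathbf X)]$ factorizes over coordinates, up to a positive constant. The factor belonging to coordinate $k$ is
\begin{equation*}
\int_{-1}^1 \frac{\widetilde P_{m_k}(x)}{f_k(x)}\,f_k(x)\,dx=\int_{-1}^1\widetilde P_{m_k}(x)\,dx=0,
\end{equation*}
because $m_k\geq 1$ and $\widetilde P_{m_k}$ is orthogonal to $\widetilde P_0$, which is constant. The remaining factors are finite, since each $f_j$ is bounded below by $c_1 2^{-(p-1)}$ as shown in the proof of Theorem~\ref{thm:representation}. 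This reproduces the cancellation used in Proposition~\ref{prop:hierar_ortho}, with one difference: $k$ need not be chosen with $T\subsetneq S$. The only requirement is that $k$ lies in exactly one of the two index sets, and independence is what lets the integral over $\mathbf x_k$ separate from the rest.

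\textbf{Step 2 (components).} By Theorem~\ref{thm:representation}, $\nu_S=\sum_{\bm m_S}a_S^{(\bm m_S)}\xi_S^{(\bm m_S)}$, and this series converges in $L^2(\mu_f)$ because $\Xi$ is a Riesz basis. The same holds for $\nu_T$. Truncating both series to finite sums, bilinearity and Step 1 give $\mathbb E[\nu_S^{(N)}\nu_T^{(N)}]=0$ for the partial sums. The inner product is continuous in $L^2(\mu_f)$, so letting $N\to\infty$ yields $\mathbb E[\nu_S(\mathbf X_S)\nu_T(\mathbf X_T)]=0$.

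\textbf{Main obstacle.} The main obstacle is the limit exchange in Step 2. It requires that the expansion of each $\nu_S$ converges in $L^2(\mu_f)$ on its own, not only the full sum over all $S$. I would justify this by observing that $\nu_S$ is the image of an element of the closed subspace spanned by $\{\xi_S^{(\bm m_S)}\}$ under the bounded projection associated with the Riesz basis decomposition, so each component series is a convergent subseries. Everything else is a direct computation.
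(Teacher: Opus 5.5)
Your proof is correct and follows the paper's argument. You factor the basis functions using the product density, choose an index lying in exactly one of $S$, $T$, and kill the expectation with $\int_{-1}^{1}\widetilde{P}_{m_k}(x)\,dx=0$; your Step 2 (each component series converges because Riesz expansions converge unconditionally, and the inner product is continuous) usefully makes explicit the passage from basis elements to components that the paper compresses into ``it suffices to prove''. One slip in the bound: the proof of Theorem~\ref{thm:representation} actually gives $f_j \geq c_1 2^{p-1}$, stronger than the $c_1 2^{-(p-1)}$ you quote, though your bound is still valid and harmless.
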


\begin{proof}
    To show that the resulting decomposition is mutually orthogonal by set, it suffices to prove that for any sets $S,T$ such that $S \neq T$ and for any vector of indices $\bm m_S , \bm n_T$, one has:
    \begin{equation}
        \mathbb E \left[ \xi_S^{ ( \bm m_S ) }( \mathbf X) \cdot \xi_T^{ ( \bm n_T ) }( \mathbf X) \right] = 0.
    \end{equation}
    Under independence, the density expresses as the following product $f = f_1 \cdot \dots \cdot f_p$. The elements of our basis becomes:
    \begin{equation}
        \forall S \subseteq [p], \forall \bm m_S \in \mathbb N_+^{ \vert S \vert }, \quad \xi_S^{(\bm m_S)} = \frac{1}{ \sqrt{ 2^{ p - \vert S \vert } } } \cdot \prod\limits_{j \in S} \frac{ \widetilde{P}_{m_j} }{f_j}.
    \end{equation}
    Let $S \neq T \subseteq [p]$, without loss of generality, we fix $s^{\star} \in S$ such that $s^{\star} \notin T$. We have:
    \begin{align}
        \mathbb E \left[ \xi_S^{ ( \bm m_S ) }( \mathbf X) \cdot \xi_T^{ ( \bm n_T ) }( \mathbf X) \right] &= \mathbb E \left[ \frac{1}{ \sqrt{ 2^{ p - \vert S \vert } } } \prod\limits_{s \in S} \frac{ \widetilde{P}_{m_s}( \mathbf X_s ) }{f_s ( \mathbf X_s )} \cdot \frac{1}{ \sqrt{ 2^{ p - \vert T \vert } } } \prod\limits_{t \in T} \frac{ \widetilde{P}_{n_t} ( \mathbf X_t ) }{f_t ( \mathbf X_t )} \right] \\
        &\propto \mathbb E \left[ \prod\limits_{s \in S} \frac{ \widetilde{P}_{m_s}( \mathbf X_s ) }{f_s ( \mathbf X_s )} \cdot \prod\limits_{t \in T} \frac{ \widetilde{P}_{n_t} ( \mathbf X_t ) }{f_t ( \mathbf X_t )} \right] \\
        &\propto\mathbb{E}\left[ \textcolor{blue}{ \underbrace{\frac{ \widetilde{P}_{m_{s^{\star}}}( \mathbf X_{s^{\star}} ) }{f_{s^{\star}} ( \mathbf X_{s^{\star}} )} }_{ u( \mathbf X_{s^{\star}} ) }} \cdot \textcolor{orange}{ \underbrace{\prod\limits_{s \in S \setminus\{s^{\star}\}} \frac{ \widetilde{P}_{m_s}( \mathbf X_s ) }{f_s ( \mathbf X_s )} \cdot \prod\limits_{t \in T} \frac{ \widetilde{P}_{n_t} ( \mathbf X_t ) }{f_t ( \mathbf X_t )}}_{ v( \mathbf X_{ [p] \setminus \{ s^{\star} \} } ) } } \right] \\
        &\propto \textcolor{blue}{ \mathbb{E}\left[ \frac{ \widetilde{P}_{ m_{ s^{\star} } }( \mathbf X_{ s^{\star} } ) }{ f_{s^{\star}}( \mathbf X_{ s^{\star} } ) } \right] } \cdot \textcolor{orange}{ \mathbb{E}\left[ \prod\limits_{s \in S \setminus\{s^{\star}\}} \frac{ \widetilde{P}_{m_s}( \mathbf X_s ) }{f_s ( \mathbf X_s )} \cdot \prod\limits_{t \in T} \frac{ \widetilde{P}_{n_t} ( \mathbf X_t ) }{f_t ( \mathbf X_t )} \right] } \\
        &\propto \mathbb{E}\left[ \frac{ \widetilde{P}_{ m_{ s^{\star} } }( \mathbf X_{ s^{\star} } ) }{ f_{s^{\star}}( \mathbf X_{ s^{\star} } ) } \right] \\
        &\propto \int_{-1}^{1} \frac{ \widetilde{P}_{ m_{ s^{\star} } }( x ) }{ \textcolor{red}{f_{s^{\star}}( x ) }} \textcolor{red}{f_{s^{\star}}( x )} dx \\
        &\propto \int_{-1}^{1} \widetilde{P}_{ m_{ s^{\star} } }( x ) dx \\
        &= 0.
    \end{align}
\end{proof}

\section{Our estimator in a nutshell}\label{appendix:estimator}

In this section, we detail all the computations behind our method to estimate the functional ANOVA from a mere data sample. Throughout this section, we denote by $\mathcal X := \{ \mathbf X^{(i)} \}_{i=1}^{n}$ and $ \mathcal Y := \{ \nu( \mathbf X^{(i)} ) \}_{i=1}^{n}$ the data, and $\{ \mathcal X , \mathcal Y \}$ is the corresponding data sample. If in the vast majority of datasets, one can assume that the features are bounded, they rarely live natively in $[-1,1]^p$.

\paragraph{Feature scaling.} The first step to apply our framework on a real world dataset is a feature scaling in $[-1,1]$. We propose to address this by first centering and reducing our data and then applying $\tanh$ to each feature. This can be described as follows:
\begin{equation}
\begin{tikzpicture}[
  baseline=(current bounding box.center),
  font=\small,
  >={Stealth[length=2mm, width=1.6mm]},
  node distance=8mm,
  box/.style={rectangle, rounded corners=3pt, draw=black, line width=0.4pt,
              fill=blue!10, inner sep=4pt,
              minimum height=8mm, minimum width=14mm},
  arrlbl/.style={font=\scriptsize\sffamily, midway, above=2pt}
]
  \node[box] (X)                  {$\mathcal{X}$};
  \node[box, right=25mm of X]     (X01) {$\mathcal{X}_{(0,1)}$};
  \node[box, right=25mm of X01]   (Xs)  {$\mathcal{X}_{\mathrm{scaled}}$};

  \draw[->] (X)   -- (X01)
    node[arrlbl] {$\mathrm{StandardScaling}$};
  \draw[->] (X01) -- (Xs)
    node[arrlbl] {$\tanh$};
\end{tikzpicture}
\end{equation}

\begin{tcolorbox}[red_style]
    Features are standardized to zero mean and unit variance, using the \texttt{StandardScaler} from \texttt{scikit-learn} \cite{pedregosa2011scikit}.
\end{tcolorbox}

\paragraph{Density estimation.} We follow the procedure initially introduced by \cite{cencov1962estimation}. From now on, we assume that the feature sample takes values in $[-1,1]^p$. Our theoretical guarantees depend on the true marginal densities $f_S$ for $S \subseteq [p]$, which are unavailable in practice and must be estimated from the data. We propose to also estimate each $f_S$ using a truncated tensorized Legendre expansion. In practice, for any $S \subseteq [p]$ and a truncation degree $d_{\text{density}}$, we approximate $f_S$ as follows:
\begin{equation}
    f_S \;\approx\; \sum_{\bm{m} \in \{0,\dots,d_{\text{density}}\}^{|S|}} c_{S}^{(\bm{m})} \prod_{j \in S} \widetilde{P}_{m_j},
\end{equation}
By orthonormality of the tensorized basis, each coefficient $c_{S}^{(\bm{m})}$ is estimated as follows:
\begin{equation}
    \widehat{c}_{S,n}^{(\bm{m})} \;:=\; \frac{1}{n}\sum_{i=1}^{n} \prod_{j \in S} \widetilde{P}_{m_j}\!\left(\mathbf{X}^{(i)}_{j}\right).
\end{equation}
This choice is motivated by two considerations. First, it keeps the whole pipeline fully transparent and free of additional black-box components: each coefficient reduces to an empirical average, the estimator is trained from scratch with a single hyperparameter (the truncation degree $d_{\text{density}}$), and it integrates seamlessly with the rest of our analysis, which is already expressed in the Legendre basis. Second, and more importantly, our goal is not to recover $f_S$ pointwise but to enforce the hierarchical orthogonality constraint~\eqref{eq:fanova_orthogonality}; since capturing a \emph{global trend} of $f_S$ is enough to guarantee this condition, the pointwise quality of the density estimate is largely irrelevant to our objective.
\begin{remark}
In theory the number of densities to be estimated scales as $\mathcal{O}(p^K)$, but in practice we restrict ourselves to $K \in \{1,2\}$, which keeps the estimation step tractable even in moderately high dimension.
\end{remark}
\begin{remark}[Density clipping]
In practice, we replace the raw estimate $\hat f_S$ by $\max(\varepsilon, \hat f_S)$ (typically $\varepsilon = 10^{-2}$), in order to avoid numerical instabilities caused by regions where the estimated density is very close to zero (or negative, since the projection estimator is not constrained to be nonnegative).
\end{remark}

\begin{tcolorbox}[red_style]
    Legendre polynomials are evaluated using \texttt{eval\_legendre} from \texttt{SciPy} \cite{virtanen2020scipy}. The truncation degree $d_{\text{density}}$ and the clipping parameter $\varepsilon$ constitute two hyperparameters of the method, referred to as \texttt{deg\_density} and \texttt{density\_clip} respectively.
\end{tcolorbox}

\paragraph{Design matrix.} With a slight abuse of notation, we still write $f_S$ for the clipped estimated densities. We then form the \emph{design matrix} associated with our basis functions,
\begin{equation}
    \mathbf{B} \;:=\; \left( \xi_S^{(\bm{m}_S)}(\mathbf{X}^{(i)}) \right)_{\substack{i \in \{1,\dots,n\} \\ (S, \bm{m}_S) \in \mathcal{I}_K^{d}}} \;\in\; \mathbb{R}^{n \times N(p,K,d)}.
\end{equation}

\begin{tcolorbox}[red_style]
The numerator of the design matrix is similarly computed via Legendre polynomials
using \texttt{eval\_legendre} from \texttt{SciPy}. The resulting matrix
$\mathbf{B}$ is stored as a two-dimensional \texttt{NumPy} array.
\end{tcolorbox}

\paragraph{Linear solving.} The estimator defined in~\eqref{eq:estimator_a} can now be expressed as follows:
\begin{equation}
    \widehat{\bm a}_n^{K,d} \;\in\; \operatornamewithlimits{argmin}_{\bm \beta \in \mathbb{R}^{N(p,K,d)}} \left\| \mathbf{B}\, \bm \beta \;-\; \bm{y} \right\|_2^2, \qquad \bm{y} \;:=\; \left(\nu(\mathbf{X}^{(i)})\right)_{1 \leq i \leq n} \in \mathbb{R}^{n},
\end{equation}
where $ \| \cdot \|_2 $ denotes the Euclidean norm on $\mathbb{R}^n$. We proceed in two stages: we first select a relevant subset of columns via a LARS procedure~\cite{Efron_2004,blatman2011adaptive}, and then solve the resulting reduced system using an SVD.

\begin{tcolorbox}[red_style]
Model selection is carried out using \texttt{LassoLarsIC} from \texttt{scikit-learn}, which traverses the LARS regularization path and selects the optimal support via the Bayesian Information Criterion (BIC). This yields a subset of column indices defining the reduced design matrix $\mathbf{B}_{\text{red}}$. The corresponding least-squares system is then solved via \texttt{numpy.linalg.lstsq} \cite{harris2020array}, which computes the minimum-norm solution through a thin SVD decomposition.
\end{tcolorbox}

\paragraph{Estimator.} We denote by $\widehat{\nu}$ the resulting estimator, with
$\widehat{\nu}_S$ its component associated to each subset $S \subseteq [p]$. The
full decomposition over the dataset denoted \texttt{X,y} is obtained by calling a function of the following arguments:
\begin{center}
\texttt{K, d, d\_density, density\_clip, X, y}.
\end{center}

\begin{remark}
    As a final step, we recenter the resulting ANOVA components so that 
    $\nu_{\emptyset}$ captures the global mean. This operation comes at 
    negligible computational cost and merely translates the components, 
    leaving the structure of $\Xi$ and the hierarchical orthogonality 
    unchanged. Prior to this step, the components were already nearly 
    centered by construction; the residual offset is an empirical bias due to finite-sample approximation error, and correcting for it at essentially no cost is therefore worthwhile.
\end{remark}

\begin{remark}
    The $\tanh$ transformation, applied component-wise, maps the features into $[-1,1]^p$, as required by our framework. We emphasize that this transformation is applied \emph{solely within our explainer}: the predictive model is trained on the original data, and the baseline explainers we compare against likewise operate on the untransformed data. The comparison is therefore fair, since the underlying function of interest is identical across all methods; only its decomposition is computed in a different coordinate system.

    Moreover, by uniqueness of the functional ANOVA decomposition, this change of variables leaves the importance attributions invariant. Let $\mathbf{Z}$ denote the original features and $\mathbf{X} \coloneqq \tanh(\mathbf{Z})$ the corresponding transformation with $\tanh$ applied component-wise. Since $\tanh$ is a bijection, the marginal structure is preserved. Let $\nu$ be the function of interest defined on the original space, with the unique decomposition
    \begin{equation}
        \nu(\mathbf{Z}) = \sum_{S \subseteq [p]} \nu_S(\mathbf{Z}_S),
    \end{equation}
    and define
    \begin{equation}
        \widetilde{\nu}(\mathbf{X}) \coloneqq \nu\left(\tanh^{-1}(\mathbf{X})\right),
    \end{equation}
    which in turn admits a unique decomposition $\widetilde{\nu}(\mathbf{X}) = \sum_{S \subseteq [p]} \widetilde{\nu}_S(\mathbf{X}_S)$. Substituting $\mathbf{X} = \tanh(\mathbf{Z})$ yields
    \begin{equation}
        \nu(\mathbf{Z}) \;=\; \widetilde{\nu}(\tanh(\mathbf{Z})) \;=\; \sum_{S \subseteq [p]} \widetilde{\nu}_S\!\left(\tanh(\mathbf{Z}_S)\right).
    \end{equation}
    By uniqueness of this decomposition, we conclude that:
    \begin{equation}
        \forall S \subseteq [p], \quad \widetilde{\nu}_S( \mathbf X_S) = \nu_S(\mathbf{Z}_S).
    \end{equation}
\end{remark}

\section{Experiments details}\label{appendix:experiments}

\subsection{Resources}
All experiments were conducted on a MacBook Pro M4 with 32\,GB of RAM. 
The code is written in \texttt{Python} using mainly \texttt{numpy}, \texttt{scipy}, and \texttt{scikit-learn}. All computations reduce to standard linear-algebra operations, primarily array manipulation and the resolution of linear systems and are executed on CPU.

\subsection{Machine learning models}
We describe the four machine learning models used in the numerical 
experiments of this paper. The same architectures are used consistently 
across all experiments, including both the analytical setting and the 
real-world datasets. All of these machine learning models are treated as \emph{black boxes} by our method.

\paragraph{XGBoost.} We use \texttt{XGBoost} \cite{chen2015xgboost}, 
a gradient-boosted decision tree ensemble, with $100$ estimators, 
a maximum depth of $10$, and a learning rate of $0.05$. Both row and 
column subsampling are set to $0.8$ at each boosting iteration to mitigate 
overfitting. We use an $80/20$ train-test split, with early stopping after 
$30$ rounds without improvement on the held-out set. For classification 
tasks, the split is stratified and the objective is set to 
\texttt{binary:logistic} or \texttt{multi:softprob} depending on the 
number of classes; for regression, the default squared-error objective 
is used. The random seed is exposed as a pipeline parameter.

\paragraph{MLP.} We use a standard feed-forward multilayer perceptron with 
three hidden layers of widths $(128, 64, 32)$ and ReLU activations, 
followed by a dropout rate of $0.1$ applied after each hidden layer. 
Features are standardized using a \texttt{StandardScaler} 
fitted on the training set. The model is trained with the Adam 
optimizer~\cite{kingma2014adam}, a learning rate of $10^{-3}$, a weight 
decay of $10^{-4}$, a batch size of $256$, and a maximum of $500$ epochs, 
with early stopping after $30$ epochs without improvement on the held-out 
validation set. The loss function is set to mean squared error for 
regression, binary cross-entropy with logits for two-class classification, 
and categorical cross-entropy otherwise. We use an $80/20$ train-test 
split, stratified in the classification setting. The random seed is 
exposed as a pipeline parameter.

\paragraph{NAM.} Neural Additive Models \cite{agarwal2021neural} fit a 
generalized additive model of the form $h(\mathbf{x}) = C + \sum_{j=1}^{p} 
h_j(\mathbf x_j)$, where each \emph{shape function} $h_j$ is parameterized by an 
independent univariate subnetwork. Following the standard ReLU variant 
of \cite{agarwal2021neural}, each subnetwork consists of three hidden 
layers of widths $(64, 64, 32)$ with ReLU activations. Features are 
standardized on the training set prior to training. The model is 
trained with the Adam optimizer \cite{kingma2014adam}, a learning rate 
of $6.74 \times 10^{-3}$ decayed exponentially by a factor of $0.995$ 
per epoch, weight decay of $10^{-6}$, and an additional $L_2$ penalty 
of $10^{-3}$ on the shape function outputs to encourage smoothness. 
We use a batch size of $1024$, a maximum of $1000$ epochs, and early 
stopping after $50$ epochs without improvement on the held-out 
validation set. The loss function is mean squared error for regression, 
binary cross-entropy with logits for two-class classification, and 
categorical cross-entropy otherwise. After training, the shape functions 
are centered so that each $h_j$ has zero empirical mean over the training 
set, with the residual absorbed in the intercept $C$. We use an 
$80/20$ train-test split, stratified in the classification setting, 
and the random seed is exposed as a pipeline parameter.

\paragraph{EBM.} Explainable Boosting Machines \cite{nori2019interpretml} are second-order generalized additive models of the form $h(\mathbf{x}) = C + \sum_{j} h_j(\mathbf x_j) + \sum_{j,k} h_{j,k}(\mathbf x_j, \mathbf x_k)$, where each univariate and bivariate  function is fitted greedily using cyclic gradient boosting with shallow decision trees. We use the reference implementation provided by the \texttt{interpret} package \cite{nori2019interpretml}, with a maximum of $256$ bins for feature discretization, $10$ automatically selected pairwise interactions, and up to $5\,000$ boosting rounds with a learning rate of $10^{-2}$. Each base learner is a shallow tree with at most $3$ leaves and a minimum of $2$ samples per leaf. We use an $80/20$ train-test split, stratified in the classification setting. The random seed is exposed as a pipeline parameter.

\subsection{Explainers}

We fix in advance an evaluation set $\mathcal X_{\text{eval}}$ of shape $N_{\text{eval}} \times p$ on which all subsequent quantities are computed and compared. In the analytical setting, we explain the entire dataset, i.e., $N_{\text{eval}} = n = 10\,000$. For the real-world datasets, we sample $N_{\text{eval}} = 5\,000$ instances uniformly at 
random. For a given dataset, all methods are evaluated on the same set of instances to ensure a fair comparison.

\paragraph{KernelSHAP.} We fix in advance a background dataset of size 
$N_{\text{bg}}$ and apply the \texttt{KernelExplainer} from the 
\texttt{shap}\footnote{\url{https://github.com/shap/shap}} package \cite{lundberg2017unified} directly to the trained model. In the analytical setting, we apply this method directly to the function $\nu$, treated as a black box. For the real-world datasets, it is applied exclusively to the trained MLP.

\paragraph{DeepSHAP.} We fix in advance a background dataset of size 
$N_{\text{bg}}$ and apply the \texttt{DeepExplainer} from the 
\texttt{shap} package \cite{lundberg2017unified} directly to the trained model. For the real-world datasets, it is applied exclusively to the trained MLP.

\begin{remark}
    In practice, we set $N_{\text{bg}} = 200$ for the datasets \emph{Bike Sharing}, \emph{California Housing} and \emph{Pima Indians Diabetes}, a standard value in the literature. For the datasets \emph{Adult Census Income}, \emph{Electrical Grid Stability} and \emph{Superconductivity}, we use $N_{\text{bg}} = 50$ for computational reasons.
\end{remark}

\paragraph{TreeSHAP.} We apply TreeSHAP~\cite{lundberg2018consistent}, 
via the \texttt{TreeExplainer} class of the \texttt{shap} 
package, to the trained XGBoost model. We use the \texttt{tree\_path\_dependent} option, which computes Shapley values in the observational sense by accounting for feature dependencies through the tree structure.

\paragraph{TreeHFD.} We apply TreeHFD~\cite{benard2025tree}, via the 
\texttt{treehfd}\footnote{\url{https://github.com/ThalesGroup/treehfd}} package, 
to the same trained XGBoost model as TreeSHAP. We use the first-order 
mode (\texttt{interaction\_order=1}), which restricts the decomposition 
to main effects, both for computational efficiency and for consistency 
with our figures, in which we display the main effects.

\paragraph{EBM.} The additive decomposition is part of the fitted EBM itself. For each evaluation instance $\mathbf{x}$ and each feature $j$, the main effect $h_j(\mathbf x_j)$ is extracted directly via the \texttt{eval\_terms} method of the \texttt{interpretML}\footnote{\url{https://interpret.ml/}} package \cite{nori2019interpretml}, which returns the contribution of every univariate (and bivariate) shape function to the model output. Displayed main effects are obtained by plotting $h_j(x_j)$ against $x_j$ over the evaluation set. In the classification setting, all contributions are reported on the logit scale.

\paragraph{NAM.} Likewise, the additive decomposition is part of the NAM 
architecture. For each evaluation instance $\mathbf{x}$ and each feature 
$j$, the main effect $h_j(\mathbf x_j)$ is obtained by a forward pass of the 
$j$-th subnetwork on the scalar input $x_j$. After training, the shape 
functions are centered so that $\mathbb{E}[h_j( \mathbf X_j )] = 0$ on the training 
set, with the residual absorbed in the intercept 
$C$~\cite{agarwal2021neural}. Displayed main effects are obtained as for 
EBM, by plotting $h_j(\mathbf x_j)$ against $\mathbf x_j$ over the evaluation set; in 
classification, they are reported on the logit scale.

\subsection{Analytical Case}\label{appendix:analytical}

\paragraph{Theoretical decomposition.} In this case, we consider a random vector $\mathbf{X} \in [-1,1]^3$ whose joint density belongs to the Farlie--Gumbel--Morgenstern \citep{nelsen2006introduction} family and is given for $\mathbf x \in [-1,1]^3$ by:
\begin{equation}
    f_{\rho}(\mathbf{x}) = \frac{1}{8}\bigl(1 + \rho\,(\mathbf{x}_1 \mathbf{x}_2 + \mathbf{x}_2 \mathbf{x}_3 + \mathbf{x}_1 \mathbf{x}_3)\bigr),
\end{equation}
where $\rho \in (-\tfrac{1}{3},\,1)$ is the dependence parameter. The marginal densities of order 1 and 2 are explicit and are given by
\begin{align}
    f_j(\mathbf{x}) &= \frac{1}{2}, \\
    f_{i,j}(\mathbf{x}_i, \mathbf{x}_j) &= \frac{1}{4}\bigl(1 + \rho\, \mathbf{x}_i\, \mathbf{x}_j\bigr),
\end{align}
for all $\mathbf{x}, \mathbf{x}_i, \mathbf{x}_j \in [-1,1]$. Each marginal is uniform on $[-1,1]$, and the pairwise correlation satisfies $\operatorname{Corr}(\mathbf{X}_i, \mathbf{X}_j) = \rho/3$ for all $i \neq j$. Note that for any $\rho \in (-1/3 , 1)$, we have:
\begin{equation}
    0 < \underbrace{\frac{1}{8}(1 - \rho)}_{c_1} \leq f_{\rho} \leq \underbrace{\frac{1}{8}( 1 + 3 \rho )}_{c_2} < \infty,
\end{equation}
which satisfies the Assumption \ref{assu:densite}. We define the theoretical components of $\nu$ in terms of elements of $\Xi$ as follows:
\begin{align}
    \nu_1( \mathbf x_1 ) &\coloneqq \frac{\sqrt{\tfrac{2}{7}}\, \widetilde{P}_3(\mathbf{x}_1) - \sqrt{\tfrac{2}{3}}\, \widetilde{P}_1(\mathbf{x}_1) }{f_1( \mathbf x_1 )}, \\
    \nu_2( \mathbf x_2 ) &\coloneqq \frac{ \sqrt{\tfrac{2}{3}}\, \widetilde{P}_1(\mathbf{x}_2) + \sqrt{\tfrac{2}{5}}\, \widetilde{P}_2(\mathbf{x}_2) }{f_2( \mathbf x_2 )}, \\
    \nu_{1,2}( \mathbf x_1 , \mathbf x_2 ) &\coloneqq \frac{\tfrac{2}{9}\, \widetilde{P}_4(\mathbf{x}_1) \widetilde{P}_4(\mathbf{x}_2) + \tfrac{2}{17}\, \widetilde{P}_8(\mathbf{x}_1) \widetilde{P}_8(\mathbf{x}_2) }{f_{1,2}( \mathbf x_1 , \mathbf x_2 )},
\end{align}
and $\nu_{\emptyset} = \nu_{3} = \nu_{1,3} = \nu_{2,3} = \nu_{1,2,3} = 0$.

\paragraph{Simulations.}
In our theoretical setup, the function $\nu$ is available in closed form, and samples of $\mathbf{X}$ can be drawn via rejection sampling from the uniform distribution on $[-1,1]^3$ using the density $f_{\rho}$. We set $\rho = 1/2$ for simplicity. The illustration in 
Fig~\ref{fig:AC} is produced with $n = 10\,000$ samples.

\paragraph{Comparisons.}
We compare the main effects $\nu_1, \nu_2, \nu_3$ against several 
explanation methods (KernelSHAP, TreeSHAP, TreeHFD) and interpretable models (NAM and EBM). In each case, we follow the procedure described above and explain 
the $10\,000$ instances of the dataset.
\begin{itemize}
    \item \textbf{KernelSHAP.} We apply KernelSHAP directly to $\nu$, 
    treated as a \emph{black-box} model.
    \item \textbf{TreeSHAP.} We train an XGBoost model and apply TreeSHAP 
    to the trained estimator.
    \item \textbf{TreeHFD.} We apply TreeHFD to the same XGBoost model.
    \item \textbf{NAM.} We train a NAM and report its learned main effects.
    \item \textbf{EBM.} We train an EBM and report its learned main effects.
\end{itemize}

\begin{remark}
    Note that we apply our own estimator on this synthetic dataset, assuming that the theoretical $\nu$ is black-box. And we verify that our method also captures the true underlying ANOVA structure (see Fig.\ref{fig:AC_vs_ours}).
\end{remark}

\begin{figure}[ht]
  \centering
  \includegraphics[width=\textwidth]{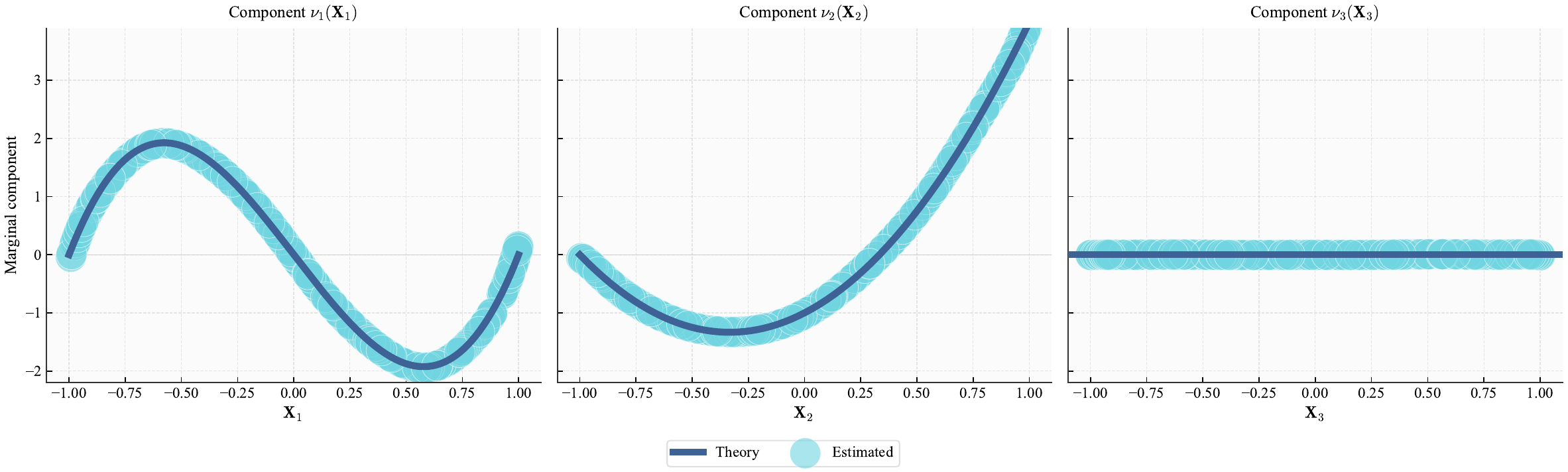}
  \caption{Estimated main effects in the analytical setting for 
$\mathbf{X}_1$, $\mathbf{X}_2$, and the irrelevant variable $\mathbf{X}_3$. We take $K=2$, $d=10$, $d_{\text{density}} = 10$ and $\varepsilon = 0.01$}
  \label{fig:AC_vs_ours}
\end{figure}

\subsection{Datasets}

The \emph{California Housing} dataset is directly available from \texttt{scikit-learn}. The remaining datasets are retrieved from the UCI Machine Learning 
Repository~\cite{dua2017uci} using the \texttt{fetch\_ucirepo} function of the \texttt{ucimlrepo}\footnote{\url{https://pypi.org/project/ucimlrepo/}} package. All features are used for every dataset, with the exception of \emph{Bike Sharing}, where we retain only eight: \texttt{hr}, \texttt{weekday}, \texttt{holiday}, \texttt{season}, \texttt{atemp}, \texttt{hum}, \texttt{windspeed}, and \texttt{weathersit} following \cite{benard2025tree}.

\subsection{Hyperparameters}

In this section, we specify the hyperparameters used by our method on each dataset. Recall that these hyperparameters are:
\begin{enumerate}
    \item $K$: the truncation order of the interaction levels;
    \item $d$: the maximum polynomial degree (numerator);
    \item $d_{\text{density}}$: the maximum polynomial degree used to estimate the marginal densities (denominator);
    \item $\varepsilon$: the clipping threshold applied to the 
    estimated densities.
\end{enumerate}

\begin{table}[ht]
  \caption{Hyperparameters of our method for all models across all 
  datasets. The -- entries correspond to datasets on which the 
  NAM baseline was not trained for computational reasons.}
  \label{table:hyperparam}
  \centering
  \resizebox{\textwidth}{!}{%
  \begin{tabular}{l cccc cccc cccc cccc}
    \toprule
    & \multicolumn{4}{c}{XGBoost} 
    & \multicolumn{4}{c}{MLP} 
    & \multicolumn{4}{c}{EBM} 
    & \multicolumn{4}{c}{NAM} \\
    \cmidrule(lr){2-5} \cmidrule(lr){6-9} \cmidrule(lr){10-13} \cmidrule(lr){14-17}
    \textbf{ID} 
    & $K$ & $d$ & $d_{\text{density}}$ & $\varepsilon$ 
    & $K$ & $d$ & $d_{\text{density}}$ & $\varepsilon$ 
    & $K$ & $d$ & $d_{\text{density}}$ & $\varepsilon$ 
    & $K$ & $d$ & $d_{\text{density}}$ & $\varepsilon$ \\
    \midrule
    \textbf{BS} & 2 & 10 & 5 & 0.01 
                & 2 & 8  & 6 & 0.01 
                & 2 & 10 & 4 & 0.01 
                & -- & -- & -- & -- \\
    \textbf{CH} & 2 & 10 & 4 & 0.01 
                & 1 & 10 & 4 & 0.01 
                & 2 & 10 & 4 & 0.01 
                & 1 & 10 & 4 & 0.01 \\
    \textbf{CI} & 1 & 10 & 4  & 0.01 
                & 1 & 10 & 4  & 0.01 
                & 1 & 10 & 6  & 0.01
                & -- & -- & -- & -- \\
    \textbf{EG} & 2 & 4  & 4  & 0.01 
                & 2 & 4  & 4  & 0.01 
                & 2 & 4  & 4  & 0.01 
                & 1 & 4  & 4  & 0.01 \\
    \textbf{PI} & 2 & 5  & 4  & 0.1
                & 2 & 5  & 4  & 0.01 
                & 2 & 5  & 4  & 0.01
                & 1 & 10 & 4  & 0.01 \\
    \textbf{SC} & 1 & 10 & 10 & 0.01 
                & 1 & 10 & 10 & 0.01  
                & 1 & 10 & 10 & 0.01  
                & -- & -- & -- & -- \\
    \bottomrule
  \end{tabular}
  }
\end{table}
\begin{remark}
In practice, we always take $K \in \{1, 2\}$, so that our method reduces 
to a GAM ($K=1$) or a GA\textsuperscript{$2$}M ($K=2$). The density-estimation degree $d_{\text{density}}$ can also be kept small: a low-degree polynomial is 
typically sufficient to capture the overall \emph{trend} of the marginal 
densities. Finally, we never go beyond $d = 10$ for the shape-function 
polynomials, as we empirically observe no additional signal capture 
beyond this degree. Pushing $d$ further can even degrade the 
hierarchical orthogonality of the decomposition, a signal that either 
a more sophisticated penalized model selection scheme or an increase 
of the truncation order to $K = 3$ would be required.
\end{remark}

\subsection{Performance metrics}\label{appendix:performance_metrics}
In this subsection, we report all performance metrics that were 
omitted from the main body of the paper.

\paragraph{Tables.} For clarity, we reproduce here the tables reporting the dataset characteristics and the performance of our method across the four models and all datasets. Tab~\ref{table:perf_xgb_mlp} and Tab~\ref{table:perf_ebm_nam} report the predictive performance of the trained models on the six real-world datasets, together with the corresponding metrics of our method.

\begin{table}[ht]
  \caption{Dataset characteristics and performance of our method.}
  \label{table:perf_xgb_mlp}
  \centering
  \begin{tabular}{lrrcrcrcrcrr}
    \toprule
    & & & \multicolumn{4}{c}{XGB} & \multicolumn{4}{c}{MLP} \\
    \cmidrule(lr){4-7} \cmidrule(lr){8-11}
    \textbf{ID}  & $n$ & $p$ & $\mathrm{Perf}$ & $K$ & $R^2$ & $\mathrm{Max Corr}$ & $\mathrm{Perf}$ & $K$ & $R^2$ & $\mathrm{Max Corr}$ \\
    \midrule
    \textbf{BS} & 17\,379 & 8       & 0.86 & 2 & 0.92 & $9.49\mathrm{e}{-2}$  & 0.85 & 2 & 0.91 & $8.83\mathrm{e}{-2}$ \\
    \textbf{CH} & 20\,640 & 8     & 0.84 & 2 & 0.86 & $6.35\mathrm{e}{-2}$  & 0.81 & 1 & 0.88 & 0.00 \\
    \textbf{CI} & 48\,842 & 14       & 0.88 & 1 & 0.91 & 0.00  & 0.85 & 1 & 0.94 & 0.00 \\
    \textbf{EG} & 10\,000 & 12       & 0.93 & 2 & 0.89 & $7.41\mathrm{e}{-3}$  & 0.98 & 2 & 0.95 & $6.61\mathrm{e}{-3}$ \\
    \textbf{PI} & 768 & 8       & 0.74 & 2 & 0.85 & $9.56\mathrm{e}{-2}$  & 0.73 & 2 & 0.99 & 0.00 \\
    \textbf{SC} & 21\,263 & 81       & 0.93 & 1 & 0.86 & 0.00  & 0.91 & 1 & 0.89 & 0.00 \\
    \bottomrule
  \end{tabular}
\end{table}

\begin{table}[ht]
  \caption{Dataset characteristics and performance of our method on EBM and NAM.}
  \label{table:perf_ebm_nam}
  \centering
  \begin{tabular}{lrrcrcrcrcrr}
    \toprule
    & & & \multicolumn{4}{c}{EBM} 
        & \multicolumn{4}{c}{NAM} \\
    \cmidrule(lr){4-7} \cmidrule(lr){8-11}
    \textbf{ID} & $n$ & $p$ 
                & $\mathrm{Perf}$ & $K$ & $R^2$ & $\mathrm{Max Corr}$ 
                & $\mathrm{Perf}$ & $K$ & $R^2$ & $\mathrm{Max Corr}$ \\
    \midrule
    \textbf{BS} & 17\,379 & 8  & 0.83 & 2 & 0.96 & $9.52\mathrm{e}{-2}$ 
                               & --   & -- & --  & --                  \\
    \textbf{CH} & 20\,640 & 8  & 0.83 & 2 & 0.89 & $4.04\mathrm{e}{-2}$ 
                               & 0.76 & 1 & 0.94 & 0.00                \\
    \textbf{CI} & 48\,842 & 14 & 0.88 & 1 & 0.90 & 0.00 
                               & --   & -- & --  & --                  \\
    \textbf{EG} & 10\,000 & 12 & 0.94 & 2 & 0.99 & $5.58\mathrm{e}{-3}$ 
                               & 0.88 & 1 & 1.00 & 0.00 \\
    \textbf{PI} & 768     & 8  & 0.76 & 2 & 0.97 & 0.00 
                               & 0.73 & 1 & 1.00 & 0.00                \\
    \textbf{SC} & 21\,263 & 81 & 0.90 & 1 & 0.88 & 0.00 
                               & --   & -- & --  & --                  \\
    \bottomrule
  \end{tabular}
\end{table}

$\mathrm{Perf}$ denotes the predictive performance of the trained model: classification accuracy for classification tasks and the coefficient of determination for regression tasks. The quantity $R^2$ is the reconstruction coefficient of determination of our estimator, i.e. the fraction of the output variance captured by our decomposition on the entire dataset. Finally, $\mathrm{MaxCorr}$ is defined as
\begin{equation}
    \mathrm{MaxCorr} \coloneqq \max_{(S,T) \in \mathcal{A}} \left| \mathbb{E}\big[ \widehat{\nu}_S(\mathbf{X}_S)\, \widehat{\nu}_T(\mathbf{X}_T) \big] \right| \cdot \left(\mathbb{E}\big[ \widehat{\nu}_S(\mathbf{X}_S)^2 \big] \, \mathbb{E}\big[ \widehat{\nu}_T(\mathbf{X}_T)^2 \big]\right)^{-1/2},
\end{equation}
where $\mathcal{A} = \{ (S, T) : T \subsetneq S,\ \mathbb{V}(\widehat{\nu}_S(\mathbf{X}_S)) / \mathbb{V}(\nu(\mathbf{X})) \geq 1\% \}$. We restrict the maximum to interaction components whose variance represents at least $1\%$ of the output variance (following \cite{benard2025tree}), so as to discard components with negligible contribution to the decomposition.

\paragraph{Computation time.} The computation times of our method on each real-world dataset are reported in Tab.~\ref{table:time}. For each dataset, we fix the random seed to 42 and train the corresponding machine learning model. We then apply our decomposition method to each trained model over 10 independent runs, and report the mean and standard deviation. We observe that our method is extremely fast, enabling the explanation of hundreds or thousands of instances in a few seconds or less. For instance, the quasi-additive structure of an MLP trained on the \emph{Adult Census Income} dataset allows us to explain all 48\,842 instances in approximately 0.6\,s.

\paragraph{Hierarchical orthogonality.} As a consequence of the 
hierarchical orthogonality constraint~\eqref{eq:fanova_orthogonality}, 
the components $\{\nu_S(\mathbf{X}_S)\}_{S \subseteq [p]}$ must satisfy
\begin{equation}
    \forall S, T \subseteq [p], \quad T \subsetneq S \implies 
    \mathbb{E}\!\left[ \nu_S(\mathbf{X}_S)\, \nu_T(\mathbf{X}_T) \right] = 0.
\end{equation}
In practice, we measure the empirical cosine between pairs of components 
$\nu_S$ and $\nu_T$ with $T \subsetneq S \subseteq [p]$. When 
$T = \emptyset$, the cosine is defined by:
\begin{equation}
    \frac{ \mathbb{E}\!\left[ \nu_S(\mathbf{X}_S) \right] }
         { \sqrt{ \mathbb{E}\!\left[ \nu_S(\mathbf{X}_S)^2 \right] } },
\end{equation}
which quantifies the degree of centering of $\nu_S$. For 
$T \neq \emptyset$, we compute
\begin{equation}
    \frac{ \mathbb{E}\!\left[ \nu_S(\mathbf{X}_S)\, 
    \nu_T(\mathbf{X}_T) \right] }
    { \sqrt{ \mathbb{E}\!\left[ \nu_S(\mathbf{X}_S)^2 \right] 
    \, \mathbb{E}\!\left[ \nu_T(\mathbf{X}_T)^2 \right] } }.
\end{equation}
After estimation, we systematically recenter the estimated components 
$\widehat{\nu}_S$, so that for every $S \subseteq [p]$,
\begin{equation}
    \frac{ \mathbb{E}\!\left[ \widehat{\nu}_S(\mathbf{X}_S) \right] }
         { \sqrt{ \mathbb{E}\!\left[ \widehat{\nu}_S(\mathbf{X}_S)^2 \right] } } = 0.
\end{equation}
As a result, for every $\emptyset \subsetneq T \subsetneq S \subseteq [p]$, 
the empirical cosine between $\widehat{\nu}_S$ and $\widehat{\nu}_T$ 
coincides with their empirical correlation, since both components are 
centered. Naturally, $\mathrm{MaxCorr}$ is computed from this quantity.
\begin{remark}
    Although the theoretical decomposition provably satisfies the 
    constraint~\eqref{eq:fanova_orthogonality}, exact hierarchical orthogonality is difficult to achieve in practice, especially on finite samples 
    combined with imperfect density estimation.
\end{remark}
For the analytical setting, we report in Tab~\ref{table:ortho_analytical} 
the empirical cosines, in absolute value, between the theoretical ANOVA 
components. The hierarchical orthogonality constraint is well satisfied 
in this case: all corresponding cosines are below $10^{-2}$, confirming 
numerically the theoretical result.
\begin{table}[ht]
  \newcommand{\val}[2]{$#1{\scriptstyle\,\pm\,#2}$}
  \caption{Empirical cosines between theoretical ANOVA components on 
  the analytical setting, reported as mean $\pm$ standard deviation 
  over $100$ independent runs with $n = 100\,000$ samples each.}
  \label{table:ortho_analytical}
  \centering
  \begin{tabular}{cc}
    \toprule
    \textbf{$(S, T)$} & Absolute cosine \\
    \midrule
    $(\{1\},    \emptyset)$ & \val{0.0027}{0.0018} \\
    $(\{2\},    \emptyset)$ & \val{0.0026}{0.0019} \\
    $(\{1, 2\}, \emptyset)$ & \val{0.0024}{0.0020} \\
    $(\{1, 2\}, \{1\})$     & \val{0.0022}{0.0017} \\
    $(\{1, 2\}, \{2\})$     & \val{0.0028}{0.0024} \\
    \bottomrule
  \end{tabular}
\end{table}

\paragraph{Explained variance.} Across all experiments, the trained 
models prove to be highly additive, which explains the consistently 
high $R^2$ values reported in Tab~\ref{sample-table} and Tab~\ref{table:perf_ebm_nam}. However, our decomposition does not achieve a perfectly unitary $R^2$, for three reasons. \emph{First}, the truncated decomposition is restricted to interaction orders $K \in \{1, 2\}$ and therefore cannot capture signal carried by higher-order interactions, whenever the underlying model exhibits them. \emph{Second}, our estimator can still be refined: a more accurate density estimator or a more sophisticated (\emph{e.g.}, penalized) strategy for the regularized least-squares problem would improve the fraction of variance recovered. \emph{Third}, and more fundamentally, recovering exactly $100\%$ of the variance is not a desirable target for an explainability method. Methods that enforce the \emph{efficiency} axiom by construction, such as the SHAP family, distribute the totality of the prediction over feature attributions. This is visible in our experiments: SHAP-based estimates produce substantially noisy curves. By relaxing the efficiency requirement, our approach forgoes explaining the last few percent of variance in exchange for attributions that track the genuine additive structure of the signal rather than its residual noise.

\subsection{Additional Analytical Case}\label{appendix:additional_ac}

We provide a supplementary analytical example in an unbounded-density setting, where Assumption~\ref{assu:densite} no longer holds. We observe that the results of this experiment are keeping relevant. Let consider $\mathbf{X} \in [-1,1]^3$ with $\mathbf{X}_j \coloneqq \tanh(\mathbf{Z}_j)$ for $j \in [3]$, where $\mathbf{Z} \sim \mathcal{N}(\mathbf{0}, \Sigma)$ and $\Sigma$ has unit diagonal and constant off-diagonal $\rho > 0$. The marginal densities are
\begin{align}
    f_j(x) &= \frac{1}{\sqrt{2\pi}\,(1-x^2)} \exp\!\left(-\tfrac{1}{2}{\sigma}^2(x)\right), \\
    f_{i,j}(x_i, x_j) &= \frac{1}{2\pi\sqrt{1-\rho^2}\,(1-x_i^2)(1-x_j^2)} \nonumber\\
    &\quad\times \exp\!\left(-\frac{{\sigma}^2(x_i) - 2\rho\, \sigma(x_i)\, \sigma(x_j) + {\sigma}^2(x_j)}{2(1-\rho^2)}\right),
\end{align}
where $\sigma \coloneqq \operatorname{arctanh}$. The theoretical ANOVA components are defined as in the first analytical case; only the denominators change due to the different densities. We set $\rho = 1/2$ and draw samples by generating $\mathbf{Z}^{(1)}, \dots, \mathbf{Z}^{(n)} \overset{\text{i.i.d.}}{\sim} \mathcal{N}(\mathbf{0}, \Sigma)$ and applying $\tanh$ coordinate-wise.

Fig~\ref{fig:AC_gauss} shows the estimated main effects with $n = 10\,000$ samples: all methods closely track the theoretical components on $\mathbf{X}_1$ and $\mathbf{X}_2$, and correctly assign near-zero contribution to $\mathbf{X}_3$. Fig~\ref{fig:AC_vs_ours_gauss} displays the comparison with our estimator, treating $\nu$ as a black box. Finally, Tab~\ref{table:ortho_analytical_bis} reports the empirical cosines between ANOVA components, confirming that hierarchical orthogonality holds empirically despite the violation of Assumption~\ref{assu:densite}.

\begin{figure}[ht]
  \centering
  \includegraphics[width=\textwidth]{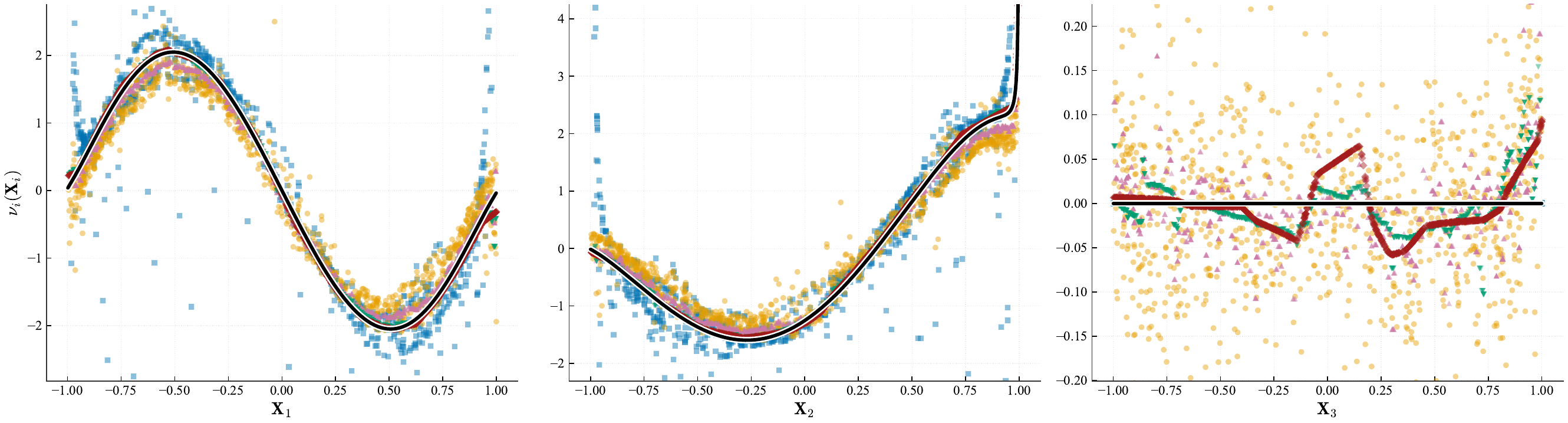}
  \caption{Estimated main effects in the unbounded-density setting for 
$\mathbf{X}_1$, $\mathbf{X}_2$ and $\mathbf{X}_3$. Comparison with 
\textcolor[HTML]{0072B2}{KernelSHAP}, 
\textcolor[HTML]{E69F00}{TreeSHAP}, 
\textcolor[HTML]{CC79A7}{TreeHFD}, 
\textcolor[HTML]{A31A1A}{NAM} and 
\textcolor[HTML]{009E73}{EBM}.}
  \label{fig:AC_gauss}
\end{figure}

\begin{figure}[ht]
  \centering
  \includegraphics[width=\textwidth]{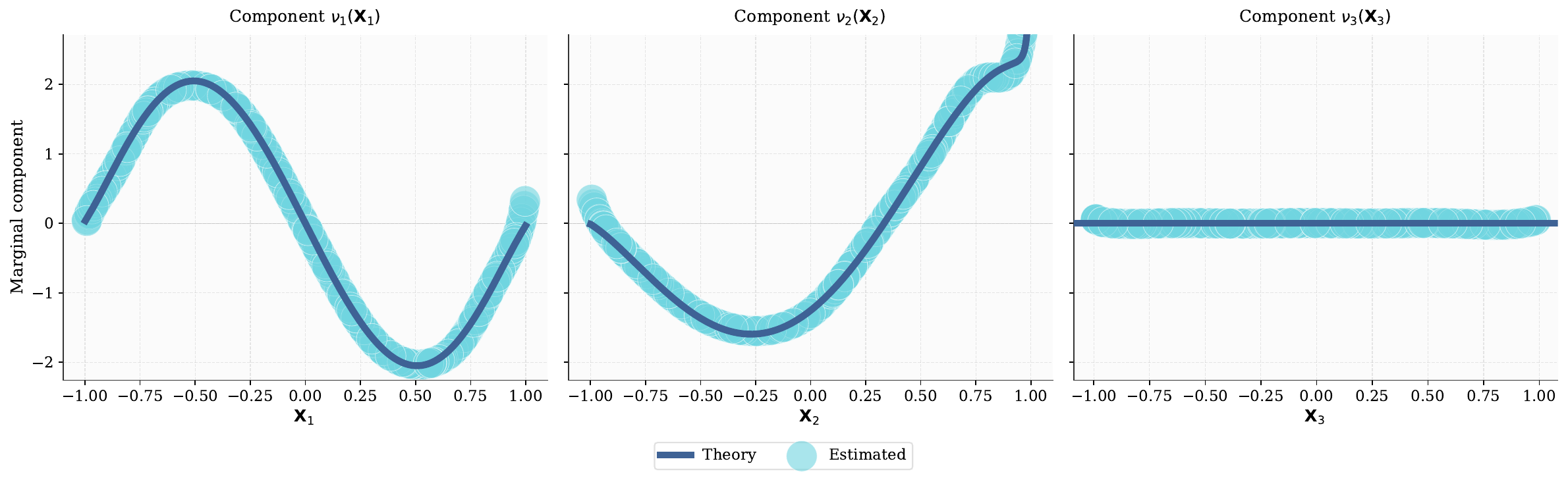}
  \caption{Comparison between the theoretical ANOVA components and our estimator ($K=2$, $d=10$, $d_{\text{density}} = 10$, $\varepsilon = 0.01$) in the unbounded-density setting.}
  \label{fig:AC_vs_ours_gauss}
\end{figure}

\begin{table}[ht]
  \newcommand{\val}[2]{$#1{\scriptstyle\,\pm\,#2}$}
  \caption{Empirical cosines between theoretical ANOVA components in the unbounded-density setting (mean $\pm$ std over $100$ runs, $n = 100\,000$).}
  \label{table:ortho_analytical_bis}
  \centering
  \begin{tabular}{cc}
    \toprule
    \textbf{$(S, T)$} & Absolute cosine \\
    \midrule
    $(\{1\},    \emptyset)$ & \val{0.0023}{0.0017} \\
    $(\{2\},    \emptyset)$ & \val{0.0025}{0.0019} \\
    $(\{1, 2\}, \emptyset)$ & \val{0.0033}{0.0020} \\
    $(\{1, 2\}, \{1\})$     & \val{0.0017}{0.0013} \\
    $(\{1, 2\}, \{2\})$     & \val{0.0087}{0.0132} \\
    \bottomrule
  \end{tabular}
\end{table}

\newpage
\section{Additional Figures}

\subsection{Bike Sharing Dataset}

\begin{figure}[H]
  \centering
  \includegraphics[width=0.9\textwidth]{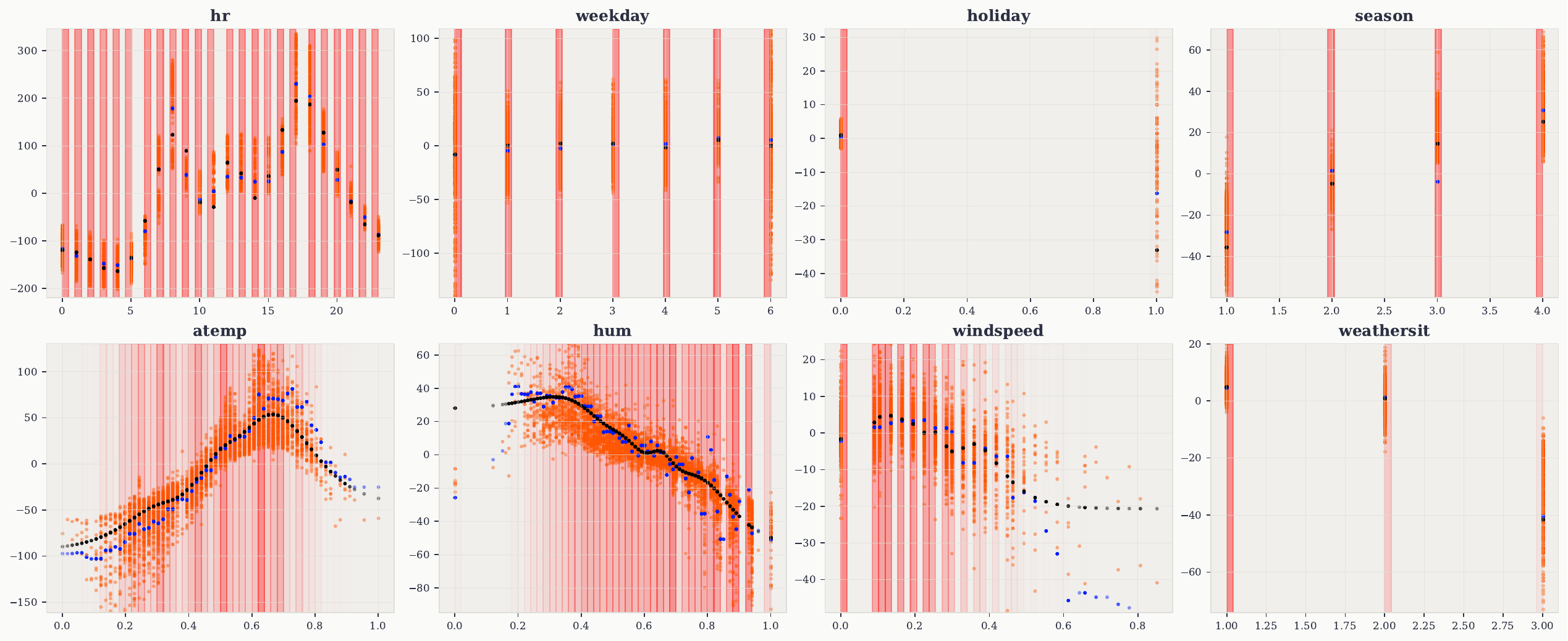}
  \caption{Estimated main effects on \emph{Bike Sharing}: our method (black) vs \textcolor{treehfd}{TreeHFD (main effects)} and \textcolor{treeshap}{TreeSHAP} on a trained XGB.}
  \label{fig:BS_tree_all}
\end{figure}

\begin{figure}[H]
  \centering
  \includegraphics[width=0.9\textwidth]{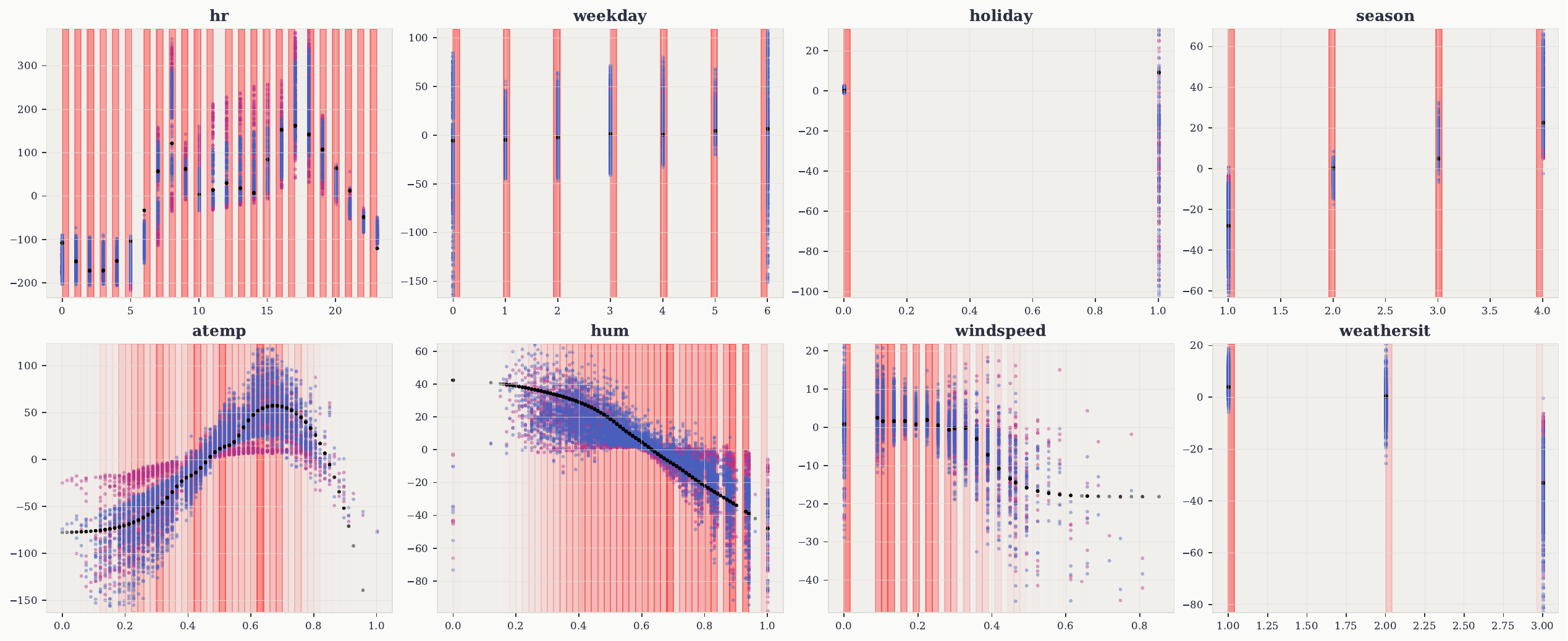}
  \caption{Estimated main effects on \emph{Bike Sharing}: our method (black) vs \textcolor{kernelshap}{KernelSHAP} and \textcolor{deepshap}{DeepSHAP} on a trained MLP.}
  \label{fig:BS_mlp_all}
\end{figure}

\begin{figure}[H]
  \centering
  \includegraphics[width=0.9\textwidth]{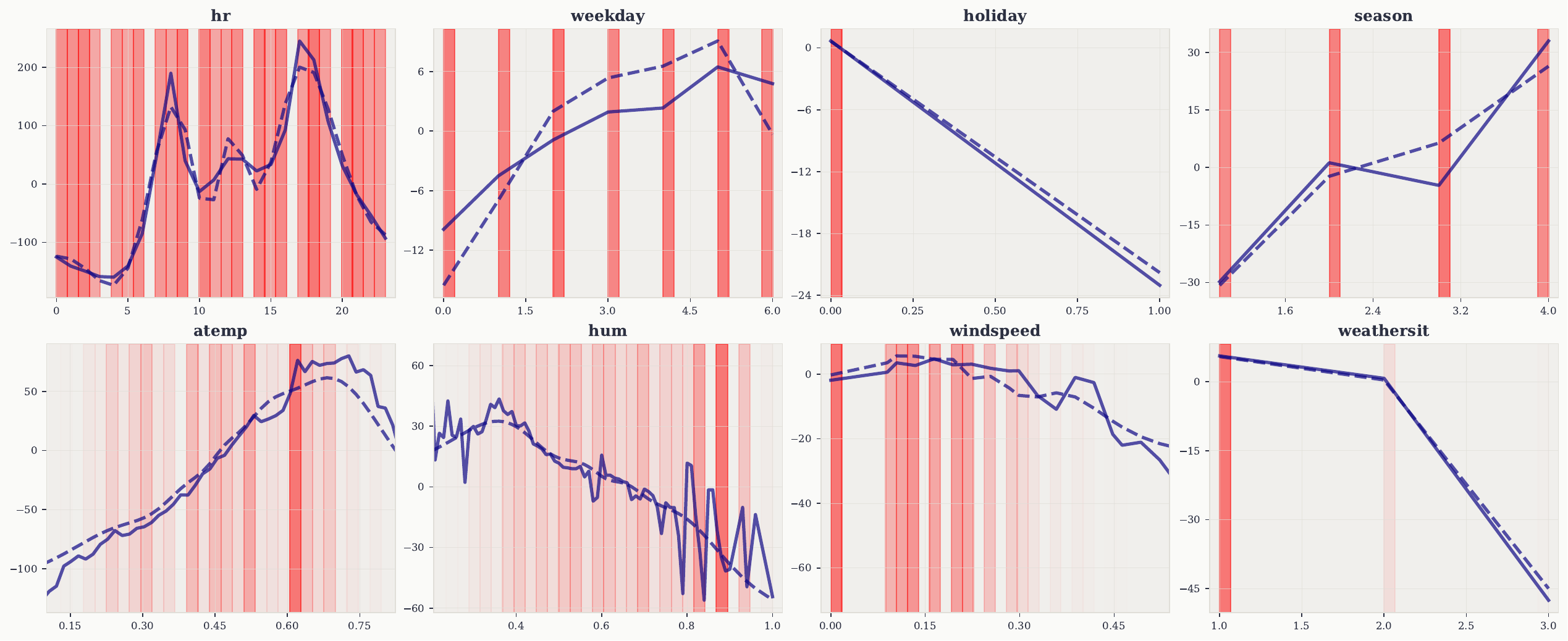}
  \caption{Comparison of native main effects from an EBM
           with those recovered by our method on
           \emph{Bike Sharing}.
           \textcolor{ebmcolor}{EBM} (solid) vs
           \textcolor{ebmcolor}{our method} (dashed).}
  \label{fig:BS_ebm}
\end{figure}

\subsection{California Housing Dataset}

\begin{figure}[H]
  \centering
  \includegraphics[width=1.00\textwidth]{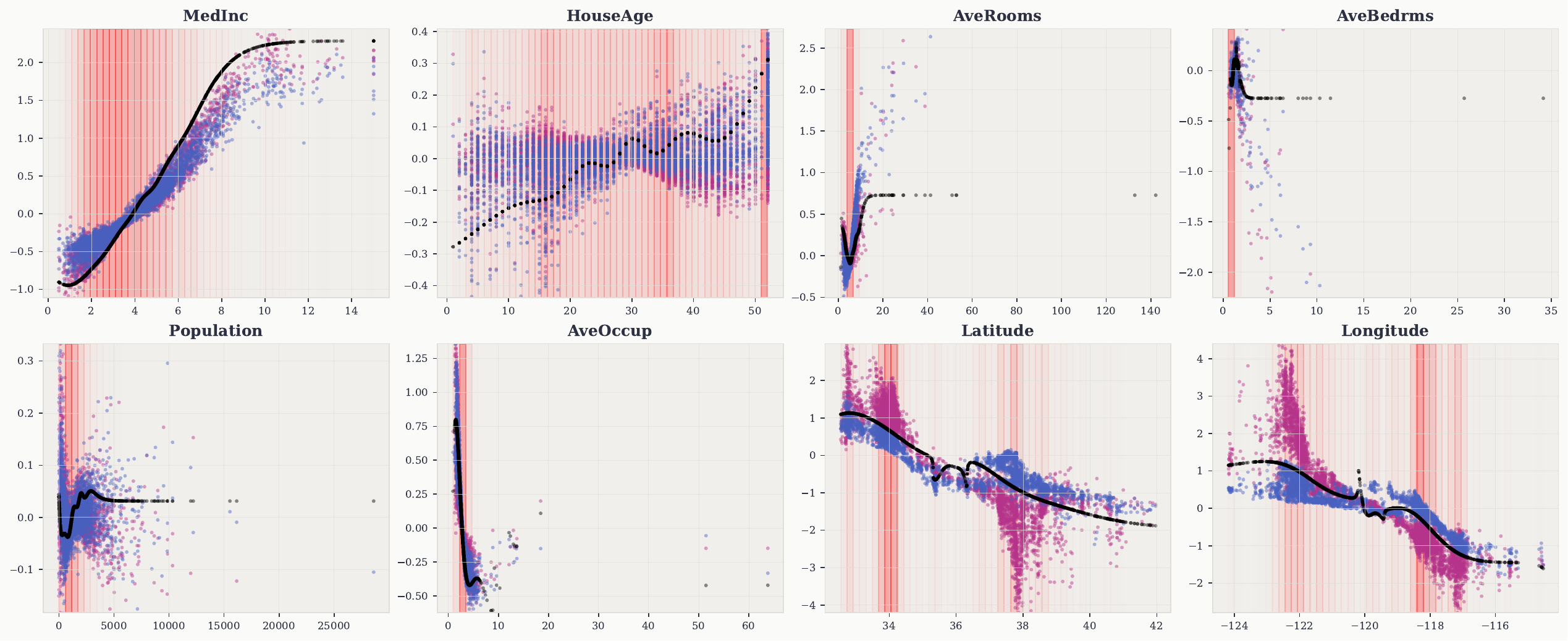}
  \caption{Estimated main effects on \emph{California Housing}: our method (black) vs \textcolor{kernelshap}{KernelSHAP} and \textcolor{deepshap}{DeepSHAP} on a trained MLP.}
  \label{fig:CH_mlp_all}
\end{figure}

\begin{figure}[H]
  \centering
  \includegraphics[width=1.00\textwidth]{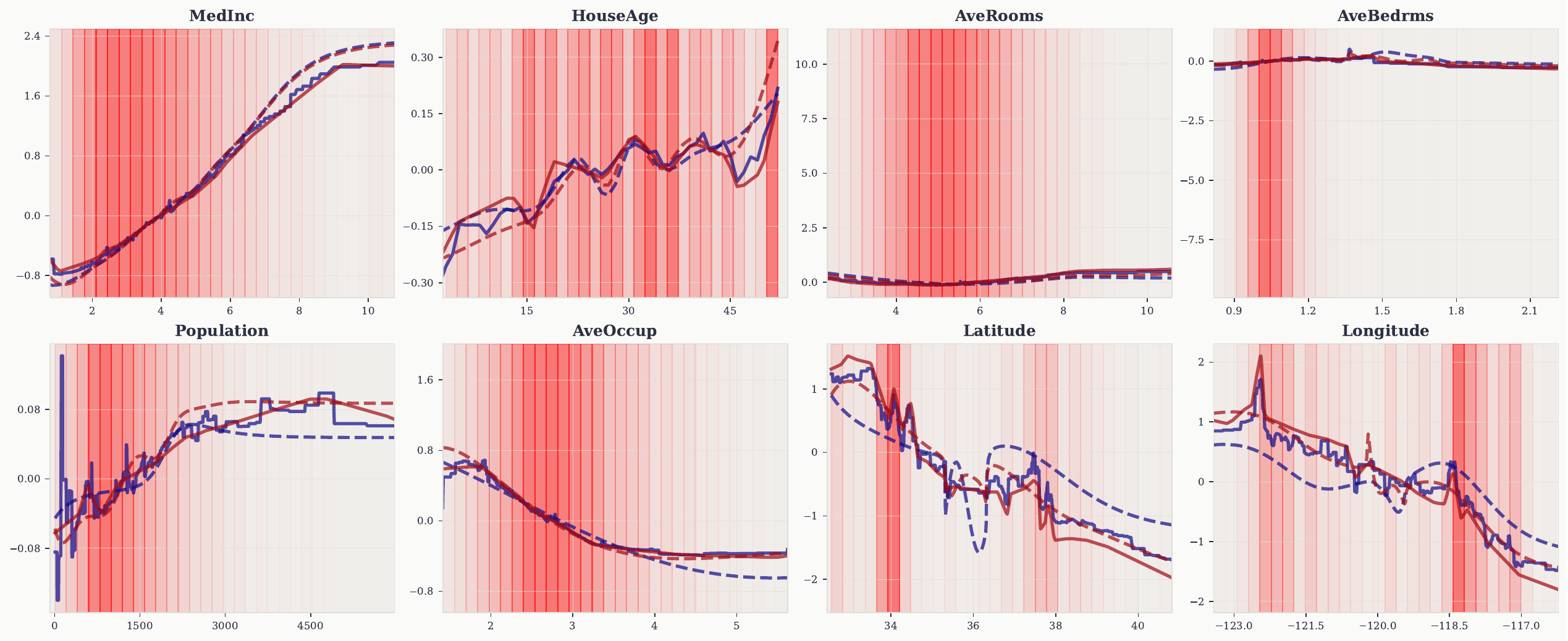}
  \caption{Comparison of native main effects from an EBM
           with those recovered by our method on
           \emph{California Housing}.
           \textcolor{ebmcolor}{EBM} (solid) vs
           \textcolor{ebmcolor}{our method} (dashed);
           \textcolor{namcolor}{NAM} (solid) vs
           \textcolor{namcolor}{our method} (dashed).}
  \label{fig:CH_ebm_nam}
\end{figure}

\subsection{Census Income Dataset}

\begin{figure}[H]
  \centering
  \includegraphics[width=0.8\textwidth]{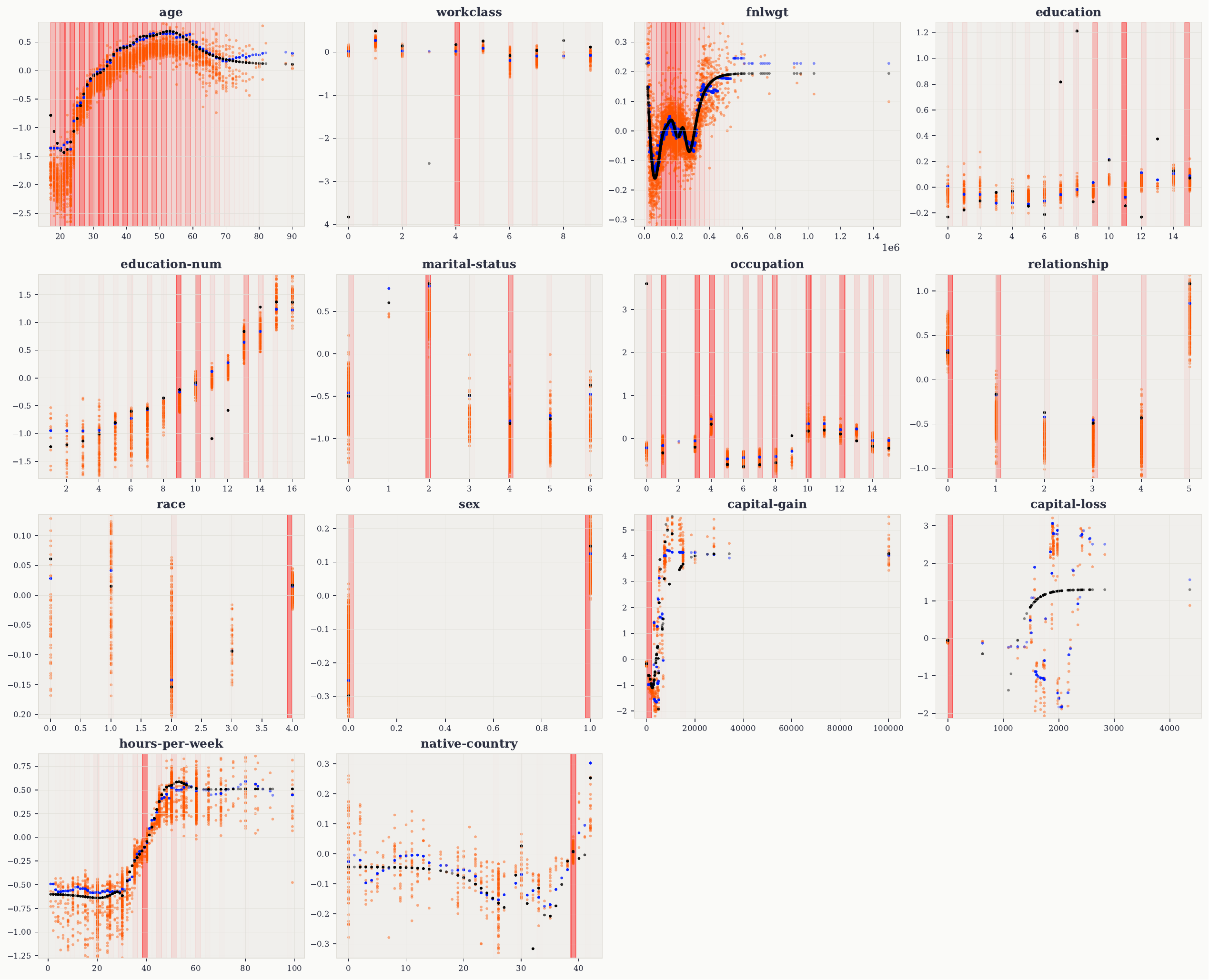}
  \caption{Estimated main effects on \emph{Census Income}: our method (black) vs \textcolor{treehfd}{TreeHFD (main effects)} and \textcolor{treeshap}{TreeSHAP} on a trained XGB.}
  \label{fig:CI_tree_all}
\end{figure}

\begin{figure}[H]
  \centering
  \includegraphics[width=0.8\textwidth]{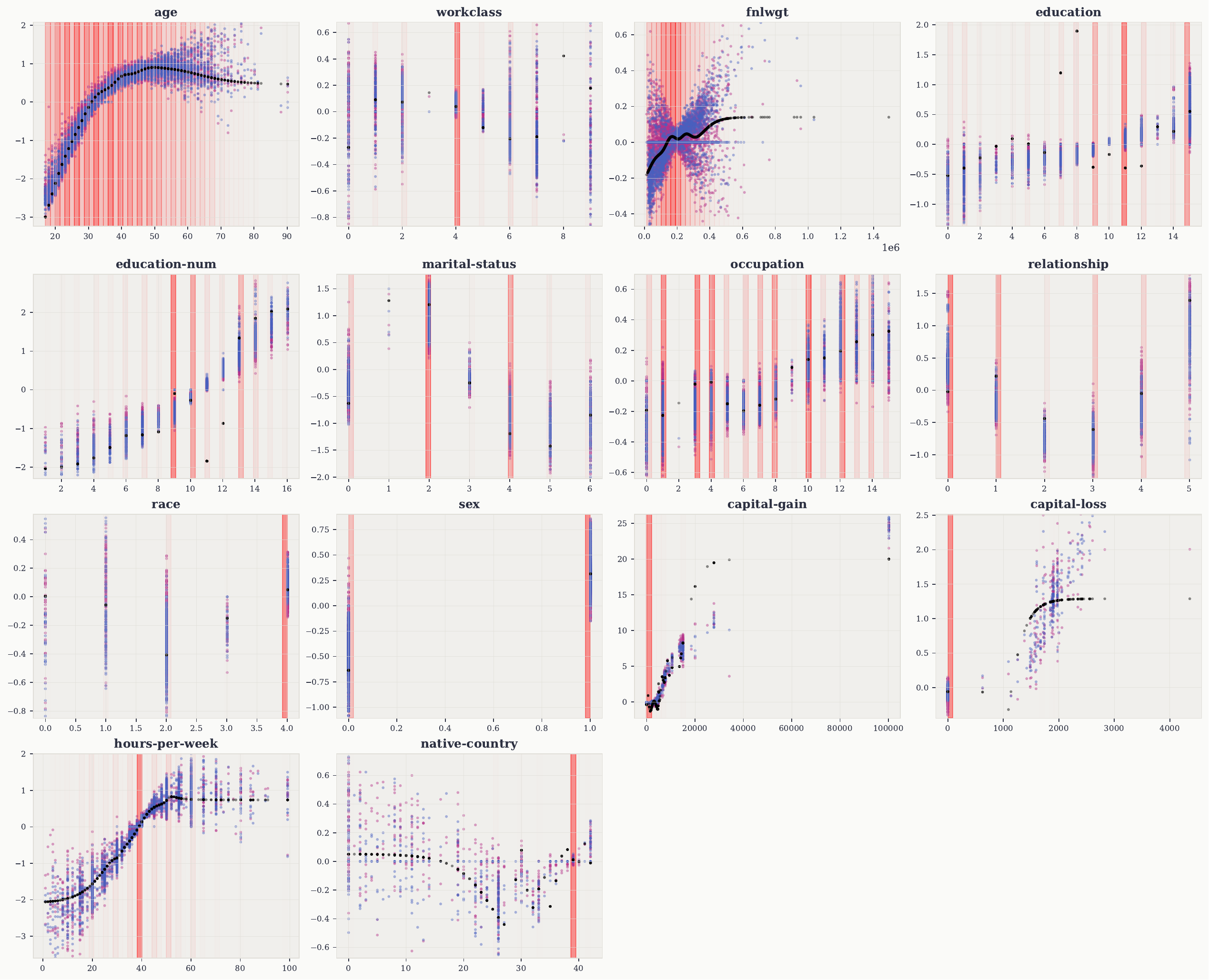}
  \caption{Estimated main effects on \emph{Census Income}: our method (black) vs \textcolor{kernelshap}{KernelSHAP} and \textcolor{deepshap}{DeepSHAP} on a trained MLP.}
  \label{fig:CI_mlp_all}
\end{figure}

\begin{figure}[H]
  \centering
  \includegraphics[width=0.8\textwidth]{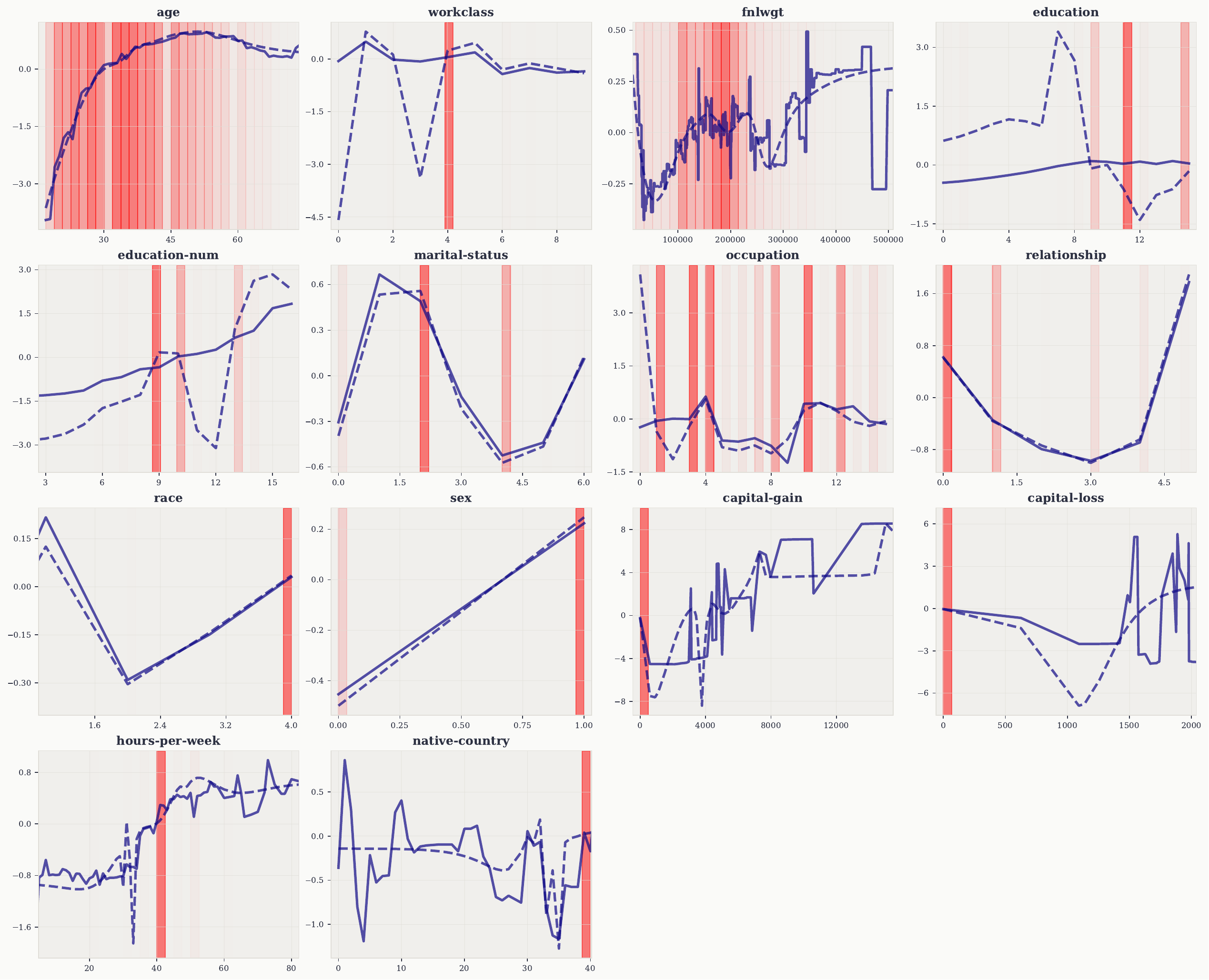}
  \caption{Comparison of native main effects from an EBM
           with those recovered by our method on
           \emph{Census Income}.
           \textcolor{ebmcolor}{EBM} (solid) vs
           \textcolor{ebmcolor}{our method} (dashed).}
  \label{fig:CI_ebm}
\end{figure}

\subsection{Electrical Grid Dataset}

\begin{figure}[H]
  \centering
  \includegraphics[width=1.0\textwidth]{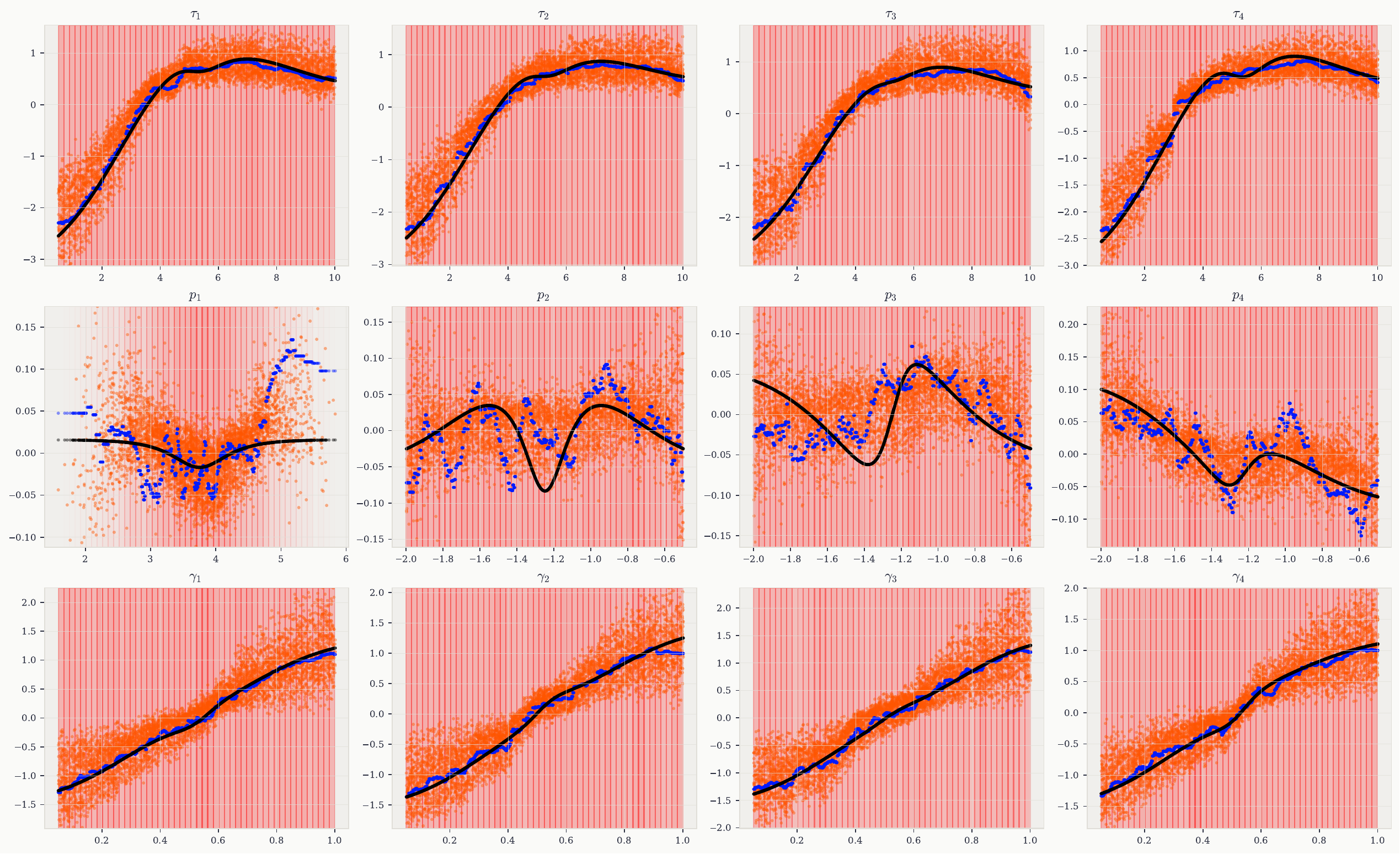}
  \caption{Estimated main effects on \emph{Electrical Grid}: our method (black) vs \textcolor{treehfd}{TreeHFD (main effects)} and \textcolor{treeshap}{TreeSHAP} on a trained XGB.}
  \label{fig:EG_tree_all}
\end{figure}

\begin{figure}[H]
  \centering
  \includegraphics[width=1.0\textwidth]{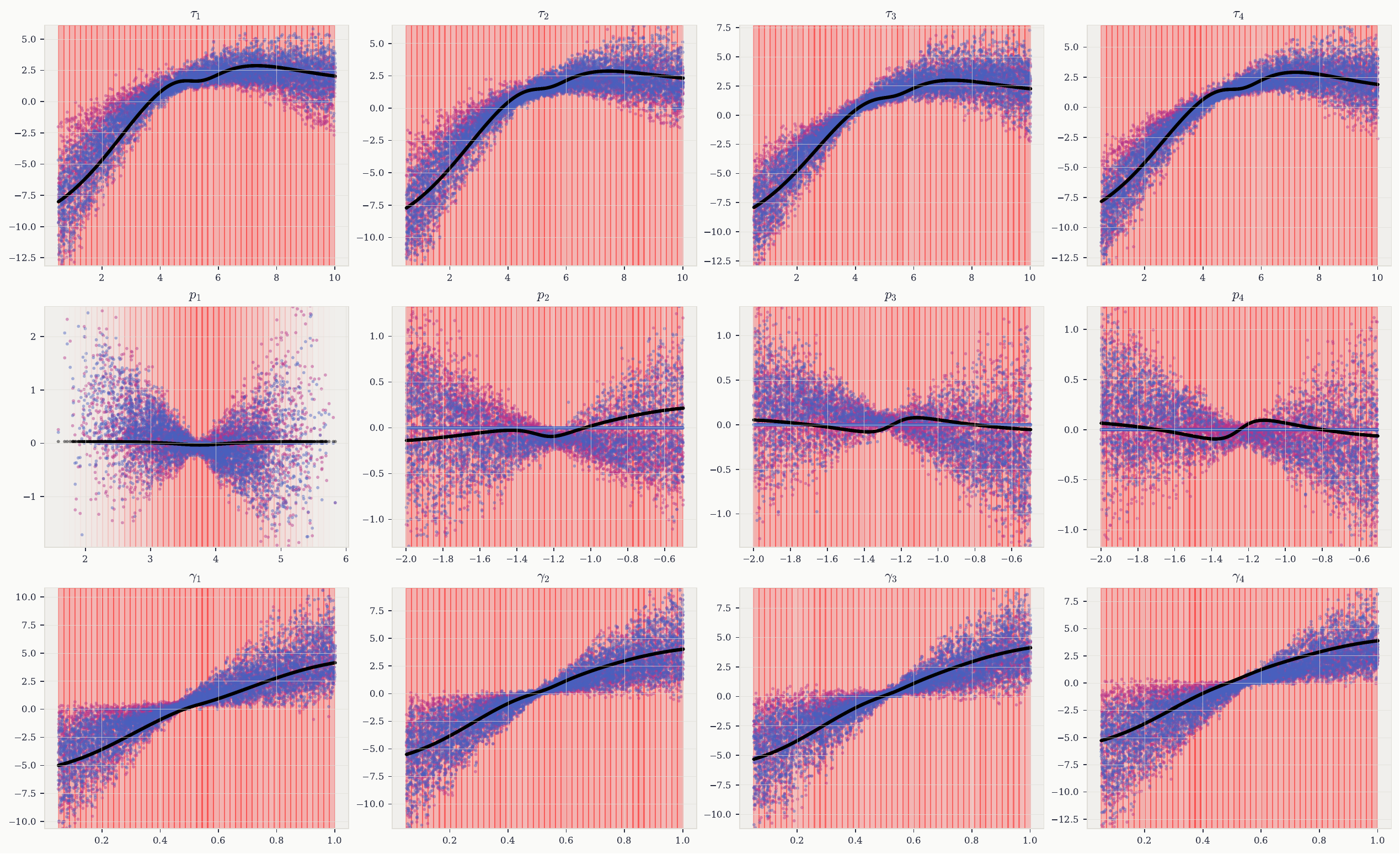}
  \caption{Estimated main effects on \emph{Electrical Grid}: our method (black) vs \textcolor{kernelshap}{KernelSHAP} and \textcolor{deepshap}{DeepSHAP} on a trained MLP.}
  \label{fig:EG_mlp_all}
\end{figure}

\begin{figure}[H]
  \centering
  \includegraphics[width=1.0\textwidth]{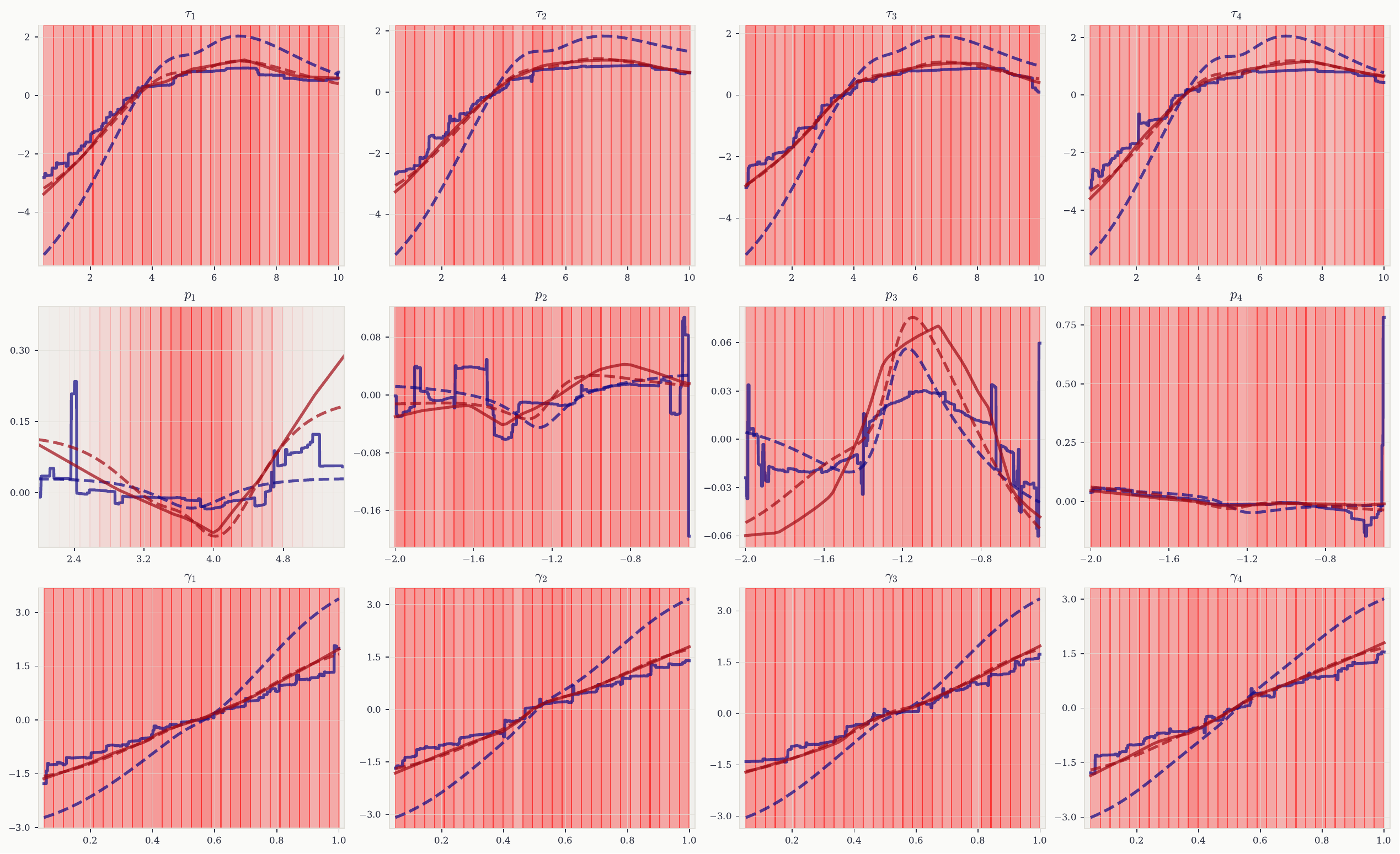}
  \caption{Comparison of native main effects from an EBM
           with those recovered by our method on
           \emph{Electrical Grid}.
           \textcolor{ebmcolor}{EBM} (solid) vs
           \textcolor{ebmcolor}{our method} (dashed);
           \textcolor{namcolor}{NAM} (solid) vs
           \textcolor{namcolor}{our method} (dashed).}
  \label{fig:EG_ebm_nam}
\end{figure}

\subsection{Diabetes Dataset}

\begin{figure}[H]
  \centering
  \includegraphics[width=1.0\textwidth]{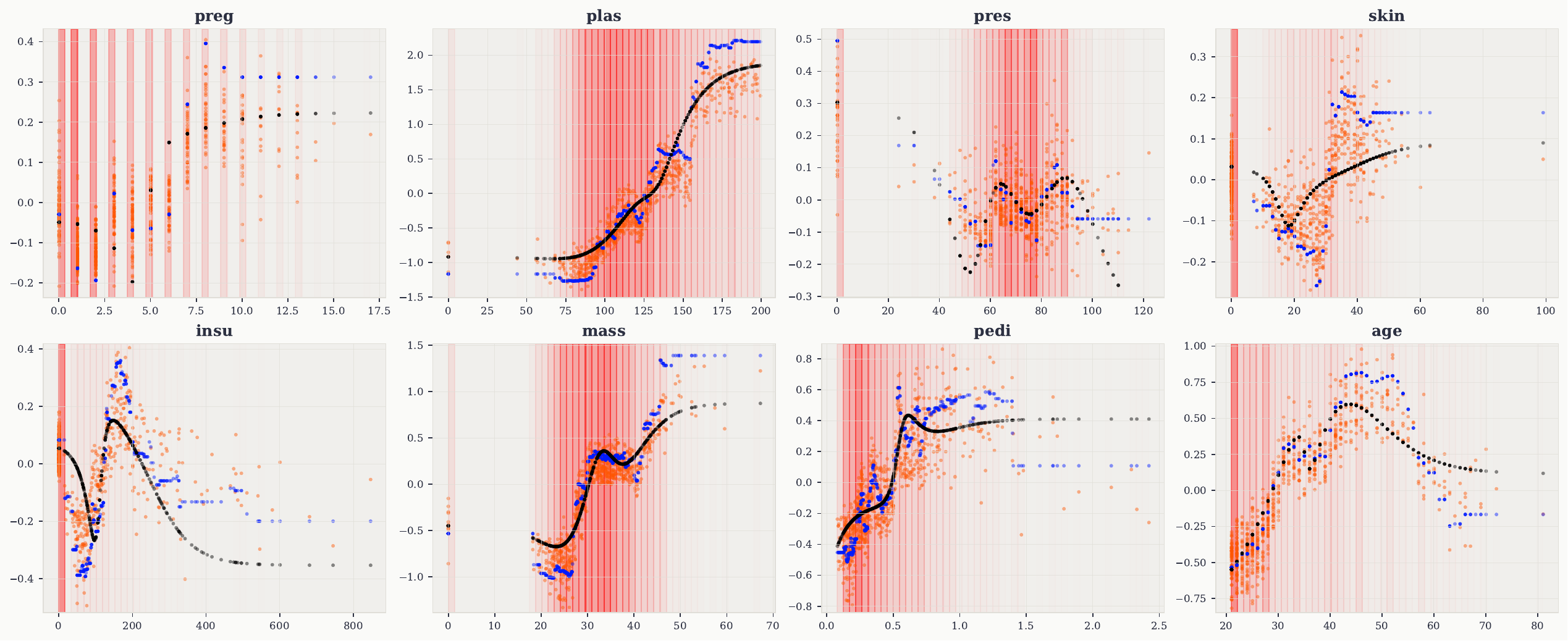}
  \caption{Estimated main effects on \emph{Diabetes}: our method (black) vs \textcolor{treehfd}{TreeHFD (main effects)} and \textcolor{treeshap}{TreeSHAP} on a trained XGB.}
  \label{fig:PI_tree_all}
\end{figure}

\begin{figure}[H]
  \centering
  \includegraphics[width=1.0\textwidth]{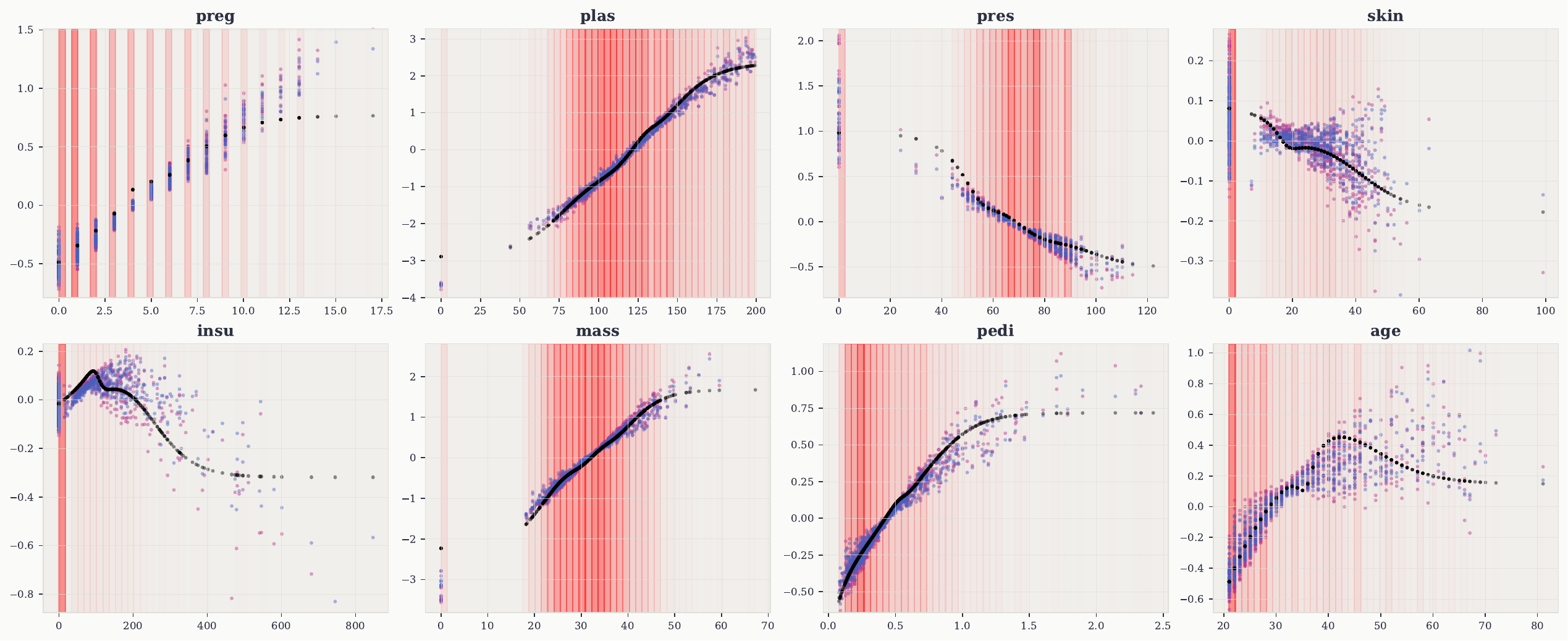}
  \caption{Estimated main effects on \emph{Diabetes}: our method (black) vs \textcolor{kernelshap}{KernelSHAP} and \textcolor{deepshap}{DeepSHAP} on a trained MLP.}
  \label{fig:PI_mlp_all}
\end{figure}

\newpage
\section{Limitations}\label{appendix:limitations}

In this section, we detail all the limitations related to our work. These limitations also open potential new problems and interesting future work.

\paragraph{Curse of dimensionality.} Since the functional ANOVA expands a function $\nu( \mathbf X )$ in $2^p$ components, it intrinsically suffers from the well-known \emph{curse of dimensionality}. This said, this limitation is common to a lot of post-hoc explanation methods and interpretable models. Furthermore, this limitation is theoretical but in practice, one may rely on the sparsity of the decomposition. In practice, we restrict to low interaction order and truncate the polynomials degrees but we could also introduce a more general regularization problem.

\paragraph{Regularization.} The theoretical penalized optimization problem is given by:
\begin{equation}
    \operatornamewithlimits{argmin}_{ \bm \beta } \left\{ \left\| \nu( \mathbf X ) - \sum\limits_{S \subseteq [p]} \sum\limits_{ \bm m_S \in \mathbb N_+^{ \vert S \vert } } \beta_S^{( \bm m_S )} \cdot \xi_S^{( \bm m_S )} (\mathbf X) \right\|_{L^2} + \Omega( \bm \beta ) \right\} ,
\end{equation}
where $\Omega( \cdot )$ is a penalization term to define. A natural extension of our work, first in the bounded setting then in the general case, would be finding an optimal $\Omega( \cdot )$ and then solving this problem without relying on truncations. However, we acknowledge that it is a non trivial extension of our work and that truncating on interaction order and polynomial degrees is natural and widely used in practice.

\paragraph{Support assumption.} We acknowledge that all theoretical results and experiments are presented on $[-1,1]^p$ for simplicity; this is without loss of generality, as the construction extends to any compact domain $[a_1, b_1] \times \cdots \times [a_p, b_p]$ by an affine rescaling of the Legendre polynomials. At the end, we can assume that $\mathbf X$ takes values in a hyperrectangle and has a probability density function $f$ with respect to the Lebesgue measure $\lambda$ and such that $0 < c_1 \leq f \leq c_2 < \infty$. This assumption is widely used in the literature and in practice we can assume that data are bounded (or one can apply a transformation to bound them) but we acknowledge that it is at least a theoretical limitation. Extending the framework beyond bounded continuous distributions is a natural but non-trivial challenge: it would require replacing the Legendre basis with an appropriate orthogonal family and the convergence properties of the resulting series would need to be carefully established. Handling mixed continuous-categorical inputs raises additional structural questions, as the decomposition would need to combine polynomial expansions with discrete summation over categorical levels while still being a basis.

\paragraph{Our proposed pipeline.} The proposed two-stage estimation procedure is deliberately elementary and, as demonstrated, performs reliably across our benchmark experiments. Nevertheless, several aspects of the pipeline leave room for further refinement. First, our reference implementation is written in pure Python and executed on CPU, prioritizing clarity over performance; a GPU-accelerated implementation, together with vectorization and the use of compiled backends, would likely yield substantial speed-ups and enable scaling to higher dimensions. Second, the first stage relies on density estimation: more expressive alternatives—such as kernel density estimators with adaptive bandwidth selection, Gaussian mixture models, or normalizing flows—could better capture the signal and improve the hierarchical orthogonality. Finally, the second stage reduces to solving a linear system, for which tailored iterative solvers, preconditioning strategies, or low-rank approximations could substantially reduce the computational cost; the optimal choice, however, is tied to the regularization strategy, which is itself a limitation and a natural direction for future work.

%
\newpage

\end{document}